\newcommand{\inlineitem}[1][]{%
\ifnum\enit@type=\tw@
    {\descriptionlabel{#1}}
  \hspace{\labelsep}%
\else
  \ifnum\enit@type=\z@
       \refstepcounter{\@listctr}\fi
    \quad\@itemlabel\hspace{\labelsep}%
\fi}
\newcommand{\xmark}{\ding{55}}
\title{\LARGE Rethinking Model-based, Policy-based, and Value-based Reinforcement Learning via the Lens of Representation Complexity}
\author{Guhao Feng\thanks{Alphabetical order.} \thanks{School of EECS, Peking University. Email: \texttt{fenguhao@stu.pku.edu.cn}}  \qquad 
 \qquad  Han Zhong$^*$\thanks{Center for Data Science, Peking University. Email: \texttt{hanzhong@stu.pku.edu.cn}}}
\date{}
\begin{document}
\maketitle

\begin{abstract}
     Reinforcement Learning (RL) encompasses diverse paradigms, including model-based RL, policy-based RL, and value-based RL, each tailored to approximate the model, optimal policy, and optimal value function, respectively. This work investigates the potential hierarchy of representation complexity --- the complexity of functions to be represented --- among these RL paradigms. We first demonstrate that, for a broad class of Markov decision processes (MDPs), the model can be represented by constant-depth circuits with polynomial size or Multi-Layer Perceptrons (MLPs) with constant layers and polynomial hidden dimension. However, the representation of the optimal policy and optimal value proves to be $\mathsf{NP}$-complete and unattainable by constant-layer MLPs with polynomial size. This demonstrates a significant representation complexity gap between model-based RL and model-free RL, which includes policy-based RL and value-based RL. To further explore the representation complexity hierarchy between policy-based RL and value-based RL, we introduce another general class of MDPs where both the model and optimal policy can be represented by constant-depth circuits with polynomial size or constant-layer MLPs with polynomial size. In contrast, representing the optimal value is $\mathsf{P}$-complete and intractable via a constant-layer MLP with polynomial hidden dimension. This accentuates the intricate representation complexity associated with value-based RL compared to policy-based RL. In summary, we unveil a potential representation complexity hierarchy within RL~--- representing the model emerges as the easiest task, followed by the optimal policy, while representing the optimal value function presents the most intricate challenge.
\end{abstract}

\section{Introduction}

The past few years have witnessed the tremendous success of Reinforcement Learning (RL) \citep{sutton2018reinforcement} in solving intricate real-world decision-making problems, such as Go \citep{silver2016mastering} and robotics \citep{kober2013reinforcement}. These successes can be largely attributed to powerful function approximators, particularly Neural Networks (NN) \citep{lecun2015deep}, and the evolution of modern RL algorithms. These algorithms can be categorized into \emph{model-based RL}, \emph{policy-based RL}, and \emph{value-based RL} based on their respective objectives of approximating the underlying model, optimal policy, or optimal value function.

Despite the extensive theoretical analysis of RL algorithms in terms of statistical error \citep[e.g.,][]{azar2017minimax,jiang@2017,jin2020provably,jin2021bellman,du2021bilinear,foster2021statistical,zhong2022gec,xu2023bayesian} and optimization error \citep[e.g.,][]{agarwal2021theory,xiao2022convergence,cen2022fast,lan2023policy} lenses, a pivotal perspective often left in the shadows is \emph{approximation error}. Specifically, the existing literature predominantly relies on the (approximate) realizability assumption, assuming that the given function class can sufficiently capture the underlying model, optimal value function, or optimal policy. However, limited works examine the \emph{representation complexity} in different RL paradigms --- the complexity of the function class needed to represent the underlying model, optimal policy, or optimal value function. In particular, the following problem remains elusive: 
\begin{center}
    \emph{Is there a representation complexity hierarchy among different RL paradigms, \\
    including model-based RL, policy-based RL, and value-based RL?}
\end{center}

To our best knowledge, the theoretical exploration of this question is limited, with only two exceptions \citep{dong2020expressivity,zhu2023representation}. \citet{dong2020expressivity} employs piecewise linear functions to represent both the model and value functions, utilizing the number of linear pieces as a metric for representation complexity. They construct a class of Markov Decision Processes (MDPs) where the underlying model can be represented by a constant piecewise linear function, while the optimal value function necessitates an exponential number of linear pieces for representation. This disparity underscores that the model's representation complexity is comparatively less than that of value functions. Recently, \citet{zhu2023representation} reinforced this insight through a more rigorous circuit complexity perspective. They introduce a class of MDPs wherein the model can be represented by circuits with polynomial size and constant depth, while the optimal value function cannot. However, the separation between model-based RL and value-based RL demonstrated in \citet{zhu2023representation} may not be deemed significant (cf. Remark~\ref{remark:mlp:tc0}). More importantly, \citet{dong2020expressivity,zhu2023representation} do not consider policy-based RL and do not connect the representation complexity in RL with the expressive power of neural networks such as Multi-Layer Perceptron (MLP) and Transformers \citep{vaswani2017attention}, thereby providing limited insights for modern deep RL.

\subsection{Our Contributions}
To address the limitations of previous works and provide a comprehensive understanding of representation complexity in RL, our paper endeavors to explore representation complexity within the realm of
\$
\spadesuit \, \text{ model-based RL}  \qquad \spadesuit \, \text{ policy-based RL} \qquad \spadesuit \, \text{ value-based RL}.
\$

\noindent To quantify representation complexity in each category of RL, we employ a set of metrics, including 
\$
\spadesuit \text{ computational complexity (time complexity and circuit complexity)}  \quad \spadesuit \text{ expressiveness of MLP}.
\$

We outline our results below, further summarized in Table~\ref{table:summary}.

 \begin{table}[t]
        \centering\resizebox{\columnwidth}{!}{
        \begin{tabular}{|cl|l|l|}
        \hline
        \multicolumn{2}{|l|}{}                     & \makecell[c]{Computational Complexity\\(time complexity and circuit complexity)} & \makecell[c]{Expressiveness of Log-precision MLP\\(constant layers and polynomial hidden dimension)}   \\ \hline
        \multicolumn{1}{|c|}{\multirow{3}{*}{\makecell[c]{3-SAT MDP\\$\mathsf{NP}$ MDP} }} 
        & \multicolumn{1}{c|}{\cellcolor[RGB]{189,208,246} Model}   & \multicolumn{1}{c|}{\cellcolor[RGB]{189,208,246} $\mathsf{AC}^0$}  &  \multicolumn{1}{c|}{\cellcolor[RGB]{189,208,246} $\checkmark$} \\ \cline{2-4} 
        \multicolumn{1}{|c|}{}                  & \multicolumn{1}{c|}{\cellcolor[RGB]{240,192,192} Policy} & \multicolumn{1}{c|}{\cellcolor[RGB]{240,192,192} $\mathsf{NP}$-Complete} & \multicolumn{1}{c|}{\cellcolor[RGB]{240,192,192} \xmark}  \\ \cline{2-4} 
        \multicolumn{1}{|c|}{}                  & \multicolumn{1}{c|}{\cellcolor[RGB]{240,192,192} Value} & \multicolumn{1}{c|}{\cellcolor[RGB]{240,192,192} $\mathsf{NP}$-Complete} & \multicolumn{1}{c|}{\cellcolor[RGB]{240,192,192} \xmark}  \\ \hline
        \multicolumn{1}{|c|}{\multirow{3}{*}{\makecell[c]{CVP MDP\\$\mathsf{P}$ MDP} }} & \multicolumn{1}{c|}{\cellcolor[RGB]{189,208,246} Model} & \multicolumn{1}{c|}{\cellcolor[RGB]{189,208,246} $\mathsf{AC}^0$} &\multicolumn{1}{c|}{\cellcolor[RGB]{189,208,246} \checkmark}  \\ \cline{2-4} 
        \multicolumn{1}{|c|}{}                  & \multicolumn{1}{c|}{\cellcolor[RGB]{189,208,246} Policy} & \multicolumn{1}{c|}{\cellcolor[RGB]{189,208,246} $\mathsf{AC}^0$} & \multicolumn{1}{c|}{\cellcolor[RGB]{189,208,246} \checkmark}  \\ \cline{2-4} 
        \multicolumn{1}{|c|}{}                  & \multicolumn{1}{c|}{\cellcolor[RGB]{240,192,192} Value} & \multicolumn{1}{c|}{\cellcolor[RGB]{240,192,192} $\mathsf{P}$-Complete} & \multicolumn{1}{c|}{\cellcolor[RGB]{240,192,192} \xmark}  \\ \hline
        \end{tabular}
        }
        \caption{A summary of our main results. \checkmark \, means that the function can be represented by log-precision MLP with constant layers and polynomial hidden dimension, while \xmark \, means that this neural network class cannot represent the function. Blue denotes low representation complexity, and red represents high representation complexity.}
        \label{table:summary}
        \end{table}

\begin{itemize}[leftmargin=*]
\item Our first objective is to elucidate the representation complexity separation between model-based RL and model-free RL, encompassing both policy-based RL and value-based RL paradigms. To achieve this, we introduce two types of MDPs: 3-SAT MDPs (Definition~\ref{def:3satmdp}) and a broader class referred to as $\mathsf{NP}$ MDPs (Definition~\ref{def:np:mdp}). The intuitive construction of 3-SAT MDPs and $\mathsf{NP}$ MDPs involves encoding the 3-SAT problem and \emph{any} $\mathsf{NP}$-complete problem $\cL$ (e.g., SAT problem and knapsack problem) into the architecture of MDPs, respectively. In both cases, the representation of the model, inclusive of the reward function and transition kernel, falls within the complexity class $\mathsf{AC}^0$ (cf. Section~\ref{sec:circuit:compleixty}). In contrast, we demonstrate that the representation of the optimal policy and optimal value function for 3-SAT MDPs and $\mathsf{NP}$ MDPs is $\mathsf{NP}$-complete. Our findings demonstrate a conspicuous disjunction in representation complexity between model-based RL and model-free RL. Significantly, our results not only address an open question raised by \citet{zhu2023representation} but also convey a more strong message regarding the separation between model-based RL and model-free RL. See Remark~\ref{reamrk:open:problem} for details.
\item Having unveiled the representation complexity gap between model-based RL and model-free RL, our objective is to showcase a distinct separation within the realm of model-free RL---specifically, between policy-based RL and value-based RL. To this end, we introduce two classes of MDPs: CVP MDPs (Definition~\ref{def:cvp:mdp}) and a broader category denoted as $\mathsf{P}$ MDPs (Definition~\ref{def:p:mdp}). CVP MDPs and $\mathsf{P}$ MDPs are tailored to encode the circuit value problem (CVP) and \emph{any} $\mathsf{P}$-complete problem into the construction of MDPs. In both instances, the representation complexity of the underlying model and the optimal policy is confined to the complexity class $\mathsf{AC}^0$. In contrast, the representation complexity for the optimal value function is characterized as $\mathsf{P}$-complete, reflecting the inherent computational challenges associated with computing optimal values within the context of both CVP MDPs and $\mathsf{P}$ MDPs. Hence, we illuminate that value-based RL exhibits a more intricate representation complexity compared to policy-based RL (and model-based RL). This underscores the efficiency in representing policies (and models) while emphasizing the inherent representation complexity involved in determining optimal value functions.
\item To provide more practical insights, we establish a connection between our previous findings and the realm of deep RL. Specifically, for 3-SAT MDPs and $\mathsf{NP}$ MDPs, we demonstrate the effective representation of the model through a constant-layer MLP with polynomial hidden dimension, while the optimal policy and optimal value exhibit constraints in such representation. Furthermore, for the CVP MDPs and $\mathsf{P}$ MDPs, we illustrate that both the underlying model and optimal policy can be represented by MLPs with constant layers and polynomial hidden dimension. However, the optimal value, in contrast, faces limitations in its representation using MLPs with constant layers and polynomial hidden dimension. These results corroborate the messages conveyed through the perspective of computational complexity, contributing a novel perspective that bridges the representation complexity in RL with the expressive power of MLP.
\end{itemize}

In summary, our work contributes to a comprehensive understanding of model-based RL, policy-based RL, and value-based RL through the lens of representation complexity. Our results unveil a potential hierarchy in representation complexity among these three categories of RL paradigms --- where the underlying model is the most straightforward to represent, followed by the optimal policy, and the optimal value function emerges as the most intricate to represent. This insight offers valuable guidance on determining appropriate targets for approximation, enhancing understanding of the inherent challenges in representing key elements across different RL paradigms. Moreover, our representation complexity theory is closely tied to the sample efficiency gap observed among various RL paradigms. Given that the sample complexity of RL approaches often depends on the realizable function class in use \citep{jiang@2017,sun2019model,du2021bilinear,jin2021bellman,foster2021statistical,zhong2022gec}, our results suggest that representation complexity may play a significant role in determining the diverse sample efficiency achieved by different RL algorithms. This aligns with the observed phenomenon that model-based RL typically exhibits superior sample efficiency compared to other paradigms \citep{jin2018q,sun2019model,tu2019gap,janner2019trust,yu2020mopo,zhang2023settling}. Consequently, our work underscores the importance of considering representation complexity in the design of sample-efficient RL algorithms. See Appendix~\ref{appendix:statistical:complexity} for more elaborations.

\subsection{Related Works}

\paragraph{Representation Complexity in RL.} In the pursuit of achieving efficient learning in RL, most existing works \citep[e.g.,][]{jiang@2017,sun2019model,du2019good,jin2021bellman,du2021bilinear,xie2021bellman,uehara2021pessimistic,foster2021statistical,zhong2022gec,jin2022policy} adopt the (approximate) realizability assumption. This assumption allows the learner to have access to a function class that (approximately) captures the underlying model, optimal policy, or optimal value function, contingent upon the specific algorithm type in use. However, justifying the complexity of such a function class, with the capacity to represent the underlying model, optimal policy, or optimal value function, has remained largely unaddressed in these works. To the best of our knowledge, two exceptions are the works of \citet{dong2020expressivity} and \citet{zhu2023representation}, which consider the representation complexity in RL. As mentioned earlier, by using the number of linear pieces of piecewise linear functions and circuit complexity as metrics, these two works reveal that the representation complexity of the optimal value function surpasses that of the underlying model. 
Compared with these two works, our work also demonstrates the separation between model-based RL and value-based from multiple angles, including circuit complexity, time complexity, and the expressive power of MLP, where the last perspective seems completely new in the RL theory literature. 
Moreover, our result demonstrates a more significant separation between model-based RL and value-based RL. In addition, we also study the representation complexity of policy-based RL, providing a potential hierarchy among model-based RL, policy-based RL, and value-based RL from the above perspectives. 

\paragraph{Classic Computational Complexity Results.} There are many classical computational complexity results of RL \citep{papadimitriou1987complexity,chow1989complexity,littman1998computational,blondel2000survey}. These studies characterize the computational complexity of the process of solving the decision problems (finding the optimal decision) in RL. However, our work differs by examining the representation complexity hierarchy among different RL paradigms, using the computational complexity and expressiveness of MLPs as the complexity measure. Consequently, our findings do not contradict previous classical results and cannot be directly compared to them.

\paragraph{Model-based RL, Policy-based RL, and Value-based RL.} In the domain of RL, there are distinct paradigms that guide the learning process: model-based RL, policy-based RL, and value-based RL, each with its unique approach. In \textbf{model-based RL}, the primary objective of the learner is to estimate the underlying model of the environment and subsequently enhance the policy based on this estimated model. Most work in tabular RL \citep[e.g.,][]{jaksch2010near,azar2017minimax,zanette2019tighter,zhang2021isreinforcement,zhang2023settling} fall within this category --- they estimate the reward model and transition kernel using the empirical means and update the policy by performing the value iteration on the estimated model. Additionally, some works extend this approach to RL with linear function approximation \citep{ayoub2020model,zhou2021nearly} and general function approximation \citep{sun2019model,foster2021statistical,zhong2022gec,xu2023bayesian}. \textbf{Policy-based RL}, in contrast, uses direct policy updates to improve the agent's performance. Typical algorithms such as policy gradient \citep{sutton1999policy}, natural policy gradient \citep{kakade2001natural}, proximal policy optimization \citep{schulman2017proximal} fall into this category. A long line of works proposes policy-based algorithms with provable convergence guarantees and sample efficiency. See e.g., \citet{liu2019neural,agarwal2020pc,agarwal2021theory,cai2020provably,shani2020optimistic,zhong2021optimistic,cen2022fast,xiao2022convergence,wu2022nearly,lan2023policy,zhong2023theoretical,liu2023optimistic,sherman2023rate} and references therein. In \textbf{value-based RL}, the focus shifts to the approximation of the value function, and policy updates are driven by the estimated value function. A plethora of provable value-based algorithms exists, spanning tabular RL \citep{jin2018q}, linear RL \citep{yang2019sample,jin2020provably}, and beyond \citep{jiang@2017,du2021bilinear,jin2021bellman,zhong2022gec,chen2022general,liu2023one}. These works mainly explore efficient RL through the lens of sample complexity, with less consideration for representation complexity, which is the focus of our work.

\subsection{Notations}
We use $\NN = \{0, 1, 2, \cdots\}$ and $\NN_+ = \{1, 2, \cdots\}$ to denote the set of all natural numbers and positive integers, respectively.
For any $n \in \NN_+$, we denote $[n] = \{1, 2, \cdots, n\}$, $\mathbf{0}_n = (\underbrace{0, 0, \cdots, 0}_{n \text{ times}})^\top$, and $\mathbf{1}_n = (\underbrace{1, 1, \cdots, 1}_{n \text{ times}})^\top$. We denote the distribution over a set $\cX$ by $\Delta(\cX)$. Let $\delta_x$ denote the Dirac measure, that is, \$
\delta_{x}(\cX)=\left\{\begin{array}{ll}1, & x \in \cX \\ 0, & x \notin \cX \end{array}\right. .
\$

\section{Preliminaries}

\subsection{Markov Decision Process}

We consider the finite-horizon Markov decision process (MDP), which is defined by a tuple $\cM = (\cS, \cA, H, \cP, r)$. Here $\cS$ is the state space, $\cA$ is the action space, $H$ is the length of each episode, $\cP: \cS \times \cA \mapsto \Delta(\cS)$ is the transition kernel, and $r: \cS \times \cA \mapsto [0, 1]$ is the reward function. Here we assume the reward function is deterministic and bounded in $[0, 1]$ following the convention of RL theory literature. Moreover, when the transition kernel is deterministic, say $\cP(\cdot \mid s, a) = \delta_{s'}$ for some $s' \in \cS$. we denote $\cP(s, a) = s'$.

A policy $\pi = \{\pi_h: \cS \mapsto \Delta(\cA)\}_{h = 1}^H$ consists of $H$ mappings from the state space to the distribution over action space. For the deterministic policy $\pi_h$ satisfying $\pi_h(\cdot \mid s) = \delta_{a}$ for some $a \in \cA$, we denote $\pi_h(s) = a$. Given a policy $\pi$, for any $(s, a) \in \cS \times \cA$, we define the state value function and state-action value function ($Q$-function) as
\$
V_h^\pi(s) = \EE_{\pi} \bigg[ \sum_{h' = h}^{H} r_{h'}(s_{h'}, a_{h'}) \,\bigg|\, s_h = s \bigg],  \quad Q_h^\pi(s, a) = \EE_{\pi} \bigg[ \sum_{h' = h}^{H} r_{h'}(s_{h'}, a_{h'}) \,\bigg|\, s_h = s, a_h = a \bigg],
\$
where the expectation $\EE_{\pi}[\cdot]$ is taken with respect to the randomness incurred by the policy $\pi$ and transition kernels. There exists an optimal policy $\pi^*$ achieves the highest value at all timesteps and states, i.e., $V_h^{\pi^*}(s) = \sup_{\pi} V_h^{\pi}(s)$ for any $h \in [H]$ and $s \in \cS$. For notation simplicity, we use the shorthands $V_h^* = V_h^{\pi^*}$ and $Q_h^* = Q_h^{\pi^*}$ for any $h \in [H]$. 

\subsection{Function Approximation in Model-based, Policy-based, and Value-based RL}

In modern reinforcement learning, we need to employ function approximation to solve complex decision-making problems. Roughly speaking, RL algorithms can be categorized into three types -- model-based RL, policy-based RL, and value-based RL, depending on whether the algorithm aims to approximate the model, policy, or value. In general, policy-based RL and value-based RL can both be regarded as model-free RL, which represents a class of RL methods that do not require the estimation of a model. We assume the learner is given a function class $\mathcal{F}$, and we will specify the form of $\mathcal{F}$ in model-based RL, policy-based RL, and value-based RL, respectively.

\begin{itemize}[leftmargin = *]
    \item \textbf{Model-based RL:} the learner aims to approximate the model, including the reward function and the transition kernels. Specifically, $\cF = \{ (r: \cS \times \cA \mapsto [0, 1], \cP: \cS \times \cA \mapsto \Delta(\cS) )\}$. We also want to remark that we consider the time-homogeneous setting, where the reward function and transition kernel are independent of the timestep $h$. For the time-inhomogeneous setting, we can choose $\cF = \cF_1  \times \cdots \times \cF_H$ and let $\cF_h$ approximate the reward and transition at the $h$-th step.
    \item \textbf{Policy-based RL:} the learner directly approximates the optimal policy $\pi^*$. The function class $\cF$ takes the form $\cF = \cF_1 \times \cdots \times \cF_h$ with $\cF_h \subset \{\pi_h: \cS \mapsto \Delta(\cA)\}$ for any $h \in [H]$.
    \item \textbf{Value-based RL:} the learner utilizes the function class $\cF = \cF_1 \times \cdots \times \cF_H$ to capture the optimal value function $Q^*$, where $\cF_h \subset \{Q_h : \cS \times \cA \mapsto [0, H]\}$ for any $h \in [H]$.
\end{itemize}

In previous literature \citep[e.g.,][]{jin2021bellman,du2021bilinear,xie2021bellman,uehara2021pessimistic,foster2021statistical,zhong2022gec,jin2022policy}, a standard assumption is the \emph{realizability} assumption -- the ground truth model/optimal policy/optimal value is (approximately) realized in the given function class. Typically, a higher complexity of the function class leads to a larger sample complexity. Instead of focusing on sample complexity, as previous works have done, we are investigating how complex the function class should be by characterizing the \emph{representation complexity} of the ground truth model, optimal policy, and optimal value function. 

\subsection{Computational Complexity} \label{sec:circuit:compleixty}

To rigorously describe the representation complexity, we briefly introduce some background knowledge of classical computational complexity theory, and readers are referred to \citet{arora2009computational} for a more comprehensive introduction. We first define three classes of computational complexity classes $\mathsf{P}$, $\mathsf{NP}$, and $\mathsf{L}$.

\begin{itemize}[leftmargin=*]
    \item $\mathsf{P}$ is the class of languages\footnote{Following the convention of computational complexity \citep{arora2009computational}, we may use the term ``language'' and ``decision problem'' interchangeably.} that can be recognized by a \emph{deterministic} Turing Machine in polynomial time.
    \item $\mathsf{NP}$ is the class of languages that can be recognized by a \emph{nondeterministic} Turing Machine in polynomial time.
    \item $\mathsf{L}$ is the class containing languages that can be recognized by a deterministic Turing machine using a logarithmic amount of space.
\end{itemize}

To facilitate the readers, we also provide the definitions of deterministic Turing Machine and nondeterministic Turing Machine in Definition~\ref{def:turning:machine}. To quantify the representation complexity, we also adopt the circuit complexity, a fundamental concept in theoretical computer science, to characterize it. Circuit complexity focuses on representing functions as circuits and measuring the resources, such as the number of gates, required to compute these functions. We start with defining Boolean circuits.

\begin{definition}[Boolean Circuits] \label{def:boolean:circuit}
    For any $m, n \in \NN_+$, a Boolean circuit $\bC$ with $n$ inputs and $m$ outputs is a directed acyclic graph (DAG) containing $n$ nodes with no incoming edges and $m$ edges with no outgoing edges. All non-input nodes are called gates and are labeled with one of $\land$ (logical operation AND), $\lor$ (logical operation OR), or $\lnot$ (logical operation NOT). The value of each gate depends on its direct predecessors. For each node, its fan-in number is the number of incoming edges, and its fan-out number is its outcoming edges. The size of $\bC$ is the number of nodes in it, and the depth of $\bC$ is the maximal length of a path from an input node to the output node. Without loss of generality, we assume the output node of a circuit is the final node of the circuit.
\end{definition}

A specific Boolean circuit can be used to simulate a function (or a computational problem) with a fixed number of input bits. When the input length varies, a sequence of Boolean circuits must be constructed, each tailored to handle a specific input size. In this context, circuit complexity investigates how the circuit size and depth scale with the input size of a given function. We provide the definitions of circuit complexity classes $\mathsf{AC}^0$ and $\mathsf{TC}^0$.
\begin{itemize}[leftmargin=*]
    \item $\mathsf{AC}^0$ is the class of circuits with constant depth, unbounded fan-in number, polynomial AND and OR gates.
    \item $\mathsf{TC}^0$ extends $\mathsf{AC}^0$ by introducing an additional unbounded-fan-in majority gate $\mathrm{MAJ}$, which evaluates to false when half or more arguments are false and true otherwise. 
\end{itemize}
The relationship between the aforementioned five complexity classes is
\$
\mathsf{AC}^0 \subsetneq \mathsf{TC}^0 \subset \mathsf{L} \subset \mathsf{P} \subset \mathsf{NP}. 
\$
The question of whether the relationship $\mathsf{TC}^0 \subset \mathsf{P} \subset \mathsf{NP}$ holds as a strict inclusion remains elusive in theoretical computer science. However, it is widely conjectured that $\mathsf{P} = \mathsf{NP}$ and $\mathsf{TC}^0 = \mathsf{P}$ are unlikely to be true.

\section{The Separation between Model-based RL and Model-free RL} \label{sec:np}

In this section, we focus on the representation complexity gap between model-based RL and model-free RL, which encompasses both policy-based RL and value-based RL. The previous work of \citet{dong2020expressivity} shows that the representation complexity of model-based RL is lower than that of value-based RL, utilizing the number of pieces for a piecewise linear function as the complexity measure. However, this measure lacked rigor and general applicability to a broader range of functions. To address this limitation, \citet{zhu2023representation} recently introduced circuit complexity as a more rigorous complexity measure. They constructed a class of MDPs where the representation complexity of the underlying model and optimal value function falls within the circuit complexity classes $\mathsf{AC}^0$ and $\mathsf{TC}^0$, respectively.\footnote{While they only claim that the optimal value function cannot be represented by $\mathsf{AC}^0$ circuits, their proof and the definition of $\mathsf{TC}^0$ can imply that the optimal value function can indeed be represented by $\mathsf{TC}^0$ circuits.} However, the gap between $\mathsf{AC}^0$ and $\mathsf{TC}^0$ might not be considered sufficiently significant.  A potential limitation arises as deep learning models, such as constant-layer MLPs, can represent functions in $\mathsf{TC}^0$, thereby capable of representing both the underlying model and optimal value function in \citet{zhu2023representation}. More details are deferred to Section~\ref{sec:drl}. Therefore, a more substantial gap is needed to underscore the significance of the representation complexity difference between model-based RL and model-free RL. To this end, we construct a broad class of MDPs and prove that the underlying model can be represented by $\mathsf{AC^0}$ circuits, while the representation complexity of the optimal policy and optimal value function is both $\mathsf{NP}$-complete. Our results highlight a significant separation in representation complexity between model-based RL and model-free RL.

\subsection{3-SAT MDP}

As a warmup example to showcase the separation between model-based RL and model-free RL, we propose the 3-SAT MDPs. As implied by the name, the construction is closely linked to the well-known NP-complete problem 3-satisfiability (3-SAT). The formal definition of 3-SAT is as follows.

\begin{definition}[3-SAT Problem]
    A Boolean formula $\psi$ over variables $u_1,u_2,\cdots,u_n$ is in the 3-conjunctive normal form (3-CNF) if it takes the form of a conjunction of one or more disjunctions, each containing exactly 3 literals. Here, a literal refers to either a variable or the negation of a variable. Formally, the 3-CNF formula $\psi$ has the following form:
    \begin{equation*}
        \psi=\bigwedge_{i\in\cI} (v_{i,1}\vee v_{i,2}\vee v_{i,3}),
    \end{equation*}
    where $\cI$ is the index set and $v_{i,j}=u_k$ or $v_{i,j}=\neg u_k$ for some $k \in [n]$. The 3-SAT problem is defined as follows: given a 3-CNF Boolean formula $\psi$, judge whether $\psi$ is satisfiable.  Here, ``satisfiable" means that there exists an assignment of variables such that the formula $\psi$ evaluates to $1$.
\end{definition}

\begin{figure}[t]
    \centering
    \includegraphics[width=0.9\textwidth]{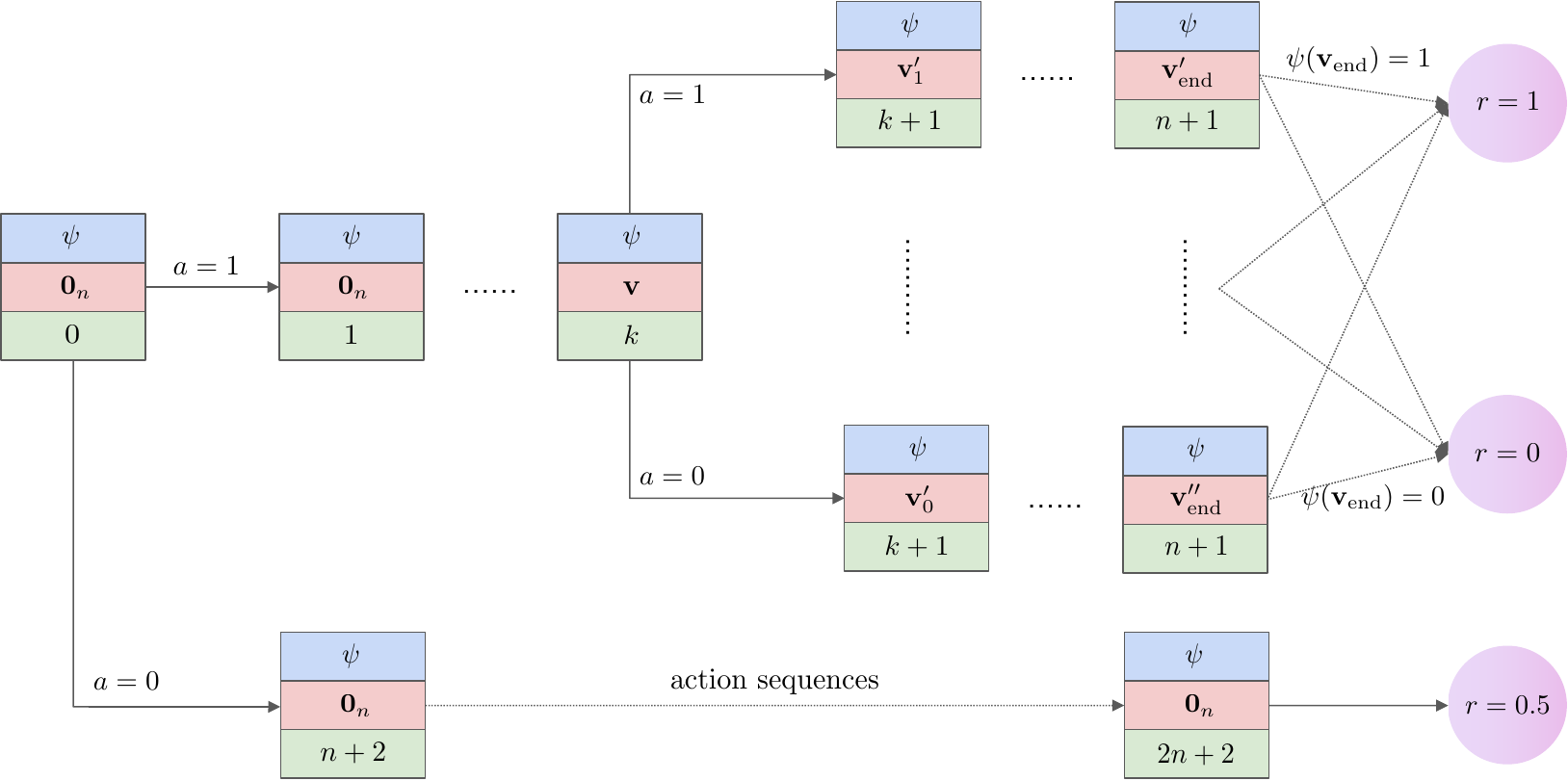}
    \caption{A visualization of 3-SAT MDPs. Here, $\bfv$ is an $n$-dimensional vector, $\bfv_0$ and $\bfv_1$ are vectors obtained by replacing the $k$-th element of $\bfv$ with $0$ and $1$, respectively. Additionally, $\bfv_{\mathrm{end}}$, $\bfv'_{\mathrm{end}}$, and $\bfv''_{\mathrm{end}}$ represent the assignments at the end of the episode.}
    \label{fig:sat:mdp}
\end{figure}

Given the definition of 3-SAT problem, we are ready to present the detailed construction of 3-SAT MDPs.

\begin{definition}[3-SAT MDP] \label{def:3satmdp}
For any $n \in \NN_+$, let $\cV=\{u_1,\neg u_1,\cdots,u_n,\neg u_n\}$ be the set of literals. An $n$-dimensional 3-SAT MDP $(\cS, \cA, H, \cP, r)$ is defined as follows. The state space $\cS$ is defined by $\cS=\cV^{3n}\times\{0,1\}^n\times(\{0\} \cup [2n+2])$, where each state $s$ can be denoted as $s=(\psi,\bfv,k)$. In this representation, $\psi$ is a 3-CNF formula consisting of $n$ clauses and represented by its $3n$ literals, $\bfv\in\{0,1\}^n$ can be viewed as an assignment of the $n$ variables and $k$ is an integer recording the number of actions performed. The action space is $\cA=\{0,1\}$ and the planning horizon is $H=n+2$. Given a state $s=(\psi,\bfv,k)$, for any $a \in \cA$, the reward $r(s, a)$ is defined by:
\begin{equation}
    \label{eq:SAT_reward}
        r(s,a)=\begin{cases}
            1&\text{If $\bfv$ is a satisfiable assignment of $\psi$ and $k=n+1$},\\
            0.5 &\text{If $k=2n+2$},\\
            0&\text{Otherwise}.
        \end{cases} 
\end{equation}
Moreover, the transition kernel is deterministic and takes the following form:
    \begin{equation}
        \label{eq:SAT_tran}
        \cP\big((\psi,\bfv,k),a\big)=\begin{cases}
            (\psi,\bfv,n+2)&\text{If $a=k=0$},\\
            (\psi,\bfv,1)&\text{If $a=1$ and $k=0$},\\
            (\psi,\bfv^\prime,k+1)&\text{If $k\in [n]$}\\
            (\psi,\bfv,k+1)&\text{If $k>n$}.
        \end{cases}
    \end{equation}
    where $\bfv^\prime$ is obtained from $\bfv$ by setting the $k$-th bit as $a$ and leaving other bits unchanged, i.e., $\bfv^\prime[k]=a$ and $\bfv^\prime[k^\prime] = \bfv[k^\prime]$ for $k^\prime \neq k$. The initial state takes form $(\psi,\mathbf{0}_n, 0)$ for any length-$n$ 3-CNF formula $\psi$.
\end{definition}

The visualization of 3-SAT MDPs is given in Figure~\ref{fig:sat:mdp}. We assert that our proposed 3-SAT model is relevant to real-world problems. In the state $s = (\psi, \bfv, k)$, $\psi$ characterizes intrinsic environmental factors that remain unchanged by the agent, while $\bfv$ and $k$ represent elements subject to the agent's influence. Notably, the agent is capable of changing $\mathbf{0}_n$ to any $n$-bits  binary string within the episode. Using autonomous driving as an example, $\psi$ could denote fixed factors like road conditions and weather, while $\bfv$ and $k$ may represent aspects of the car that the agent can control. While states and actions in practical scenarios might be continuous, they are eventually converted to binary strings in computer storage due to bounded precision. Regarding the reward structure, the agent only receives rewards at the end of the episode, reflecting the goal-conditioned RL setting and the sparse reward setting, which capture many real-world problems. Intuitively, the agent earns a reward of $1$ if $\psi$ is satisfiable, and the agent transforms $\mathbf{0}_n$ into a variable assignment that makes $\psi$ equal to $1$ through a sequence of actions. The agent receives a reward of $0.5$ if, at the first step, it determines that $\psi$ is unsatisfiable and chooses to ``give up". Here, we refer to taking action $0$ at the first step as ``give up" since this action at the outset signifies that the agent foregoes the opportunity to achieve the highest reward of $1$. In all other cases, the agent receives a reward of $0$. Using the example of autonomous driving, if the car successfully reaches its (reachable) destination, it obtains the highest reward. If the destination is deemed unreachable and the car chooses to give up at the outset, it receives a medium reward. This decision is considered a better choice than investing significant resources in attempting to reach an unattainable destination, which would result in the lowest reward.

\begin{theorem}[Representation complexity of 3-SAT MDP] \label{thm:sat:mdp}
    Let $\cM_n$ be the $n$-dimensional 3-SAT MDP in Definition~\ref{def:3satmdp}. The transition kernel $\cP$ and the reward function $r$ of $\cM_n$ can be computed by circuits with polynomial size (in $n$) and constant depth, falling within the circuit complexity class $\mathsf{AC}^0$. However, computing the optimal value function $Q_1^*$ and the optimal policy $\pi_1^*$ of $\cM_n$ are both $\mathsf{NP}$-complete under the polynomial time reduction.
\end{theorem}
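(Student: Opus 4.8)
The plan is to establish the two claims separately: (i) the model (reward $r$ and transition $\cP$) lies in $\mathsf{AC}^0$, and (ii) computing $Q_1^*$ and $\pi_1^*$ is $\mathsf{NP}$-complete.

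For the model, I would observe that both $r$ and $\cP$ are given by explicit case distinctions over the state $s = (\psi, \bfv, k)$, and each case is decided by very simple predicates. The condition ``$k = c$'' for a fixed constant $c \in \{0\} \cup [2n+2]$ is just an equality test on $O(\log n)$ bits, computable by a constant-depth, polynomial-size circuit. The only slightly nontrivial predicate is ``$\bfv$ is a satisfiable assignment of $\psi$'' — but here $\bfv$ is a \emph{fixed} full assignment (not an existential quantifier), so evaluating $\psi(\bfv)$ amounts to computing a conjunction over the $n$ clauses of a disjunction of $3$ literals, where each literal's value is read off from $\bfv$ according to the index encoded in $\psi$; this is a depth-$O(1)$ unbounded-fan-in AND/OR circuit after a constant-depth routing step to fetch $\bfv[k]$ given the literal descriptions in $\psi$. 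Writing out $\bfv'$ (flip the $k$-th bit to $a$) is likewise a bit-selection that is $\mathsf{AC}^0$. Copying $\psi$ through unchanged is trivial. Hence $r, \cP \in \mathsf{AC}^0$.

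For the hardness direction, I would first argue membership in $\mathsf{NP}$ (for the associated decision problems, e.g.\ ``is $Q_1^*(s,a) \ge \theta$?'' or ``is $\pi_1^*(s) = a$?''): a nondeterministic machine guesses the optimal trajectory/assignment and verifies the accumulated reward in polynomial time, using that the horizon is $H = n+2$ and the model is $\mathsf{AC}^0$-computable. Then for $\mathsf{NP}$-hardness I would reduce 3-SAT to it. Given a 3-CNF formula $\psi$ with $n$ clauses (after padding to make the clause count match the variable count, or adjusting the construction so $n$ plays both roles), map it to the initial state $s_0 = (\psi, \mathbf{0}_n, 0)$ of the $n$-dimensional 3-SAT MDP. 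The key computation is $V_1^*(s_0)$: by the transition structure, from $s_0$ taking action $1$ the agent spends steps $k=1,\dots,n$ freely writing an arbitrary assignment $\bfv$ into the second coordinate, arriving at a state $(\psi, \bfv, n+1)$ where it collects reward $1$ iff $\bfv$ satisfies $\psi$ (and $0$ thereafter); taking action $0$ at the start sends it down the ``give up'' branch to $k = n+2$ and — wait, I need to check the reward branches carefully: the $0.5$ reward fires at $k = 2n+2$, which the give-up branch reaches, while the never-give-up branch that fails to satisfy reaches $k=n+1$ with no reward. So $V_1^*(s_0) = 1$ if $\psi$ is satisfiable and $V_1^*(s_0) = 0.5$ otherwise; equivalently $Q_1^*(s_0, 1) = 1$ iff satisfiable, and $\pi_1^*(s_0) = 1$ iff satisfiable. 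Thus an oracle for $Q_1^*$ or $\pi_1^*$ decides 3-SAT, and since the reduction (reading off $s_0$ from $\psi$) is clearly polynomial-time, both problems are $\mathsf{NP}$-complete.

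The main obstacle I anticipate is \emph{bookkeeping the reward/horizon arithmetic so that the give-up branch and the satisfy branch yield cleanly separated values} — one must verify that on the give-up branch no spurious reward-$1$ event occurs (it cannot, since reaching $k=n+1$ with a satisfying $\bfv$ requires having gone through the assignment-writing branch) and that on a failing never-give-up trajectory the agent genuinely cannot later collect the $0.5$ reward (the horizon $H = n+2$ cuts off before $k$ reaches $2n+2$ on that branch). A secondary technical point is handling the mismatch between ``$n$ variables'' and ``$n$ clauses'': a generic 3-SAT instance has $m$ clauses and $n$ variables with no relation between them, so the reduction must either pad clauses/variables (adding trivial clauses or dummy variables preserves satisfiability) or the MDP definition must be read as parameterized to accommodate this; I would spell out the padding explicitly. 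The $\mathsf{AC}^0$ claim itself is routine once one notes every predicate involved is an equality test on logarithmically many bits or a direct formula evaluation on a given assignment.
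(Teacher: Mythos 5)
Your proposal is correct and follows essentially the same route as the paper: the model is placed in $\mathsf{AC}^0$ by noting every predicate is either an equality test on $O(\log n)$ bits or a direct evaluation of $\psi$ on the given assignment, membership in $\mathsf{NP}$ is certified by a satisfying assignment/trajectory, and hardness follows by reducing 3-SAT to the initial state $(\psi,\mathbf{0}_n,0)$ after padding clauses --- exactly the paper's reduction, including your (correct) accounting that the give-up branch yields $0.5$ while a failed non-give-up trajectory is cut off at $k=n+1$ by the horizon. The only cosmetic difference is that the paper phrases the value-function decision problem with an explicit threshold $\gamma$ (using $\gamma=3/4$ in the reduction), whereas you compare $Q_1^*(s_0,1)$ to $1$ directly.
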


\begin{proof}[Proof of Theorem~\ref{thm:sat:mdp}]
    See Appendix~\ref{appendix:pf:sat:mdp} for a detailed proof.
\end{proof}

Theorem~\ref{thm:sat:mdp} states that the representation complexity of the underlying model is in $\mathsf{AC}^0$, whereas the representation complexity of optimal value function and optimal policy is $\mathsf{NP}$-complete. This demonstrates the significant separation of the representation complexity of model-based RL and that of model-free RL. To illustrate our theory more, we make the following remarks.

\begin{remark} \label{reamrk:Q1}
    Although Theorem~\ref{thm:sat:mdp} only shows that $Q_1^*$ is hard to represent, our proof also implies that $V_1^*$ is hard to represent. Moreover, we can extend our results to the more general case, say $\{Q_h^*\}_{h \in [\lfloor \frac{n}{2} \rfloor]}$ are $\mathsf{NP}$-complete, by introducing additional irrelevant steps. Notably, one cannot anticipate $Q_h^*$ to be hard to represent for any $h \in [H]$ since $Q_H$ reduces to the one-step reward function $r$. This aligns with our intuition that the multi-step correlation is pivotal in rendering the optimal value functions in ``early steps'' challenge to represent. Moreover, although we only show that $\pi_1^*$ is hard to represent in our proof, the result can be extended to step $h$ where $2 \le h \le H$, as $\pi_1^*$ also serves as an optimal policy at step $h$. Finally, it is essential to emphasize that, since our objective is to show that $Q^* = \{Q_h^*\}_{h=1}^H$ and $\pi^* = \{\pi_h^*\}_{h=1}^H$ have high representation complexity, it suffices to demonstrate that $Q_1^*$ and $\pi_1^*$ are hard to represent.
\end{remark}

\begin{remark} \label{reamrk:open:problem}
    The recent work of \citet{zhu2023representation} raises an open problem regarding the existence of a class of MDPs whose underlying model can be represented by $\mathsf{AC}^k$ circuits while the optimal value function cannot. Here, $\mathsf{AC}^k$ is a complexity class satisfying $\mathsf{AC}^0 \subset \mathsf{AC}^k \subset \mathsf{P} \subset \mathsf{NP}$. Therefore, our results not only address this open problem by revealing a more substantial gap in representation complexity between model-based RL and model-free RL but also surpass the expected resolution conjectured in \citet{zhu2023representation}.
\end{remark}

\begin{remark}[Extension to the Stochastic Setting] \label{remark:stochastic}
     Note that the transition kernel of 3-SAT MDPs in Definition~\ref{def:3satmdp} is deterministic, indicating that the next state is entirely determined by the current state and the chosen action. In fact, our results in the deterministic setting pave the way for an extension to the stochastic case. In Appendix~\ref{appendix:stochastic}, we introduce slight modifications to the 3-SAT construction, proposing the stochastic 3-SAT MDP. For stochastic 3-SAT MDPs, we establish that the representation complexity persists as a significant distinction between model-based RL and model-free RL.
\end{remark}

\begin{remark}[Connection to POMDP]
    For the partially observable Markov decision process (POMDP), the agent can only receive the observation $o \in \cO$, generated by the emission function $\mathbb{O} = \{ \mathbb{O}_h : \cS \mapsto \Delta(\cO)\}_{h\in[H]}$. one can choose the observation as a substring of the state, and the representation complexity of the emission function remains low, similar to the transition kernel. However, the optimal policy and optimal value function may depend on the full history rather than just the current state, leading to a higher representation complexity for these two quantities. Consequently, in the presence of partial observations, the representation complexity gap between model-based RL and model-free RL could be more pronounced.
\end{remark}

\subsection{$\mathsf{NP}$ MDP: A Broader Class of MDPs} \label{sec:np:mdp}

In this section, we extend the results on representation complexity separation between model-based RL and model-free RL for 3-SAT MDPs,  which heavily rely on the inherent problem structure of 3-SAT problems. This extension introduces the concept of $\mathsf{NP}$ MDP—a more inclusive category of MDPs. Specifically, for any $\mathsf{NP}$-complete language $\cL$, we can construct a corresponding $\mathsf{NP}$ MDP that encodes $\cL$ into the structure of MDPs. Importantly, this broader class of MDPs yields the same outcomes as 3-SAT MDPs. In other words, in the context of $\mathsf{NP}$ MDP, the underlying model can be computed by circuits in $\mathsf{AC}^0$, while the computation of both the optimal value function and optimal policy remains $\mathsf{NP}$-complete. The detailed definition of $\mathsf{NP}$ MDP is provided below.

\begin{definition}[$\mathsf{NP}$ MDP] \label{def:np:mdp}
    An $\mathsf{NP}$ MDP is defined concerning a language $\cL\in \mathsf{NP}$. Let $M$ be a nondeterministic Turing Machine recognizing $\cL$ in at most $P(n)$ steps, where $n$ is the length of the input string and $P(n)$ is a polynomial. Let $M$ have valid configurations denoted by $\cC_n$, and let each configuration $c = (s_M, \bft, l) \in\cC_n$ encompass the state of the Turing Machine $s_M$, the contents of the tape $\bft$, and the pointer on the tape $l$, requiring $O(P(n))$ bits for representation.  Then an $n$-dimensional $\mathsf{NP}$ MDP is defined as follows. The state space $\cS$ is $\cC_n\times(\{0\} \cup [2P(n)+2])$, and each $s = (c, k)\in\cS$ consists of a valid configuration $c = (s_M, \bft, l)$ and a index $k \in \{0\} \cup [2P(n)+2]$ recording the number of steps $M$ has executed. The action space is $\cA=\{0,1\}$ and the planning horizon is $H=P(n)+2$. Given state $s= (c,k) = (s_M,\bft,l,k)$ and action $a$, the reward function $r(s, a)$ is defined by:
\begin{equation}
    \label{eq:NP_reward}
        r(s,a)=\begin{cases}
            1&\text{If $s_M=s_{\text{accpet}}$ and $k=P(n)+1$}, \\
            0.5 &\text{If $k=2P(n)+2$}, \\
            0&\text{Otherwise},
        \end{cases}
\end{equation}
where $s_{\mathrm{accept}}$ is the accept state of Turing Machine $M$. Moreover, the transition kernel is deterministic and can be defined as follows:
    \begin{equation}
        \label{eq:NP_tran}
        \cP\big((c,k),a\big)=\begin{cases}
            (c,P(n)+2)&\text{If $a=k=0$},\\
            (c,1)&\text{If $a=1$ and $k=0$},\\
            (c^\prime,k+1)&\text{If $k\in[P(n)]$}\\
            (c,k+1)&\text{If $k>P(n)$}.
        \end{cases}
    \end{equation}
    where $c^\prime$ is the configuration obtained from $c$ by selecting the branch $a$ at the current step and executing the Turing Machine $M$ for one step. Let $\bfx_{\text{input}}$ be an input string of length $n$. We can construct the initial configuration $c_0$ of the Turing Machine $M$ on the input $\bfx_{\text{input}}$ by copying the input string onto the tape, setting the pointer to the initial location, and designating the state of the Turing Machine as the initial state. Then the initial state of $\mathsf{NP}$ MDP is defined as $(c_0,0)$. 
\end{definition}

The definition of $\mathsf{NP}$ MDP in Definition~\ref{def:np:mdp} generalizes that of the 3-SAT MDP in Definition~\ref{def:3satmdp} by incorporating the nondeterministic Turing Machine, a fundamental computational mode. The configuration $c$ and the accept state $s_{\mathrm{accpet}}$ in $\mathsf{NP}$ MDPs mirror the formula-assignment pair $(\psi, \bfv)$ and the scenario that $\psi(\bfv) = 1$ in 3-SAT MDP, respectively. To the best of our knowledge, $\mathsf{NP}$ MDP is the first class of MDPs defined in the context of (non-deterministic) Turing Machine and can encode \emph{any} $\mathsf{NP}$-complete problem in an MDP structure. This represents a significant advancement compared to the Majority MDP in \citet{zhu2023representation} and the 3-SAT MDP in Definition~\ref{def:3satmdp}, both of which rely on specific computational problems such as the Majority function and the 3-SAT problem. The following theorem provides the theoretical guarantee for the $\mathsf{NP}$-complete MDP.

\begin{theorem}[Representation complexity of $\mathsf{NP}$ MDP] \label{thm:np:mdp}
    Consider any $\mathsf{NP}$-complete language $\cL$ alongside its corresponding $n$-dimensional $\mathsf{NP}$ MDP $\cM_{n}$, as defined in Definition~\ref{def:np:mdp}. The transition kernel $\cP$ and the reward function $r$ of $\cM_n$ can be computed by circuits with polynomial size (in $n$) and constant depth, belonging to the complexity class $\mathsf{AC}^0$. In contrast, the problems of computing the optimal value function $Q_1^*$ and the optimal policy $\pi_1^*$ of $\cM_n$ are both $\mathsf{NP}$-complete under the polynomial time reduction.
\end{theorem}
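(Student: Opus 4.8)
The plan is to follow the same template as the proof of Theorem~\ref{thm:sat:mdp}, with the nondeterministic Turing Machine $M$ recognizing $\cL$ taking over the role played there by the $3$-CNF formula together with its assignment. There are two things to establish: (i) the model $(\cP,r)$ lies in $\mathsf{AC}^0$, and (ii) computing both $Q_1^*$ and $\pi_1^*$ is $\mathsf{NP}$-complete.

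For (i), I would first note that the reward $r$ in \eqref{eq:NP_reward} is a bounded case split over the predicates ``$s_M=s_{\mathrm{accept}}$'', ``$k=P(n)+1$'' and ``$k=2P(n)+2$''; since $M$ is fixed, $s_M$ occupies $O(1)$ bits and the index $k$ occupies $O(\log n)$ bits, so each of these predicates, hence $r$, is computed by a constant-depth polynomial-size circuit. The transition kernel \eqref{eq:NP_tran} is likewise a constant-size multiplexer over the $\mathsf{AC}^0$ predicates ``$a=k=0$'', ``$a=1,k=0$'', ``$k\in[P(n)]$'', ``$k>P(n)$'', whose data inputs are $(c,P(n)+2)$, $(c,1)$, $(c',k+1)$ and $(c,k+1)$. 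Copying $c$ and incrementing the $O(\log n)$-bit index are trivially in $\mathsf{AC}^0$, so the only real work is showing that $c'=$ ``one step of $M$ on $c$ with branch $a$'' is $\mathsf{AC}^0$-computable. This is the standard locality-of-one-TM-step argument: reading the scanned symbol $\bft[l]$ is a polynomial-size constant-depth multiplexer (for each output bit, take the OR over tape cells $i$ of the conjunction of the relevant bit of $\bft[i]$ with the equality test $i=l$, using that equality of $O(\log n)$-bit numbers is in $\mathsf{AC}^0$); applying the fixed binary-branching transition function $\delta$ to $(s_M,\bft[l],a)$ is a constant-size circuit; writing the new symbol back produces $\bft'[i]$ equal to $\sigma'$ when $i=l$ and to $\bft[i]$ otherwise, cell by cell; and the head move is an increment/decrement of the $O(\log n)$-bit pointer selected by $\delta$'s output. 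Composing this constant number of $\mathsf{AC}^0$ computations remains in $\mathsf{AC}^0$, which yields the bound on $(\cP,r)$.

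For (ii), the key structural step is to evaluate $Q_1^*$ and $\pi_1^*$ at the initial state $(c_0,0)$. Normalizing $M$ (WLOG) so that $s_{\mathrm{accept}}$ is absorbing, I would trace the two branches out of $(c_0,0)$. Taking $a=0$ sends the state to $(c_0,P(n)+2)$, from which the index deterministically increments and first hits $2P(n)+2$ exactly at step $H=P(n)+2$, so the only reward collected is $0.5$ and $Q_1^*((c_0,0),0)=0.5$. Taking $a=1$ sends the state to $(c_0,1)$, after which the MDP actions at steps $2,\dots,P(n)+1$ select the $P(n)$ nondeterministic branches of $M$; the configuration reached at step $H$ is exactly $M$ run for $P(n)$ steps along the chosen branch, and reward $1$ is collected there iff that configuration is accepting. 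Hence $Q_1^*((c_0,0),1)=\max_{\text{branches}}\mathbbm{1}[\text{that branch accepts within }P(n)\text{ steps}]=\mathbbm{1}[\bfx_{\text{input}}\in\cL]$, and $\pi_1^*((c_0,0))$ is uniquely $1$ when $\bfx_{\text{input}}\in\cL$ and uniquely $0$ otherwise. Since $(c_0,0)$ is computable from $\bfx_{\text{input}}$ in polynomial time and $\cL$ is $\mathsf{NP}$-complete, reading off $Q_1^*((c_0,0),1)$ (resp. $\pi_1^*((c_0,0))$) decides $\cL$, giving $\mathsf{NP}$-hardness of computing $Q_1^*$ and $\pi_1^*$. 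For membership, I would use that transitions are deterministic and $H$ is polynomial: ``$Q_1^*(s,a)\ge\theta$'' holds iff some action sequence of length $H-1$ induces a (deterministic, $\mathsf{AC}^0$-computable by (i)) trajectory whose accumulated reward is at least $\theta$, an $\mathsf{NP}$ predicate; the policy statement reduces to the same kind of $\mathsf{NP}$ check because the value of the ``give up'' action is efficiently computable at the states that matter.

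I expect the main obstacle to be part (i): carefully implementing one step of a nondeterministic Turing Machine — the indexed read of the scanned cell, the cell-wise tape rewrite, and the head move — by a constant-depth polynomial-size circuit, and verifying that the surrounding case analysis does not inflate the depth. The step/index bookkeeping in part (ii) needed to conclude ``configuration at step $H$ equals $M$ after exactly $P(n)$ steps'' is the other delicate spot, where off-by-one errors are easy and where the absorbing-accept-state normalization is essential.
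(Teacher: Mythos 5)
Your proposal is correct and follows essentially the same route as the paper's proof: the model is placed in $\mathsf{AC}^0$ by exploiting the locality of one Turing-Machine step (indexed read of the scanned cell, fixed transition table, cell-wise rewrite) together with the fact that each output bit depends on only $O(\log n)$ input bits, and $\mathsf{NP}$-completeness is obtained by guessing the branch sequence for membership and reducing $\cL$ to evaluating $Q_1^*$ and $\pi_1^*$ at the initial state $(c_0,0)$, where ``give up'' yields $0.5$ and branch selection yields $1$ iff $\bfx_{\text{input}}\in\cL$. Your explicit normalization that $s_{\mathrm{accept}}$ is absorbing, and your uniform ``guess the whole action sequence'' membership argument in place of the paper's case split on $\gamma$, are minor presentational refinements rather than a different approach.
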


\begin{proof}[Proof of Theorem~\ref{thm:np:mdp}]
    See Appendix~\ref{appendix:pf:np:mdp} for a detailed proof.
\end{proof}

Theorem~\ref{thm:np:mdp} demonstrates that a substantial representation complexity gap between model-based RL ($\mathsf{AC}^0$) and model-free RL ($\mathsf{NP}$-complete) persists in $\mathsf{NP}$ MDPs. Consequently, we have extended the results for 3-SAT MDP in Theorem~\ref{thm:sat:mdp} to a more general setting as desired. Similar explanations for Theorem~\ref{thm:np:mdp} can be provided, akin to Remarks \ref{reamrk:Q1}, \ref{reamrk:open:problem}, and \ref{remark:stochastic} for 3-SAT MDPs, but we omit these to avoid repetition. 

\section{The Separation between Policy-based RL and Value-based RL} \label{sec:p}

In Section~\ref{sec:np}, we demonstrated the representation complexity gap between model-based RL and model-free RL. In this section, our focus shifts to exploring the potential representation complexity hierarchy within model-free RL, encompassing policy-based RL and value-based RL. More specifically, we construct a broad class of MDPs where both the underlying model and the optimal policy are easy to represent, while the optimal value function is hard to represent. This further illustrates the representation hierarchy between different categories of RL algorithms. 

\subsection{CVP MDP}

We begin by introducing the CVP MDPs, whose construction is rooted in the circuit value problem (CVP). The CVP involves computing the output of a given Boolean circuit (refer to Definition~\ref{def:boolean:circuit}) on a given input. 

\begin{figure}[t]
    \centering
    \includegraphics[width=0.9\textwidth]{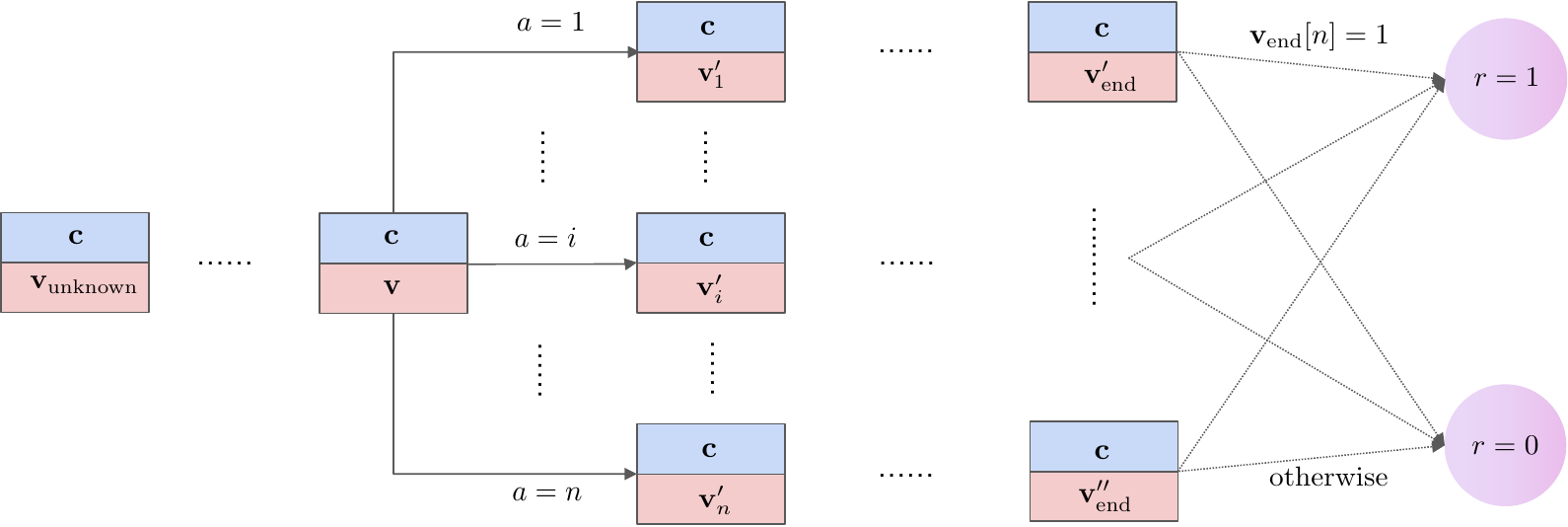}
    \caption{A visualization of CVP MDPs. Here, $\bfv_{\mathrm{unknown}}$, which contains $n$ \texttt{Unknown} values, is the initial value vector. For any state $s$ including a circuit $\bfc$ and a value vector $\bfv$, choosing the action $i$, the environment transits to $(\bfc, \bfv'_i)$. Moreover, $\bfv_{\mathrm{end}}$, $\bfv'_{\mathrm{end}}$, and $\bfv''_{\mathrm{end}}$ are value vectors at the end of the episode.}
    \label{fig:cvp:mdp}
\end{figure}

\begin{definition}[CVP MDP] \label{def:cvp:mdp}
    An $n$-dimensional CVP MDP is defined as follows. Let $\cC$ be the set of all circuits of size $n$. The state space $\cS$ is defined by $\cS=\cC\times\{0,1,\texttt{Unknown}\}^n$, where each state $s$ can be represented as $s=(\bC,\bfv)$. Here, $\bC$ is a circuit consisting of $n$ nodes with $\bC[i]=(\bC[i][1],\bC[i][2],g_i)$ describing the $i$-th node, where $\bC[i][1]$ and $\bC[i][2]$ indicate the input node and $g_i$ denotes the type of gate (including $\wedge,\vee,\neg,0,1$). When $g_i\in\{\wedge,\vee\}$, the outputs of $\bC[i][1]$-th node and $\bC[i][2]$-th node serve as the inputs; and when $g_i=\neg$, the output of $\bC[i][1]$-th node serves as the input and $\bC[i][2]$ is meaningless. Moreover, the node type of $0$ or $1$ denotes that the corresponding node is a leaf node with a value of $0$ or $1$, respectively, and therefore, $\bC[i][1],\bC[i][2]$ are both meaningless. The vector $\bfv\in\{0,1,\texttt{Unknown}\}^n$ represents the value of the $n$ nodes, where the value $\texttt{Unknown}$ indicates that the value of this node has not been computed and is presently unknown. The action space is $\cA=[n]$ and the planning horizon is $H=n+1$. Given a state-action pair $(s = (\bC, \bfv), a)$, its reward $r(s, a)$ is given by:
\begin{equation*}
    \label{eq:CVP_reward}
        r(s,a)=\begin{cases}
            1&\text{If the value of the output gate $\bfv[n]=1$},\\
            0&\text{Otherwise}.
        \end{cases}
\end{equation*}
Moreover, the transition kernel is deterministic and can be defined as follows:
    \begin{equation*}
        \label{eq:CVP_tran}
        \cP\big((\bC,\bfv), a\big)=(\bC,\bfv^\prime).  
    \end{equation*}
    Here, $\bfv^\prime$ is obtained from $\bfv$ by computing and substituting the value of node $a$. More exactly, if the inputs of node $a$ have been computed, we can compute the output of the node $a$ and denote it as $\bfo[a]$. Then we have 
    \begin{equation*}
        \bfv^\prime[j]=\begin{cases}
            \bfv[j]& \text{If $a \neq j$,} \\
            \bfo[a] & \text{If $a =j$ and the inputs of node $a$ have been computed,}\\
            \texttt{Unknown}& \text{If $a=j$ and the inputs of node $a$ have not been computed.}
        \end{cases}
    \end{equation*}
    Given a circuit $\bC$, the initial state of CVP MDP is $(\bC,\bfv_{\mathrm{unknown}})$ where $\bfv_{\mathrm{unknown}}$ denotes the vector containing $n$ \texttt{Unknown} values.
\end{definition}

The visualization of CVP MDPs is given in Figure~\ref{fig:cvp:mdp}.  In simple terms, each state $s = (\bC, \bfv)$ comprises information about a given size-$n$ circuit $\bC$ and a vector $\bfv \in \{0,1,\texttt{Unkown}\}^n$. At each step, the agent takes an action $a \in [n]$. If the $a$-th node has not been computed, and the input nodes are already computed, then the transition kernel of the CVP MDP modifies $\bfv[a]$ based on the type of gate $\bC[a]$. The agent achieves the maximum reward of $1$ only if it transforms the initial vector $\bfv_{\mathrm{unknown}}$, consisting of $n$ \texttt{Unknown} values, into the $\bfv$ satisfying $\bfv[n] = 1$. This also indicates that CVP MDPs exhibit the capacity to model many real-world goal-conditioned problems and scenarios featuring sparse rewards. Hence, we have strategically encoded the circuit value problem into the CVP MDP in this manner. 
The representation complexity guarantee for the CVP MDP is provided below.

\begin{theorem}[Representation Complexity of CVP MDP]
    \label{thm:CVPMDP}
    Let $\cM_n$ be the $n$-dimensional CVP MDP defined in Definition~\ref{def:cvp:mdp}.  The reward function $r$, transition kernel $\cP$, and optimal policy $\pi^*$ of $\cM_n$ can be computed by circuits with polynomial size (in $n$) and constant depth, falling within the circuit complexity class $\mathsf{AC}^0$. However, the problem of computing the optimal value function $Q_1^*$ of $\cM_n$ is $\mathsf{P}$-complete under the log-space reduction. 
\end{theorem}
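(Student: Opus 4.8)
The plan is to handle the three representation claims separately, mirroring the structure used for the 3-SAT and $\mathsf{NP}$ MDPs but now tracking the subtler complexity bookkeeping required for $\mathsf{P}$-completeness under log-space reductions.

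\emph{Upper bounds ($\mathsf{AC}^0$).} For the reward function $r$, observe that $r(s,a)$ only inspects a single bit $\bfv[n]$ of the state, so it is computed by a trivial constant-depth circuit. For the transition kernel $\cP\bigl((\bC,\bfv),a\bigr)=(\bC,\bfv')$, the circuit part is copied verbatim, and $\bfv'$ differs from $\bfv$ only at coordinate $a$. To compute $\bfv'[a]$ I would: (i) decode the gate description $\bC[a]=(\bC[a][1],\bC[a][2],g_a)$; (ii) use an $\mathsf{AC}^0$ indexing (multiplexer) subcircuit to fetch $\bfv[\bC[a][1]]$ and $\bfv[\bC[a][2]]$ — selecting one of $n$ bits by a $\log n$-bit index is in $\mathsf{AC}^0$; (iii) check in constant depth whether these inputs are \texttt{Unknown} and, if not, compute the appropriate Boolean operation determined by $g_a$ (a constant-size case split over the five gate types $\wedge,\vee,\neg,0,1$); (iv) write the result into position $a$ and leave all other coordinates unchanged, again via an $\mathsf{AC}^0$ conditional copy. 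All steps have constant depth, unbounded fan-in, and polynomial size. For the optimal policy $\pi^*$: the key structural observation is that the optimal strategy is simply to evaluate the circuit in a valid topological order. I would argue that a correct and $\mathsf{AC}^0$-computable policy is, e.g., ``pick the smallest-indexed node that is still \texttt{Unknown} but whose inputs are already computed'' (and an arbitrary action if none exists); one must verify that following this policy from the initial state always reaches a state with $\bfv[n]$ equal to the true circuit output, hence achieves the maximal reward $1$ whenever that output is $1$ — and when the output is $0$, every policy gets reward $0$, so this one is still optimal. Checking ``$\bfv[i]=\texttt{Unknown}$ and $\bfv[\bC[i][1]],\bfv[\bC[i][2]]\neq\texttt{Unknown}$'' is $\mathsf{AC}^0$ for each $i$ (with the indexing trick above), and taking the minimum such $i$ is an $\mathsf{AC}^0$ priority-encoder computation.

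\emph{Lower bound ($\mathsf{P}$-hardness of $Q_1^*$).} I would reduce from the Circuit Value Problem itself, which is $\mathsf{P}$-complete under log-space reductions. Given a CVP instance $(\bC_0, \bfx)$ with $\bC_0$ of size $m$ on input $\bfx$, I fold $\bfx$ into the circuit by replacing the input gates with constant $0/1$ gates, producing a circuit $\bC$ of size $n=\mathrm{poly}(m)$ whose output is the CVP answer; this transformation is clearly log-space computable. I then form the initial state $(\bC,\bfv_{\mathrm{unknown}})$ of the $n$-dimensional CVP MDP. The claim is $Q_1^*\bigl((\bC,\bfv_{\mathrm{unknown}}),a\bigr)=1$ for some (equivalently, every) action $a$ iff $\bC$ evaluates to $1$ iff the original CVP instance is a yes-instance, and $Q_1^*=0$ otherwise. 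The forward direction uses the $\mathsf{AC}^0$ policy above (or any topological-evaluation policy) to exhibit a trajectory collecting reward $1$; the converse uses the fact that the transition kernel only ever writes the \emph{true} value of a gate into $\bfv$, so $\bfv[n]$ can only become $1$ if the circuit output is $1$. Since the horizon $H=n+1$ suffices to evaluate all $n$ gates in topological order, reachability is not an obstacle. This shows computing $Q_1^*$ is $\mathsf{P}$-hard; membership in $\mathsf{P}$ is immediate because one can evaluate the circuit and simulate the MDP in polynomial time, so the problem is $\mathsf{P}$-complete.

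\emph{Main obstacle.} The routine parts are the $\mathsf{AC}^0$ upper bounds for $r$ and $\cP$ (standard indexing/multiplexer arguments) and $\mathsf{P}$-membership. The genuinely delicate step is establishing that the optimal policy is $\mathsf{AC}^0$-computable: one must pin down a \emph{specific} optimal policy (not merely argue one exists) that is both provably optimal — which requires the argument that greedy topological evaluation reaches the true output bit, and that no policy can do better when the output is $0$ — and simultaneously simple enough (a priority encoder over constantly-checkable local conditions) to live in $\mathsf{AC}^0$. I would also need to be slightly careful that ``picking an Unknown node whose inputs are computed'' is always possible until the whole circuit is evaluated (so the greedy process does not stall prematurely), which follows from the DAG structure but should be stated explicitly. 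The contrast with the value function — where the obstruction is that even \emph{deciding} whether the reachable reward is $1$ encodes CVP — is what yields the policy-vs-value separation, and the $\mathsf{P}$-hardness reduction must be kept log-space (rather than just poly-time) to make the separation meaningful relative to $\mathsf{TC}^0\subseteq\mathsf{L}$.
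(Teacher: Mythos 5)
Your proposal matches the paper's proof essentially step for step: the same $\mathsf{AC}^0$ multiplexer/indexing arguments for $r$ and $\cP$, the same greedy ``smallest-index evaluable node'' optimal policy computed by a priority encoder, and the same log-space reduction from CVP that fixes the first action to be the one chosen by that policy. One small caveat: your parenthetical claim that $Q_1^*\bigl((\bC,\bfv_{\mathrm{unknown}}),a\bigr)=1$ for \emph{every} action $a$ when the circuit evaluates to $1$ is false in general (for a chain-shaped circuit the horizon $H=n+1$ leaves no slack, so a wasted first action forfeits the reward), which is precisely why the paper — and your own forward-direction argument — pins $a$ to the greedy policy's action rather than quantifying over all actions.
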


\begin{proof}[Proof of Theorem~\ref{thm:CVPMDP}]
    See Appendix~\ref{appendix:pf:cvp:mdp} for a detailed proof.
\end{proof}

Theorem~\ref{thm:CVPMDP} illustrates that, within the framework of CVP MDP, the representation complexity of the optimal value function is notably higher than that of the underlying model and optimal policy.

\begin{remark}
    Here we explain why $\mathsf{P}$-completeness is considered challenging. In computational complexity theory, problems efficiently solvable in parallel fall into class $\mathsf{NC}$. It's widely believed that $\mathsf{P}$-complete problems, considered among the most challenging in class $\mathsf{P}$, cannot be efficiently solved in parallel ($\mathsf{NC} \neq \mathsf{P}$). Neural networks like MLP and Transformers \citep{vaswani2017attention}, which are implemented in a highly parallel manner, face limitations when addressing P-complete problems. See Section~\ref{sec:drl} for details.
\end{remark}

\subsection{$\mathsf{P}$ MDP: A Broader Class of MDPs}

Similar to the extension of 3-SAT MDP to $\mathsf{NP}$ MDP in Section~\ref{sec:np:mdp}, we broaden the scope of CVP MDP to encompass a broader class of MDPs. In this extension, we can encode \emph{any} $\mathsf{P}$-complete problem into the MDP structure while preserving the results established for CVP MDP in Theorem~\ref{thm:CVPMDP}. Specifically, we introduce the $\mathsf{P}$ MDP as follows.

\begin{definition}[$\mathsf{P}$ MDP] \label{def:p:mdp}
    Given a language $\cL$ in $\mathsf{P}$, and a circuit family $\mathcal{C}$, where $\cC_n\in\cC$ contains the circuits capable of recognizing strings of the length $n$ in $\cL$. The size of the circuits in $\cC_n$ is upper bounded by a polynomial $P(n)$. An $n$-dimensional $\mathsf{P}$ MDP based on $\cL$ is defined as follows. The state space $\cS$ is defined by $\cS=\{0,1\}^n\times\cC_n\times\{0,1,\texttt{Unknown}\}^{P(n)}$, where each state $s$ can be represented as $s=(\bfx,\bC,\bfv)$. Here, $\bC$ is the circuit recognizing the strings of length $n$ in $\cL$ with $\bfc[i]=(\bC[i][1],\bC[i][2],g_i)$ representing the $i$-th node where the output of nodes $i_1$ and $i_2$ serves as the input, and $g_i$ is the type of the gate (including $\wedge,\vee,\neg,\text{ and } \texttt{Input}$). When $g_i\in\{\wedge,\vee\}$, the outputs of $\bC[i][1]$-th node and $\bC[i][2]$-th node serve as the inputs; and when $g_i=\neg$, the output of $\bC[i][1]$-th node serves as the input and $\bC[i][2]$ is meaningless. Moreover, the type \texttt{Input} indicates that the corresponding node is the $\bC[i][1]$-th bit of the input string. The vector $\bfv\in\{0,1, \texttt{Unknown}\}^{P(n)}$ representing the value of the $n$ nodes, and the value $\texttt{Unknown}$ indicates that the value of the corresponding node has not been computed, and hence is currently unknown. The action space is $\cA=[P(n)]$ and the planning horizon is $H=P(n)+1$. The reward of any state-action $(s = (\bfx, \bC, \bfv), a)$ is defined by:
    \begin{equation*}
    \label{eq:P_reward}
        r(s,a)=\begin{cases}
            1&\text{If the value of the output gate $\bfv[P(n)]=1$},\\
            0&\text{Otherwise}.
        \end{cases}
\end{equation*}
Moreover, the transition kernel is deterministic and can be defined as follows:
    \begin{equation*}
        \label{eq:P_tran}
        \cP\big((\bfx,c,\bfv), a\big)=(\bfx,c,\bfv^\prime),
    \end{equation*}
    where $\bfv^\prime$ is obtained from $\bfv$ by computing and substituting the value of node $a$. In particular, if the inputs of node $a$ have been computed or can be read from the input string, we can determine the output of node $a$ and denote it as $\bfo[a]$. This yields the formal expression of $\bfv^\prime$:
    \begin{equation*}
        \bfv^\prime[j]=\begin{cases}
            \bfv[j]& \text{If $a \neq j$} \\
            \bfo[a] & \text{If $a = j$ and the inputs of node $a$ have been computed}\\
            \texttt{Unknown}& \text{If $a=j$ and the inputs of node $a$ have not been computed}.
        \end{cases}
    \end{equation*}
    Given an input $\bfx$ and a circuit $\bC$ capable of recognizing strings of specific length in $\cL$, the initial state of $\mathsf{P}$ MDP is $(\bC,\bfv_{\mathrm{unknown}})$ where $\bfv_{\mathrm{unknown}}$ denotes the vector containing $P(n)$ \texttt{Unknown} values and $P(n)$ is the size of the circuit $\bC$.
\end{definition}

In the definition of $\mathsf{P}$ MDPs, we employ circuits to recognize the $\mathsf{P}$-complete language $\cL$ instead of using a Turing Machine, as done in the $\mathsf{NP}$ MDP in Definition~\ref{def:np:mdp}. While it is possible to define $\mathsf{P}$ MDPs using a Turing Machine, we opt for circuits to maintain consistency with CVP MDP and facilitate our proof. Additionally, we remark that employing circuits to define $\mathsf{NP}$ MDPs poses challenges, as it remains elusive whether polynomial circuits can recognize $\mathsf{NP}$-complete languages.

\begin{theorem}[Representation complexity of $\mathsf{P}$ MDP] \label{thm:p:mdp}
    For any $\mathsf{P}$-complete language $\cL$, consider its corresponding ($n$-dimensional) $\mathsf{P}$ MDP $\cM_n$ as defined in Definition~\ref{def:p:mdp}. The reward function $r$, transition kernel $\cP$, and the optimal policy $\pi^*$ of $\cM_n$ can be computed by circuits with polynomial size (in $n$) and constant depth, falling within the circuit complexity class $\mathsf{AC}^0$. However, the problem of computing the optimal value function $Q_1^*$ of $\cM_n$ is $\mathsf{P}$-complete under the log-space reduction. 
\end{theorem}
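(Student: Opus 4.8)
The plan is to mirror, with minor adaptations, the argument used for Theorem~\ref{thm:CVPMDP} (the CVP MDP case), since the $\mathsf{P}$ MDP is essentially the CVP MDP with the input string $\bfx$ hard-wired into the circuit via \texttt{Input} gates. We must prove three things: (i) the model (reward $r$ and transition $\cP$) is in $\mathsf{AC}^0$; (ii) the optimal policy $\pi^*$ is in $\mathsf{AC}^0$; and (iii) computing $Q_1^*$ is $\mathsf{P}$-complete under log-space reductions.

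For part (i), I would observe that given an encoding of $(\bfx, \bC, \bfv)$ and an action $a \in [P(n)]$, the transition simply locates the triple $\bC[a] = (\bC[a][1], \bC[a][2], g_a)$, reads off the (at most two) input node indices, fetches $\bfv[\bC[a][1]]$ and $\bfv[\bC[a][2]]$ (or, for an \texttt{Input} gate, the bit $\bfx[\bC[a][1]]$), checks whether those inputs are already computed (i.e.\ not \texttt{Unknown}), and if so writes the single-gate evaluation into coordinate $a$, leaving all other coordinates untouched. Each of these is a lookup/selection over polynomially many coordinates plus a constant-size Boolean computation, all implementable by constant-depth, polynomial-size, unbounded-fan-in circuits; the reward is just the test $\bfv[P(n)] = 1$. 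The argument is identical in spirit to the CVP MDP model bound in Theorem~\ref{thm:CVPMDP}.

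For part (ii), the key structural observation is that once we restrict to states reachable from the initial state $(\bfx, \bC, \bfv_{\mathrm{unknown}})$, the only way to eventually reach $\bfv[P(n)] = 1$ (when such a path exists) is to evaluate nodes in a topological order until the output gate is set to its true value; moreover the circuit has a unique correct evaluation, so the question of whether the target is reachable depends only on $(\bfx, \bC)$, not on the partial progress. Hence a simple optimal policy at every step is: pick the smallest-indexed node whose inputs are already available (computed, or readable from $\bfx$) and whose own value is still \texttt{Unknown}; if no such node exists (everything evaluable has been evaluated) pick an arbitrary fixed action. Determining ``inputs available and own value \texttt{Unknown}'' for a given node is again a constant-depth lookup, and taking the minimum index of a set of ``good'' nodes is computable in $\mathsf{AC}^0$ (it is a prefix-OR-type computation over polynomially many candidates). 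So $\pi^*$ is in $\mathsf{AC}^0$. I would need to carefully argue that this greedy schedule indeed maximizes the value — essentially that it reaches a terminal configuration in which $\bfv[P(n)]$ equals the true circuit output $\cL(\bfx)$ within the horizon $H = P(n) + 1$, and that no policy can do better, which follows because the output bit equals $\cL(\bfx)$ under any complete correct evaluation and the reward only ever depends on the final value of the output gate.

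For part (iii), $\mathsf{P}$-hardness under log-space reductions: given an instance of the generic $\mathsf{P}$-complete problem $\cL$, i.e.\ an input string $\bfx$ of length $n$, I construct in logarithmic space the circuit $\bC = \bC_n$ recognizing length-$n$ strings (this is the uniformity of the circuit family, part of the definition) and output the initial state $(\bfx, \bC, \bfv_{\mathrm{unknown}})$; then $Q_1^*$ at this state equals $1$ iff $\cL(\bfx) = 1$ and $0$ otherwise (since with horizon $P(n)+1$ the greedy policy can fully evaluate $\bC$ on $\bfx$, and the max reward $1$ is attainable precisely when the output gate evaluates to $1$). Conversely, membership in $\mathsf{P}$: computing $Q_1^*$ at any given state reduces to simulating the greedy evaluation (or, more robustly, doing a polynomial-time reachability/value-iteration computation over the polynomially-deep, polynomially-branching deterministic transition structure), which is polynomial time; hence the problem is in $\mathsf{P}$. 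Combining gives $\mathsf{P}$-completeness.

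The main obstacle I anticipate is not the hardness direction (which is a routine encoding, essentially free from the $\mathsf{P}$-completeness of CVP and the log-space uniformity of $\cC$) but rather pinning down part (ii) rigorously: proving that the simple greedy topological-evaluation policy is genuinely optimal for \emph{every} reachable state, including states where the agent has ``wasted'' steps or where partial evaluation has occurred, and confirming the horizon $H = P(n)+1$ always suffices to finish the evaluation before the episode ends. One has to check edge cases such as acting on a node whose inputs are not yet ready (which, per the transition, sets it to \texttt{Unknown} and effectively wastes a step) and argue these are never helpful, and handle the possibility that $\cL(\bfx)=0$ so that the optimal value is $0$ regardless of the policy — in which case optimality of the greedy choice is vacuous but we still want $\pi^*$ to be a well-defined $\mathsf{AC}^0$ function. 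I would resolve this by exhibiting one canonical optimal policy (the smallest-index-evaluable-node rule with a fixed fallback), proving it attains the value $\mathbbm{1}\{\cL(\bfx)=1\}$, and separately proving no policy exceeds this value, so it is optimal; its $\mathsf{AC}^0$ implementability then follows from the lookup-plus-min-index structure.
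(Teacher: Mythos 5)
Your proposal is correct and follows essentially the same route as the paper's proof: the same lookup-based $\mathsf{AC}^0$ constructions for $r$, $\cP$, and the greedy smallest-evaluable-node policy $\pi^*(s)=\min\tilde{\cG}(s)$ (with the min computed by a prefix-OR), membership of the value problem in $\mathsf{P}$ via topological evaluation, and $\mathsf{P}$-hardness via the log-space-uniform circuit family for $\cL$ (Lemma~\ref{lem:log:uniform:circuit}). The only cosmetic difference is that the paper queries $Q_1^*(s,\pi^*(s))$ in the reduction rather than an unspecified action, which matters only because a wasted first step cannot be recovered within the horizon $H=P(n)+1$; your discussion of wasted steps already covers this.
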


\begin{proof}[Proof of Theorem~\ref{thm:p:mdp}]
    See Appendix~\ref{appendix:pf:p:mdp} for a detailed proof.
\end{proof}

Theorem~\ref{thm:p:mdp} significantly broadens the applicability of Theorem~\ref{thm:CVPMDP} by enabling the encoding of any $\mathsf{P}$-complete problem into the MDP structure, as opposed to a specific circuit value problem. In these expanded scenarios, the representation complexity of the underlying model and optimal policy remains noticeably lower than that of the optimal value function.

Consequently, by combining the results in Theorems~\ref{thm:sat:mdp}, \ref{thm:np:mdp}, \ref{thm:CVPMDP}, and \ref{thm:p:mdp}, a potential representation complexity hierarchy between model-based RL, policy-based RL, and value-based RL has been unveiled. Specifically, the underlying model is the easiest to represent, followed by the optimal policy function, with the optimal value function exhibiting the highest representation complexity.

\section{Connections to Deep Reinforcement Learning} 
\label{sec:drl}

\looseness=-1 While we have uncovered the representation complexity hierarchy between model-based RL, policy-based RL, and value-based RL through the lens of computational complexity in Sections~\ref{sec:np} and \ref{sec:p}, these results offer limited insights for modern deep RL, where models, policies, and values are approximated by neural networks. To address this limitation, we further substantiate our revealed representation complexity hierarchy among different RL paradigms through the perspective of the expressiveness of neural networks. Specifically, we focus on the MLP with Rectified Linear Unit (ReLU) as the activation function\footnote{Our results in this section are ready to be extended to other activation functions, such as  Exponential Linear Unit (ELU), Gaussian Error Linear Unit (GeLU) and so on.}~--- an architecture predominantly employed in deep RL algorithms. The formal definition is provided below.

\begin{definition}[Log-precision MLP]
    An $L$-layer MLP is a function from input $\bfe_0 \in \RR^d$ to output $\bfy \in \RR^{d_y}$, recursively defined as
    \$
        \bfh_{1} & =\mathrm{ReLU}\left(\Wb_{1} \bfe_0 + \bfb_{1}\right),  &&\Wb_{1} \in \mathbb{R}^{d_{1} \times d}, && \bfb_{1} \in \mathbb{R}^{d_{1}}, \\ \bfh_{\ell} & =\mathrm{ReLU}\left(\Wb_{\ell} \bfh_{\ell-1}+\bfb_{\ell}\right), \quad &&\Wb_{\ell} \in \mathbb{R}^{d_{\ell} \times d_{\ell-1}}, && \bfb_{\ell} \in \mathbb{R}^{d_{\ell}}, \qquad 2 \leq \ell \leq L-1, \qquad \qquad \\ \bfy & =\Wb_{L} \bfh_{L}+\bfb_{L},  &&\Wb_{L} \in \mathbb{R}^{d_y \times d_{L}}, && \bfb_{L} \in \mathbb{R}^{d_y},
    \$
    where $\mathrm{ReLU}(x) = \max\{0, x\}$ for any $x \in \RR$ is the standard ReLU activation. \emph{Log-precision MLPs} refer to MLPs whose internal neurons can only store floating-point numbers within $O(\log n)$ bit precision, where $n$ is the maximal length of the input dimension. 
\end{definition}

See Appendix~\ref{appendix:backgroud} for more details regarding log precision. The log-precision MLP is closely related to practical scenarios where the precision of the machine (e.g., $16$ bits or $32$ bits) is generally much smaller than the input dimension (e.g., 1024 or 2048 for the representation of image data). In our paper, all occurrences of MLPs will implicitly refer to the log-precision MLP, and we may omit explicit emphasis on log precision for the sake of simplicity.

To employ the MLP to represent the model, policy, and value function, we encode each state $s$ and action $a$ into embeddings $\bfe_s \in \RR^{d_s}$ and $\bfe_a \in \RR^{a}$, respectively. The detailed constructions of state embeddings and action embeddings of each type of MDPs are provided in Appendix~\ref{appendix:embedding}. Using these embeddings, we specify the input and output of MLPs that represent the model, policy, and value function in the following Table~\ref{table:MLP}.

\begin{table}[h]
\centering\resizebox{\columnwidth}{!}{
\begin{tabular}{|c|c|c|c|c|}
\hline
\multicolumn{1}{|l|}{} & Transition Kernel & Reward Function & Optimal Policy & Optimal Value Function \\ \hline
Input                      & $(\bfe_s, \bfe_a)$ & $(\bfe_s, \bfe_a)$ & $\bfe_s$ & $(\bfe_s, \bfe_a)$ \\ \hline
Output                      & $\bfe_{s'}$ & $r(s,a)$ & $\bfe_a$ & $Q_1^*(s,a)$ \\ \hline
\end{tabular}}
\caption{The input and output of the MLPs that represent the model, policy, and value function.}
        \label{table:MLP}
\end{table}

We now unveil the hierarchy of representation complexity from the perspective of MLP expressiveness. To begin with, we demonstrate the representation complexity gap between model-based RL and model-free RL.

\begin{theorem} \label{thm:mlp:np:positive}
    The reward function $r$ and transition kernel $\cP$ of $n$-dimensional 3-SAT MDP and $\mathsf{NP}$ MDP can be represented by an MLP with constant layers, polynomial hidden dimension (in $n$), and ReLU as the activation function.
\end{theorem}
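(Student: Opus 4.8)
The plan is to construct, for each of the four functions in Table~\ref{table:MLP} (transition kernel, reward function; the optimal policy and value are handled by a separate negative result, so here I focus on the model), an explicit constant-layer ReLU MLP computing it. The key observation is that both the reward function~\eqref{eq:SAT_reward}, \eqref{eq:NP_reward} and the transition kernel~\eqref{eq:SAT_tran}, \eqref{eq:NP_tran} are, by Theorems~\ref{thm:sat:mdp} and~\ref{thm:np:mdp}, computable by $\mathsf{AC}^0$ circuits, i.e.\ constant-depth, polynomial-size circuits of unbounded-fan-in $\land,\lor,\lnot$ gates over the bits of the embeddings $(\bfe_s,\bfe_a)$. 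So the real content is a simulation lemma: \emph{any $\mathsf{AC}^0$ circuit can be simulated by a constant-layer, polynomial-width, log-precision ReLU MLP whose inputs and outputs are Boolean-valued}. Given this lemma, the theorem follows by applying it gate-by-gate (or rather layer-by-layer of the circuit) and stacking the resulting constant-depth MLP blocks; since the circuit has constant depth and polynomial size, the MLP has constant depth and polynomial width, and all intermediate values are in $\{0,1\}$, well within $O(\log n)$ precision.

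The simulation lemma itself I would prove by encoding one circuit layer at a time. A single unbounded-fan-in $\mathrm{AND}$ of literals $z_1,\dots,z_k$ (each $z_i$ either an input bit or its negation, the negation being $1-z_i$, an affine map) equals $\mathrm{ReLU}\!\bigl(\sum_i z_i - (k-1)\bigr)$ when the $z_i\in\{0,1\}$, because the argument is $1$ exactly when all $z_i=1$ and is $\le 0$ otherwise; similarly $\mathrm{OR}$ is $1-\mathrm{ReLU}\!\bigl(1-\sum_i z_i\bigr)$, again an affine-then-ReLU-then-affine composition, and it can be absorbed into the next layer's affine part so each circuit layer costs only one MLP layer. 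Because the circuit is a DAG rather than strictly layered, I first pad it into a leveled form of constant depth (each wire carried forward verbatim through ReLU since $\mathrm{ReLU}(x)=x$ for $x\ge 0$ and all carried values are Boolean), keeping width polynomial. Finally, for the reward output I need a real number rather than a bit: the three-way value in $\{0,0.5,1\}$ is just an affine combination $0.5\cdot b_1 + 0.5\cdot b_2$ of two Boolean indicators ($b_1$ = "give-up branch, $k=2n+2$", $b_2$ = "accepting, $k=P(n)+1$"), computed by the circuit simulation and then read off by the output affine layer; the transition kernel outputs an embedding $\bfe_{s'}$ which is itself Boolean-valued by construction (Appendix~\ref{appendix:embedding}), so no extra work is needed there.

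The main obstacle I anticipate is \textbf{not} the Boolean-gate simulation (that is routine) but rather \textbf{bookkeeping the interface between the $\mathsf{AC}^0$ circuit and the prescribed embeddings}: I must verify that the state/action embeddings $\bfe_s,\bfe_a$ fixed in Appendix~\ref{appendix:embedding} present the relevant fields ($\psi$ as $3n$ literals, $\bfv\in\{0,1\}^n$, the counter $k$, and for $\mathsf{NP}$ MDP the Turing-machine configuration $c=(s_M,\bft,l)$) in a bit-accessible, polynomial-length form, and that the target embedding $\bfe_{s'}$ is likewise recoverable by a constant-depth Boolean computation from $(\bfe_s,\bfe_a)$ — in particular the "set the $k$-th bit of $\bfv$ to $a$" update and the counter increment $k\mapsto k+1$ must be $\mathsf{AC}^0$, which they are (bit-selection via equality tests $[\,i=k\,]$ and binary increment are classical $\mathsf{AC}^0$ primitives), and the "one step of the nondeterministic Turing machine" update $c\mapsto c'$ is $\mathsf{AC}^0$ because each tape cell's new contents depend only on a $O(1)$-size local window plus the head position, which is exactly the content already established in the proof of Theorem~\ref{thm:np:mdp}. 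A secondary, purely technical point is to confirm that folding consecutive affine maps together does not blow up the precision beyond $O(\log n)$ bits — but since every MLP node here stores only a value in $\{0,1\}$ (or a sum of at most polynomially many such values before a ReLU, hence an integer of magnitude $\le \mathrm{poly}(n)$), $O(\log n)$ bits suffice throughout, and the log-precision constraint is met.
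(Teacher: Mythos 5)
Your proposal is essentially correct but takes a genuinely different route from the paper. You factor the theorem through a general simulation lemma --- ``every $\mathsf{AC}^0$ circuit is computable by a constant-layer, polynomial-width ReLU MLP'' --- and then invoke the $\mathsf{AC}^0$ upper bounds on $r$ and $\cP$ already established in Theorems~\ref{thm:sat:mdp} and~\ref{thm:np:mdp}. The gate gadgets you use ($\mathrm{AND}$ as $\mathrm{ReLU}(\sum_i z_i - (k-1))$, $\mathrm{OR}$ as $1-\mathrm{ReLU}(1-\sum_i z_i)$) are exactly the paper's Lemma~\ref{lemma:MLP_TC_operation}, which the paper states but deploys only in Remark~\ref{remark:mlp:tc0}; extending them to a leveled constant-depth circuit with Boolean pass-through wires is routine, so your simulation lemma is sound. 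The paper instead argues directly: it observes that each output coordinate of $r$ and $\cP$ depends on a tuple of embedding entries ranging over a domain of only polynomial size (equivalently, on $O(\log n)$ bits of information), and then applies a finite look-up-table construction (Lemma~\ref{lemma:look_up_MLP}) coordinate by coordinate, with a few hand-built ReLU formulas for the literal-substitution and clause-evaluation steps of the 3-SAT reward. Your approach is more modular and immediately generalizes to any model whose components are in $\mathsf{AC}^0$ with Boolean interface; the paper's approach avoids any circuit-to-network translation and, crucially, operates natively on the \emph{integer-valued} embeddings of Appendix~\ref{appendix:embedding}.

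That last point is the one place where your write-up contains an actual error rather than a stylistic difference: the embeddings $\bfe_s$ are \emph{not} Boolean-valued by construction. For the 3-SAT MDP the literals of $\psi$ are encoded as integers in $[2n]$ and the counter $k$ as an integer in $\{0\}\cup[2n+2]$; for the $\mathsf{NP}$ MDP the Turing-machine state and head position are likewise integers. So before your circuit simulation can run you must binarize each integer coordinate (a constant-depth, polynomial-width step, e.g.\ via the same look-up-table idea or threshold gadgets, since each integer has only polynomially many possible values), and after it you must decode the output bits back into integers with a final affine layer $\sum_j 2^j b_j$. Both repairs are easy and preserve $O(\log n)$ precision, so the gap is cosmetic, but your claim that ``$\bfe_{s'}$ is itself Boolean-valued \ldots so no extra work is needed there'' should be corrected.
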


\begin{proof}[Proof of Theorem~\ref{thm:mlp:np:positive}] 
    See Appendix \ref{appendix:pf:thm:mlp:np:positive} for a detailed proof.
\end{proof}

\begin{theorem}\label{thm:mlp:np:negative}
     Assuming that $\mathsf{TC}^0\neq \mathsf{NP}$, the optimal policy $\pi_1^*$ and optimal value function $Q_1^*$ of $n$-dimensional 3-SAT MDP and $\mathsf{NP}$ MDP defined with respect to an $\mathsf{NP}$-complete language $\cL$ cannot be represented by an MLP with constant layers, polynomial hidden dimension (in $n$), and ReLU as the activation function.
\end{theorem}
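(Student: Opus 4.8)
## Proof Proposal for Theorem~\ref{thm:mlp:np:negative}

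\textbf{Proof proposal.} The plan is to argue by contraposition: I will show that a constant-layer, polynomial-width, log-precision ReLU MLP can only compute functions lying in (non-uniform) $\mathsf{TC}^0$, and then combine this with the fact that — as established in Theorems~\ref{thm:sat:mdp} and~\ref{thm:np:mdp} — deciding the value of $Q_1^*$ (resp.\ $\pi_1^*$) at the initial state of the 3-SAT MDP / $\mathsf{NP}$ MDP already solves an $\mathsf{NP}$-complete problem, via a reduction that is itself computable in $\mathsf{AC}^0$. If such an MLP existed, chaining the $\mathsf{AC}^0$ reduction with the $\mathsf{TC}^0$-computable MLP (and a trivial $\mathsf{AC}^0$ post-processing that reads off whether the output encodes ``accept'') would place an $\mathsf{NP}$-complete language inside $\mathsf{TC}^0$; since $\mathsf{TC}^0$ is closed under $\mathsf{AC}^0$ many-one (and $\mathsf{AC}^0$-``wrapped'') reductions and $\mathsf{TC}^0 \subseteq \mathsf{L} \subseteq \mathsf{P} \subseteq \mathsf{NP}$, this forces $\mathsf{TC}^0 = \mathsf{NP}$, contradicting the hypothesis.

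The technical heart is the simulation lemma: \emph{every} coordinate of the output of a depth-$O(1)$, width-$\mathrm{poly}(n)$, $O(\log n)$-precision ReLU MLP is computable by a $\mathsf{TC}^0$ circuit family. I would prove this by induction on the layers. Each layer computes an affine map followed by $\mathrm{ReLU}$: with $O(\log n)$-bit operands, a single product of two entries is a $\mathsf{TC}^0$ operation, and the iterated sum $\Wb_\ell \bfh_{\ell-1} + \bfb_\ell$ of $\mathrm{poly}(n)$ many $O(\log n)$-bit numbers is exactly iterated addition, which is in $\mathsf{TC}^0$; applying $\mathrm{ReLU}$ amounts to a comparison with $0$ followed by a multiplexer, which is in $\mathsf{AC}^0 \subseteq \mathsf{TC}^0$. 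Because the number of layers is constant and $\mathsf{TC}^0$ is closed under composition of constantly many $\mathsf{TC}^0$ stages (the depths and sizes add up to $O(1)$ and $\mathrm{poly}(n)$ respectively), the whole network stays in $\mathsf{TC}^0$. Care is needed to check that $O(\log n)$-bit precision is genuinely maintained (rounding after each operation keeps operands short) so that the bit-length never exceeds the log-precision budget — this is the step I expect to require the most attention, and it is where I would cite the established MLP/transformer-expressiveness results rather than redo the arithmetic-circuit constructions from scratch.

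For the assembly: given a 3-CNF formula $\psi$ on $n$ variables, the initial state $s_0 = (\psi,\mathbf{0}_n,0)$ and its embedding $\bfe_{s_0}$ (and the embedding $\bfe_1$ of action $1$) are produced from $\psi$ by an $\mathsf{AC}^0$ computation, using the explicit embedding constructions referenced in the appendix. One checks from the dynamics that $Q_1^*(s_0,1)=1$ (equivalently $V_1^*(s_0)=1$, equivalently $\pi_1^*(s_0)=1$) iff $\psi$ is satisfiable, and $=0$ (resp.\ action $0$) otherwise. Hence feeding $(\bfe_{s_0},\bfe_1)$ (resp.\ $\bfe_{s_0}$) to the putative MLP and inspecting one output coordinate decides 3-SAT in $\mathsf{TC}^0$; since 3-SAT is $\mathsf{NP}$-complete under $\mathsf{AC}^0$ many-one reductions (folklore, via the $\mathsf{AC}^0$ Cook--Levin tableau reduction), we get $\mathsf{NP}\subseteq\mathsf{TC}^0$, the desired contradiction. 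The $\mathsf{NP}$ MDP case is identical, starting from an input $\bfx$ for the $\mathsf{NP}$-complete language $\cL$, building the initial configuration $(c_0,0)$ of the Turing machine $M$ in $\mathsf{AC}^0$ (copy input to tape, set state and pointer), and using that $\cL$ — being any of the standard $\mathsf{NP}$-complete languages — is complete under $\mathsf{AC}^0$ reductions; this is the one place where a remark on the reduction class is warranted. The $\pi_1^*$ statements follow verbatim by replacing ``is the output $1$?'' with ``does the output action embedding equal $\bfe_1$?'', again an $\mathsf{AC}^0$ test. I would present the $\mathsf{NP}$ MDP argument as the general case and note that the 3-SAT MDP is the special instance obtained by taking $\cL = $ 3-SAT.
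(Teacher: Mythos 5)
Your proposal is correct and follows essentially the same route as the paper's own proof: upper-bound constant-layer, polynomial-width, log-precision ReLU MLPs by $\mathsf{TC}^0$ (the paper delegates this to Lemma~\ref{lemma:MLP_up_bound}, citing the transformer-expressiveness literature, just as you propose to do), then combine with the $\mathsf{NP}$-hardness of computing $\pi_1^*$ and $Q_1^*$ from Theorems~\ref{thm:sat:mdp} and~\ref{thm:np:mdp} to contradict $\mathsf{TC}^0\neq\mathsf{NP}$. Your explicit observation that the hardness reductions must themselves be $\mathsf{AC}^0$-computable (so that an $\mathsf{NP}$-complete language actually lands in $\mathsf{TC}^0$ rather than merely in $\mathsf{P}$) is a point the paper's two-line proof glosses over, and it is indeed satisfied by the padding/initial-state constructions used there.
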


\begin{proof}[Proof of Theorem~\ref{thm:mlp:np:negative}] 
    See Appendix \ref{appendix:pf:thm:mlp:np:negative} for a detailed proof.
\end{proof}

Theorems~\ref{thm:mlp:np:positive} and \ref{thm:mlp:np:negative} show that the underlying model of 3-SAT MDP and $\mathsf{NP}$ MDP can be represented by constant-layer perceptron, while the optimal policy and optimal value function cannot. This demonstrates the representation complexity gap between model-based RL and model-free RL from the perspective of MLP expressiveness. The following two theorems further illustrate the representation complexity gap between policy-based RL and value-based RL.

\begin{theorem}\label{thm:mlp:p:positive}
       The reward function $r$, transition kernel $\cP$, and optimal policy $\pi^*$ of $n$-dimensional CVP MDP and $\mathsf{P}$ MDP can be represented by an MLP with constant layers, polynomial hidden dimension (in $n$), and ReLU as the activation function.
\end{theorem}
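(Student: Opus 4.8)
The plan is to deduce Theorem~\ref{thm:mlp:p:positive} from the circuit-complexity statements already in hand. Theorem~\ref{thm:CVPMDP} and Theorem~\ref{thm:p:mdp} assert that the reward function $r$, the transition kernel $\cP$, and a designated optimal policy $\pi^*$ of the CVP MDP and the $\mathsf{P}$ MDP each lie in $\mathsf{AC}^0$, i.e., are computed by circuit families of constant depth and polynomial size that read the binary description of a state--action pair and output the binary description of the next state, the reward bit, or an action. Since the embeddings $\bfe_s,\bfe_a$ constructed in Appendix~\ref{appendix:embedding} are, up to a fixed linear map, essentially these binary descriptions, and the input/output conventions are those of Table~\ref{table:MLP}, it suffices to prove the generic fact that every polynomial-size, constant-depth Boolean circuit can be simulated exactly by a log-precision ReLU MLP of constant depth and polynomial width; I would then invoke the same $\mathsf{AC}^0$-to-MLP simulation that underlies the proof of Theorem~\ref{thm:mlp:np:positive} for the model of the $\mathsf{NP}$ MDP.

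For the generic simulation I would proceed gate by gate. A $\lnot$ gate is the affine map $x\mapsto 1-x$; an unbounded fan-in AND of Boolean values $x_1,\dots,x_m$ is $\mathrm{ReLU}\!\bigl(\textstyle\sum_{i}x_i-(m-1)\bigr)$, which returns the correct Boolean value on Boolean inputs; and an unbounded fan-in OR is $1-\mathrm{ReLU}\!\bigl(1-\sum_i x_i\bigr)$. Hence one layer of the circuit is realized by $O(1)$ MLP layers whose width equals the number of gates in that layer, and a wire that skips layers is carried forward through the intermediate layers by the identity $x=\mathrm{ReLU}(x)$, valid because every value transported is nonnegative. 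A constant-depth circuit therefore becomes an MLP of constant depth and width polynomial in the circuit size, hence polynomial in $n$. The only quantities ever stored are Booleans and the partial sums $\sum_i x_i\le m\le\mathrm{poly}(n)$ that appear inside AND/OR gates, so $O(\log n)$ bits of precision suffice and the result is a genuine log-precision MLP.

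It then remains to confirm the (routine) membership of $r$, $\cP$, and $\pi^*$ in $\mathsf{AC}^0$, which is where most of the care goes. The reward is immediate: $r(s,a)$ is literally the coordinate of $\bfe_s$ encoding $\bfv[n]$ (the value $\bfv[P(n)]$ of the output gate in the $\mathsf{P}$ MDP), extracted by a single linear read-out. The transition kernel leaves the circuit part $\bC$ (and, for the $\mathsf{P}$ MDP, the input string $\bfx$) unchanged and rewrites only coordinate $a$ of $\bfv$; computing $\bfv'[a]$ requires a multiplexer that uses $a$ as a pointer to select $\bC[a]=(\bC[a][1],\bC[a][2],g_a)$ together with the values $\bfv[\bC[a][1]],\bfv[\bC[a][2]]$, followed by a constant-size Boolean evaluation of $g_a$ on those values that treats the \texttt{Unknown} code as a third symbol (outputting \texttt{Unknown} if a needed input is \texttt{Unknown}, and, in the $\mathsf{P}$ MDP, reading $\bfx[\bC[a][1]]$ when $g_a=\texttt{Input}$), and finally a copy-with-one-overwrite that leaves $\bfv[j]$ for $j\neq a$ intact. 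Each equality test ``pointer $=i$'' is a conjunction of $O(\log n)$ literals and each multiplexer is an OR over the $n$ candidates, so every piece is $\mathsf{AC}^0$. Finally, the optimal policy of Theorem~\ref{thm:CVPMDP} (resp.\ Theorem~\ref{thm:p:mdp}) acts purely locally on the circuit --- intuitively it evaluates the nodes in a valid order, which amounts to a priority encoder selecting the least index of a node that is still \texttt{Unknown} but whose inputs are already known --- and such a priority encoder is again $\mathsf{AC}^0$; its output is then written into $\bfe_a$. Composing these $\mathsf{AC}^0$ circuits with the simulation of the previous paragraph produces the desired MLPs for both the CVP MDP and the $\mathsf{P}$ MDP.

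I expect the main obstacle to be bookkeeping rather than any genuine difficulty: choosing an encoding of the \texttt{Unknown} symbol that interacts cleanly with the ReLU-based gate evaluations and the pointer-indexed multiplexers, padding the strictly-layered MLP so that skip wires and the unchanged parts of the state ($\bC$ and $\bfx$) are transported correctly, and checking that no partial sum ever exceeds $\mathrm{poly}(n)$ so that the $O(\log n)$ precision budget is respected throughout. There is no essential hardness here, and that is precisely the content of the separation: acting optimally in these MDPs reduces to mechanically evaluating a circuit node by node --- a shallow, local computation --- whereas, by Theorem~\ref{thm:CVPMDP}, reporting the resulting optimal value $Q_1^*$ is $\mathsf{P}$-complete.
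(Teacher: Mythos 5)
Your proposal is correct, but it takes a genuinely different route from the paper's. The paper proves Theorem~\ref{thm:mlp:p:positive} by direct construction: it rebuilds each of $r$, $\cP$, and $\pi^*$ as an explicit ReLU network, invoking the looking-up-table lemma (Lemma~\ref{lemma:look_up_MLP}) for every subcomputation whose output depends on a tuple ranging over polynomially many values (fetching $\bC[a]$ and the values of its input nodes, evaluating the gate, computing the indicator $\Upsilon[i]$ of membership in $\cG(s)$), and then realizing $\min\cG(s)$ by the ReLU prefix-sum $\pi^*(s)=\mathrm{ReLU}\bigl(\sum_{i}\mathrm{ReLU}\bigl(1-\sum_{j<i}\Upsilon[j]\bigr)\bigr)$. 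You instead prove one generic simulation --- every polynomial-size, constant-depth Boolean circuit is computed exactly by a constant-depth, polynomial-width, log-precision ReLU MLP --- and then simply cite the $\mathsf{AC}^0$ memberships already established in Theorems~\ref{thm:CVPMDP} and~\ref{thm:p:mdp}. Your gate gadgets ($\mathrm{ReLU}(\sum_i x_i-(m-1))$ for AND, $1-\mathrm{ReLU}(1-\sum_i x_i)$ for OR, identity carrying of skip wires) and the precision accounting are sound; they are essentially the paper's Lemma~\ref{lemma:MLP_TC_operation} upgraded to a whole-circuit statement. The one interface that needs more care than ``up to a fixed linear map'' is between the integer-valued embeddings of Appendix~\ref{appendix:embedding} and the Boolean inputs your circuits expect: integer-to-binary decoding is not linear, though each coordinate ranges over only polynomially many values, so it is again a constant-depth, polynomial-width lookup (this is precisely where Lemma~\ref{lemma:look_up_MLP} would re-enter your argument); the reverse direction, binary-to-integer, is linear. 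With that patch your argument is complete, and it is arguably more economical than the paper's, since the same simulation lemma also yields Theorem~\ref{thm:mlp:np:positive} for free; what the paper's hand-built construction buys in exchange is that it works directly on the real-valued embeddings and makes the constant depth and polynomial width of each module explicit.
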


\begin{proof}[Proof of Theorem~\ref{thm:mlp:p:positive}] 
    See Appendix \ref{appendix:pf:thm:mlp:p:positive} for a detailed proof.
\end{proof}

\begin{theorem} \label{thm:mlp:p:negative}
 Assuming that $\mathsf{TC}^0\neq \mathsf{P}$, the optimal value function $Q_1^*$ of $n$-dimensional CVP MDP and $\mathsf{P}$ MDP defined with respect to a $\mathsf{P}$-complete language $\cL$ cannot be represented by an MLP with constant layers, polynomial hidden dimension (in $n$), and ReLU as the activation function.
\end{theorem}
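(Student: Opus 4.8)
The plan is to prove the contrapositive: if a constant-layer, polynomial-hidden-dimension, log-precision MLP could represent $Q_1^*$ for CVP MDPs (or $\mathsf{P}$ MDPs), then $\mathsf{TC}^0 = \mathsf{P}$, contradicting the assumption. The backbone is a known simulation result (to be invoked from the appendix background, cf. the references around Remark~\ref{remark:mlp:tc0}): a log-precision MLP with constant depth and polynomial width can be simulated by a family of $\mathsf{TC}^0$ circuits. Concretely, each ReLU-affine layer over $O(\log n)$-bit fixed-point numbers is computable by iterated addition and comparison, both of which lie in $\mathsf{TC}^0$; composing a constant number of such layers keeps us in $\mathsf{TC}^0$. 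So the existence of such an MLP for $Q_1^*$ would place the function $(\bfe_s,\bfe_a)\mapsto Q_1^*(s,a)$ in (the functional analogue of) $\mathsf{TC}^0$.

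Next I would connect $Q_1^*$ back to the $\mathsf{P}$-complete decision problem. By Theorem~\ref{thm:CVPMDP} (resp.\ Theorem~\ref{thm:p:mdp}), computing $Q_1^*$ on the CVP MDP (resp.\ $\mathsf{P}$ MDP) is $\mathsf{P}$-complete under log-space (hence also $\mathsf{AC}^0$/$\mathsf{TC}^0$) reductions. The key structural observation, already implicit in the construction, is that from the initial state $(\bC,\bfv_{\mathrm{unknown}})$ the optimal play is to evaluate the circuit $\bC$ in a valid topological order, so $Q_1^*\big((\bC,\bfv_{\mathrm{unknown}}),a\big)$ (for an action $a$ that is consistent with an optimal evaluation schedule) equals $1$ exactly when $\bC$ outputs $1$, and $0$ otherwise; thus thresholding this single real output at $1/2$ decides the circuit value problem. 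Therefore CVP (a $\mathsf{P}$-complete language) would be decided by a constant-depth poly-size MLP, hence by a $\mathsf{TC}^0$ circuit family, giving $\mathsf{CVP}\in\mathsf{TC}^0$ and, by $\mathsf{P}$-completeness under $\mathsf{TC}^0$-reductions, $\mathsf{P}\subseteq\mathsf{TC}^0$, i.e.\ $\mathsf{TC}^0=\mathsf{P}$ — the desired contradiction. The argument for general $\mathsf{P}$ MDPs is identical, replacing CVP by the fixed $\mathsf{P}$-complete language $\cL$ and invoking Theorem~\ref{thm:p:mdp}.

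I would organize the write-up as: (i) a lemma stating that any function computed by a constant-layer, polynomial-width, log-precision ReLU MLP lies in $\mathsf{TC}^0$ (citing the iterated-addition/threshold simulation, with the embedding-to-bitstring encoding from Appendix~\ref{appendix:embedding} so the input format matches); (ii) a lemma extracting from Theorem~\ref{thm:CVPMDP}/\ref{thm:p:mdp} that a $\mathsf{TC}^0$-computable $Q_1^*$ yields a $\mathsf{TC}^0$ decision procedure for the associated $\mathsf{P}$-complete language; (iii) the one-line contradiction with $\mathsf{TC}^0\neq\mathsf{P}$.

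The main obstacle is step (i): one must be careful that the MLP's real-valued arithmetic, once restricted to $O(\log n)$-bit precision, is faithfully captured by $\mathsf{TC}^0$ circuits — this requires that weights/biases themselves are representable in $O(\log n)$ bits (or at least that the relevant linear combinations are, after the fixed-point truncation that defines log precision), that iterated addition of polynomially many $O(\log n)$-bit numbers is in $\mathsf{TC}^0$ (a classical fact), and that the ReLU/comparison operations are $\mathsf{TC}^0$. A secondary subtlety is ensuring that "representing $Q_1^*$" in the sense of Table~\ref{table:MLP} — exact equality on all valid embedded inputs — indeed lets us recover the Boolean circuit-value bit; here the crucial point is that $Q_1^*$ takes the clean values $\{0,1\}$ at the initial state for an appropriate choice of action, so there is slack for the threshold and no issue with approximation error. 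Once these two points are pinned down, the reduction chain is routine.
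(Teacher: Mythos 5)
Your proposal is correct and follows essentially the same route as the paper: the paper likewise upper-bounds constant-layer, polynomial-width, log-precision ReLU MLPs by $\mathsf{L}$-uniform $\mathsf{TC}^0$ (Lemma~\ref{lemma:MLP_up_bound}, proved by citing a known Transformer simulation rather than the iterated-addition argument you sketch) and then combines this with the $\mathsf{P}$-completeness of computing $Q_1^*$ from Theorems~\ref{thm:CVPMDP} and~\ref{thm:p:mdp} to derive the contradiction with $\mathsf{TC}^0\neq\mathsf{P}$. Your additional care about the encoding of inputs and the clean $\{0,1\}$ values of $Q_1^*$ at the initial state is a reasonable elaboration of details the paper leaves implicit, but it does not change the argument.
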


\begin{proof}[Proof of Theorem~\ref{thm:mlp:p:negative}]
    See Appendix \ref{appendix:pf:thm:mlp:p:negative} for a detailed proof.
\end{proof}

Combining Theorems~\ref{thm:mlp:np:positive}, \ref{thm:mlp:np:negative}, \ref{thm:mlp:p:positive}, and \ref{thm:mlp:p:negative}, we uncover a potential hierarchy between model-based RL, policy-based RL, and value-based RL, reaffirming the results in Sections~\ref{sec:np} and \ref{sec:p} from the perspective of MLP expressiveness. To our best knowledge, this is the first result on representation complexity in RL from the perspective of MLP expressiveness, aligning more closely with modern deep RL and providing valuable insights for practice.

\begin{remark} \label{remark:mlp:tc0}
    The results presented in this section underscore the importance of establishing $\mathsf{NP}$-completeness and $\mathsf{P}$-completeness in Sections~\ref{sec:np} and \ref{sec:p}. Specifically, constant-layer MLPs with polynomial hidden dimension are unable to simulate $\mathsf{P}$-complete problems and $\mathsf{NP}$-complete problems under the assumptions that $\mathsf{TC}^0 \neq \mathsf{P}$ and $\mathsf{TC}^0 \neq \mathsf{NP}$, which are widely believed to be impossible. In contrast, it is noteworthy that MLPs with constant layers and polynnomial hidden dimension can represent basic operations within $\mathsf{TC}^0$ (Lemma~\ref{lemma:MLP_TC_operation}), such as the Majority function. Consequently, the model, optimal policy, and optimal value function of ``Majority MDPs'' presented in \citet{zhu2023representation} can be represented by constant-layer MLPs with polynomial size. Hence, the class of MDPs presented in \citet{zhu2023representation} cannot demonstrate the representation complexity hierarchy from the lens of MLP expressiveness.
\end{remark}

\paragraph{Applicability and Extensions of Our Theory.} As mentioned in the introduction, our representation results have implications for the \textbf{statistical complexity} in RL, as detailed in Appendix~\ref{appendix:statistical:complexity}.  
Although we have shown that the revealed hierarchy of representation complexity holds for a wide range of MDPs in theory, examining its broader applicability is essential. We discuss \textbf{more general theoretical insights} and \textbf{extension to Transformer \citep{vaswani2017attention} architecture} to Appendices~\ref{appendix:general:result} and~\ref{appendix:transformer}.

\section{Experiments}

\begin{figure}[htp]
\vspace{-5pt}
    \subfigure[]{
    \begin{minipage}[t]{0.32\linewidth}
        \centering  \includegraphics[width=0.98\textwidth]{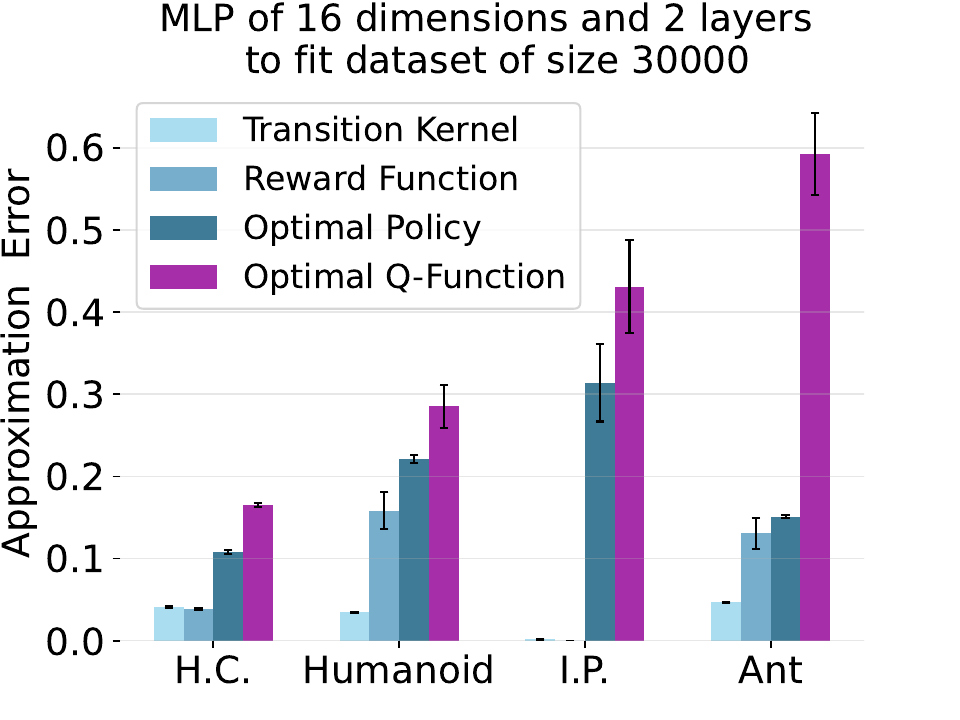}
    \end{minipage}
    }
    \subfigure[]{
    \begin{minipage}[t]{0.32\linewidth}
        \centering   \includegraphics[width=0.98\textwidth]{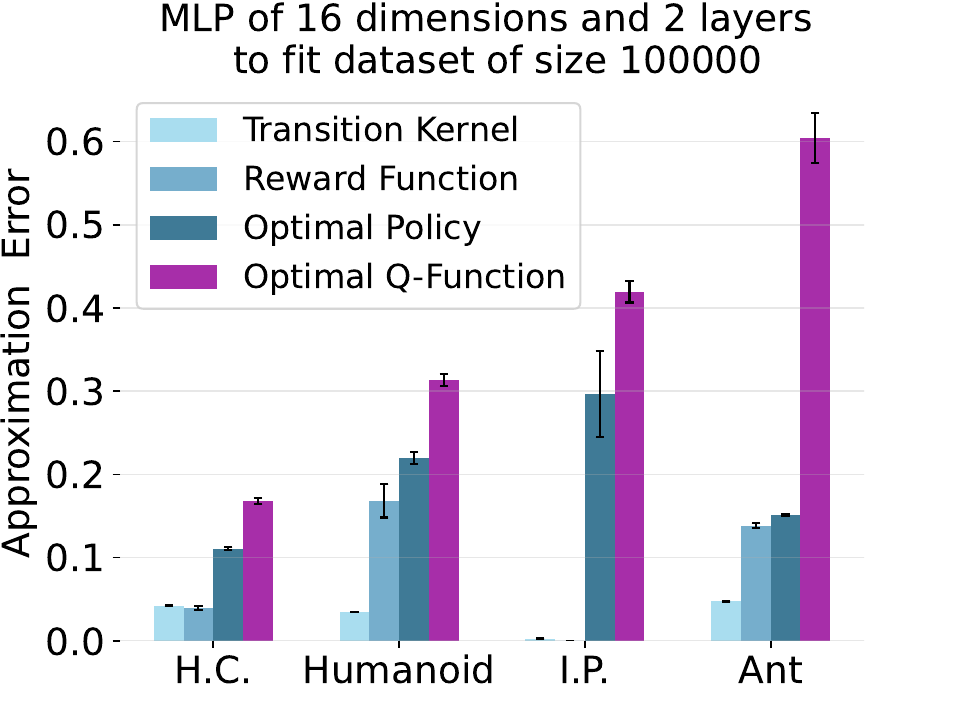}
    \end{minipage}
    }
    \subfigure[]{
    \begin{minipage}[t]{0.32\linewidth}
        \centering  \includegraphics[width=0.98\textwidth]{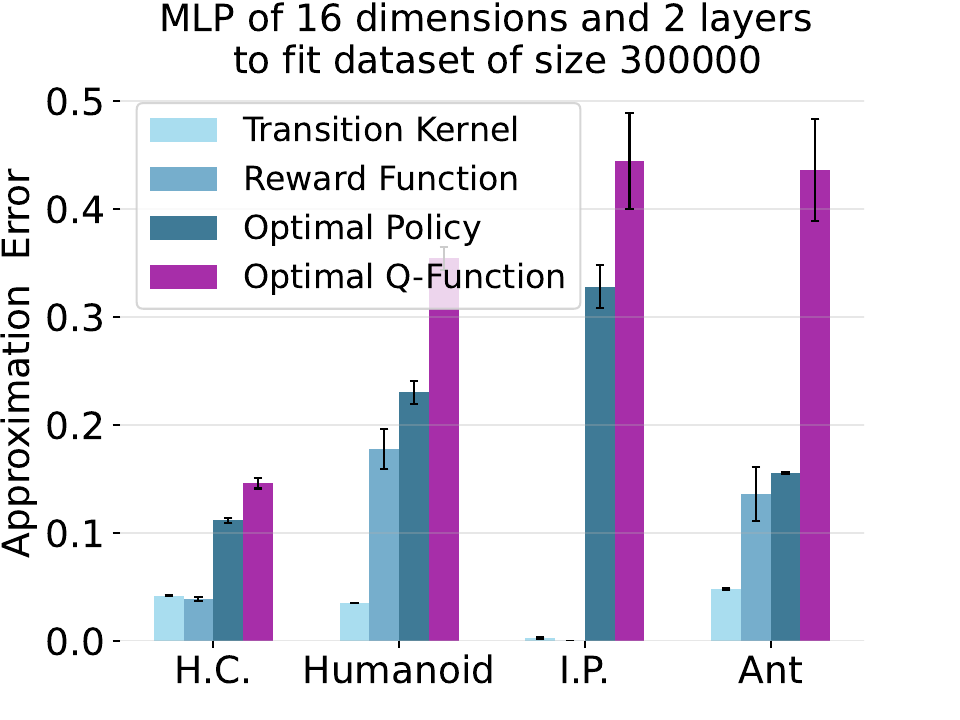}
    \end{minipage}
    }
    \vspace{-5pt}
    \subfigure[]{
    \begin{minipage}[t]{0.32\linewidth}
        \centering   \includegraphics[width=0.98\textwidth]{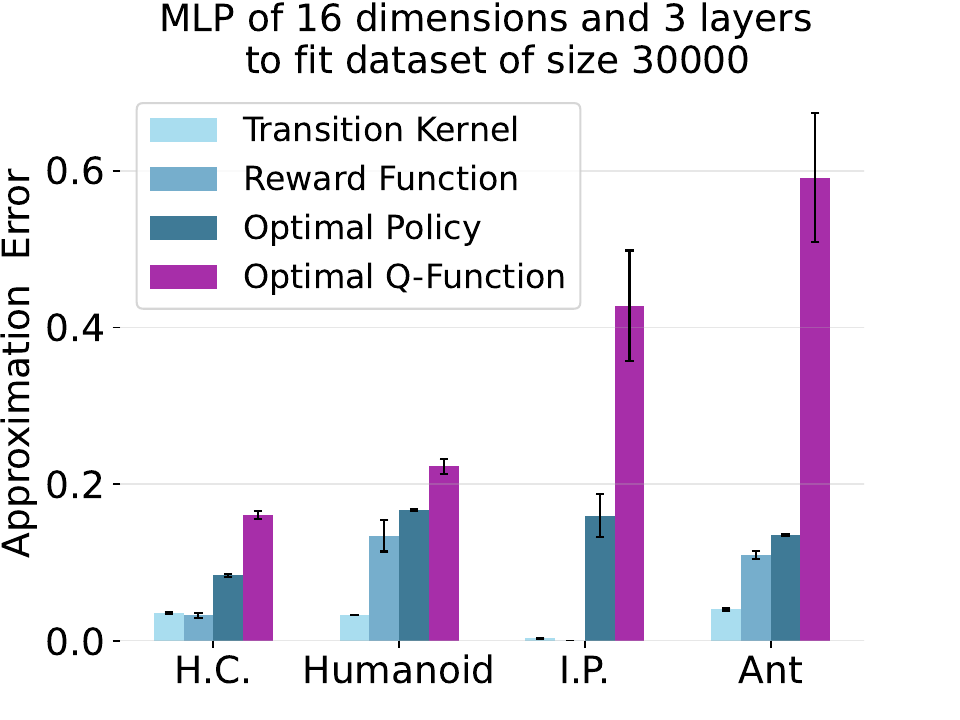}
    \end{minipage}
    }
    \subfigure[]{
    \begin{minipage}[t]{0.32\linewidth}
        \centering  \includegraphics[width=0.98\textwidth]{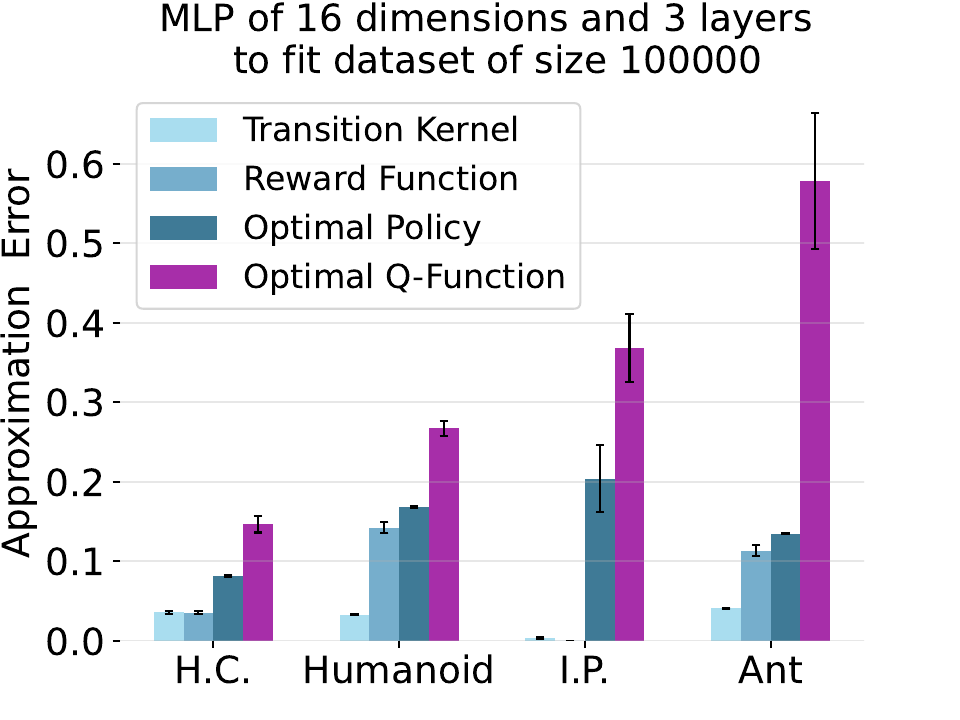}
    \end{minipage}%
    }
    \subfigure[]{
    \begin{minipage}[t]{0.32\linewidth}
        \centering   \includegraphics[width=0.98\textwidth]{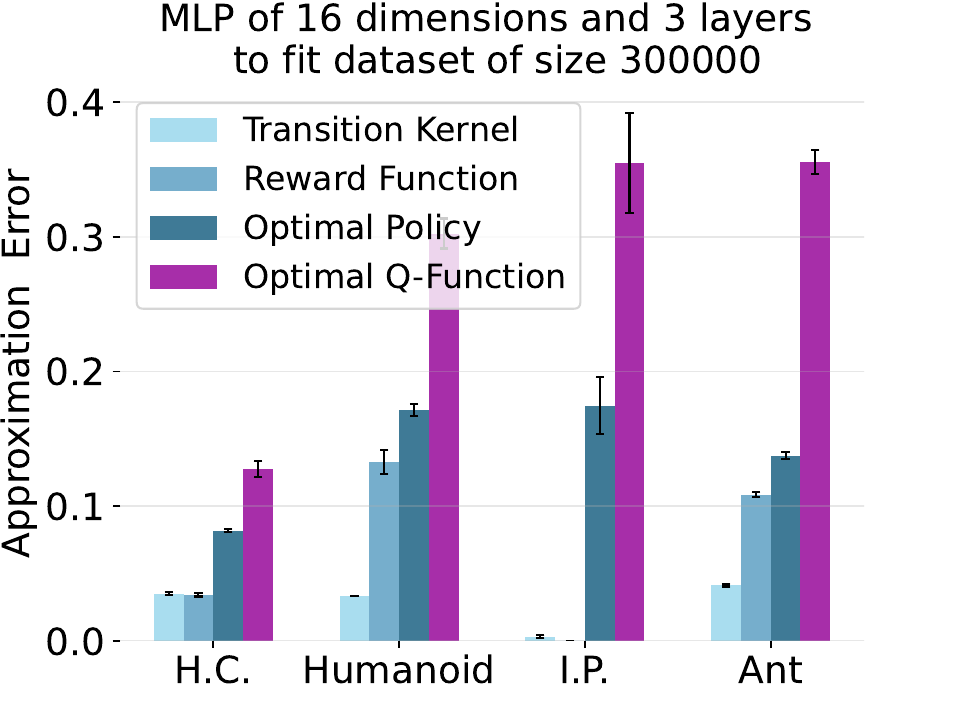}
    \end{minipage}
    }
    \vspace{-5pt}
    \subfigure[]{
    \begin{minipage}[t]{0.32\linewidth}
        \centering  \includegraphics[width=0.98\textwidth]{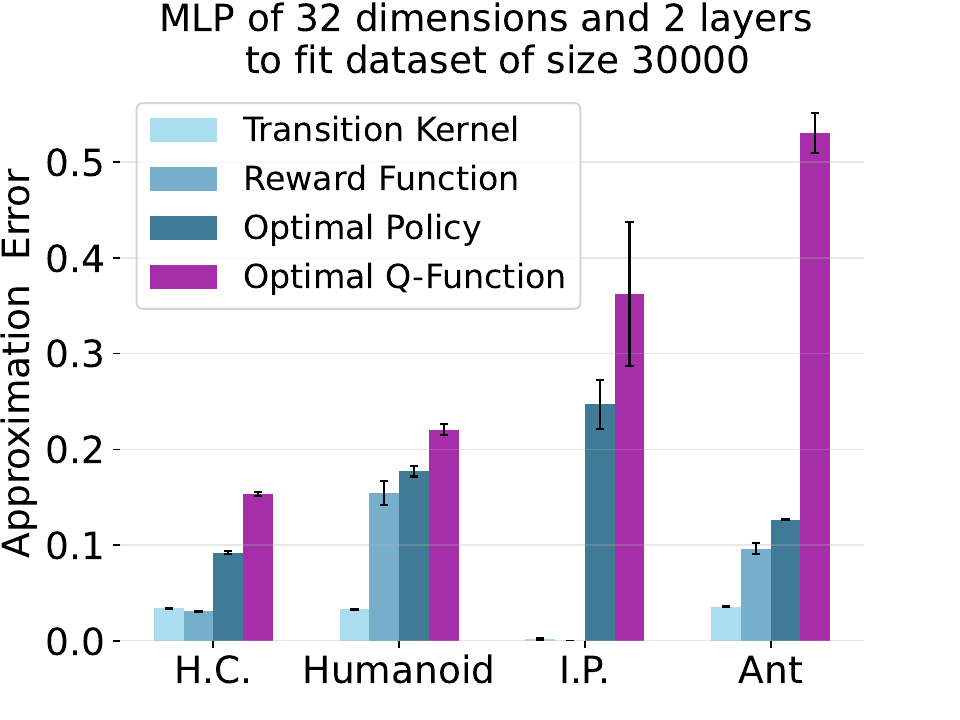}
    \end{minipage}
    }
    \subfigure[]{
    \begin{minipage}[t]{0.32\linewidth}
        \centering   \includegraphics[width=0.98\textwidth]{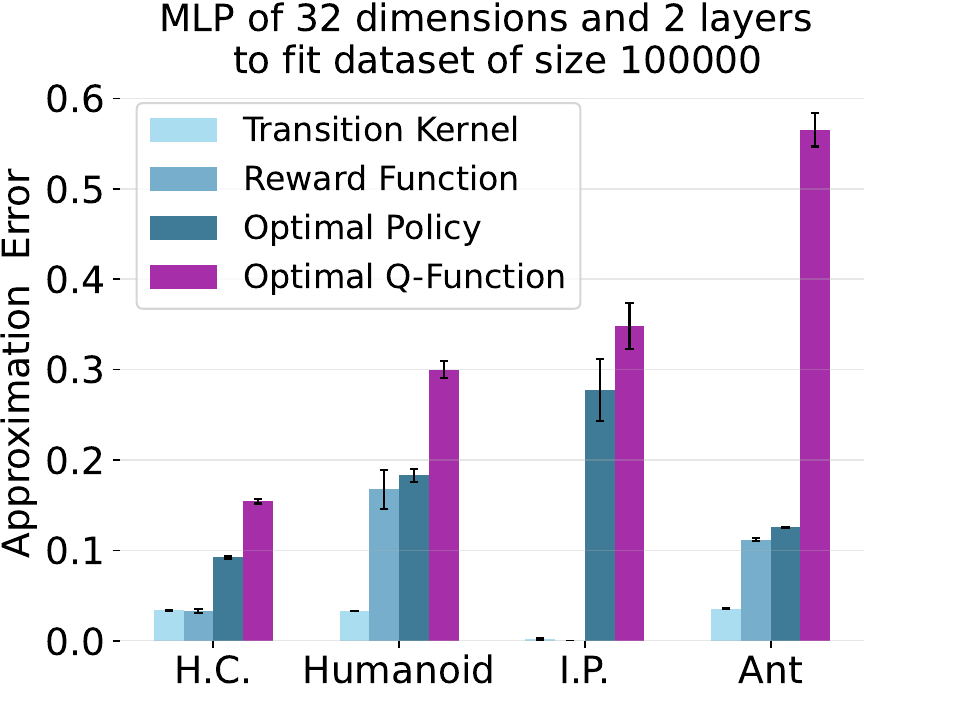}
    \end{minipage}
    }
    \subfigure[]{
    \begin{minipage}[t]{0.32\linewidth}
        \centering  \includegraphics[width=0.98\textwidth]{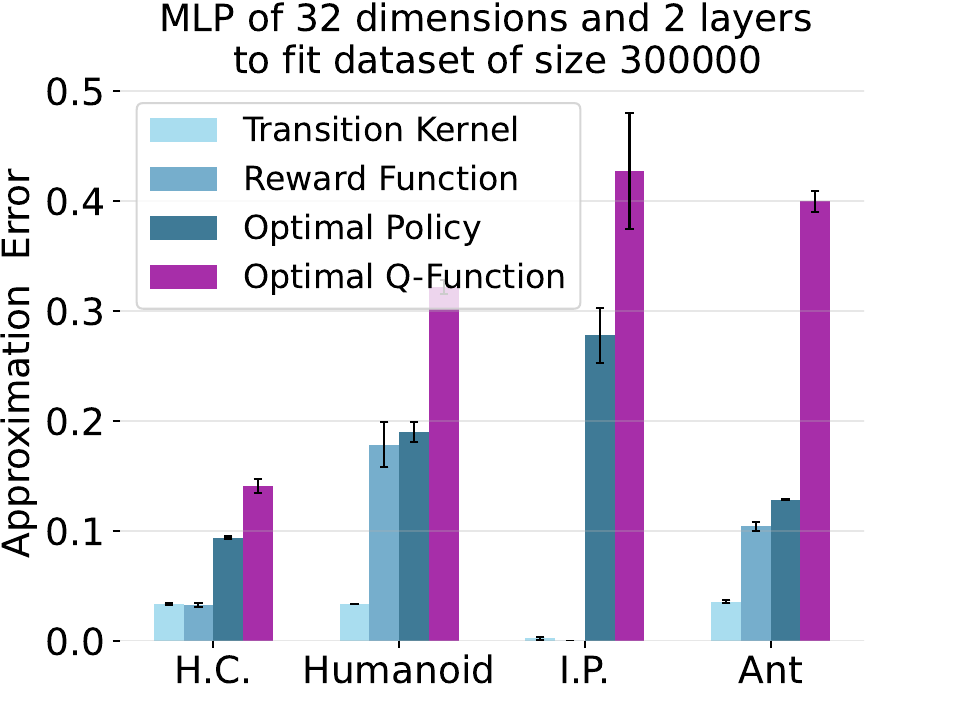}
    \end{minipage}
    }
    \vspace{-5pt}
    \subfigure[]{
    \begin{minipage}[t]{0.32\linewidth}
        \centering   \includegraphics[width=0.98\textwidth]{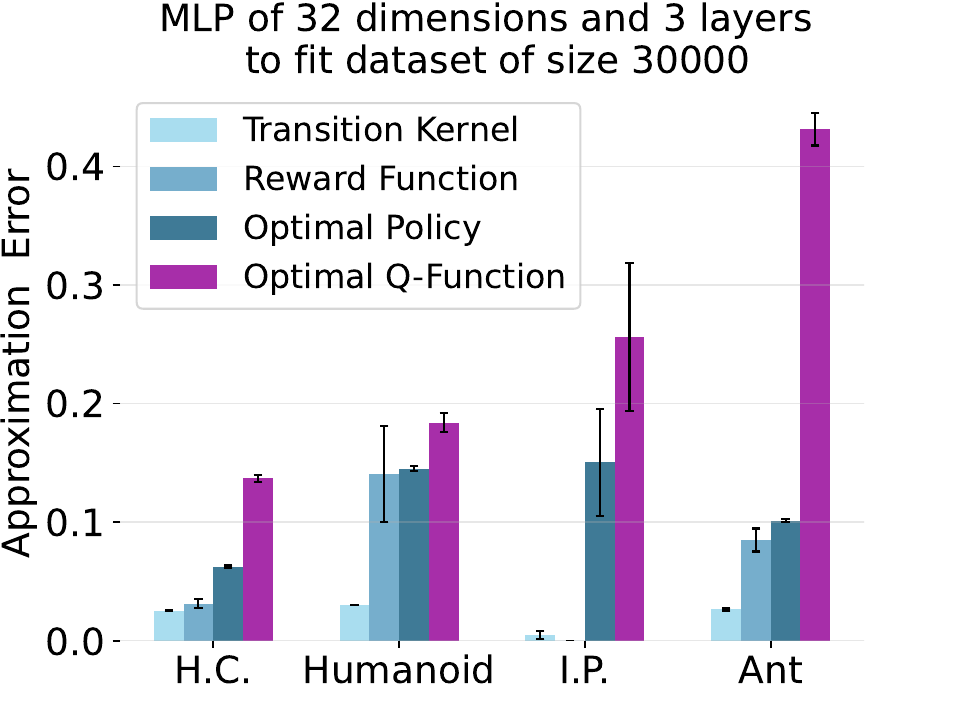}
    \end{minipage}
    }
    \subfigure[]{
    \begin{minipage}[t]{0.32\linewidth}
        \centering  \includegraphics[width=0.98\textwidth]{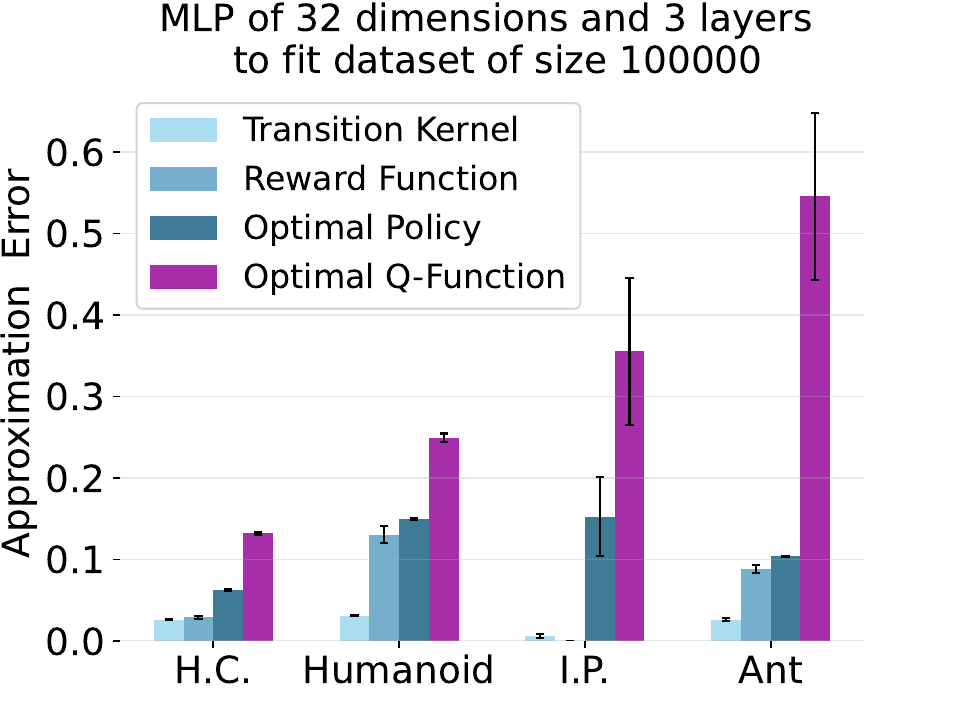}
    \end{minipage}
    }
    \subfigure[]{
    \begin{minipage}[t]{0.32\linewidth}
        \centering   \includegraphics[width=0.98\textwidth]{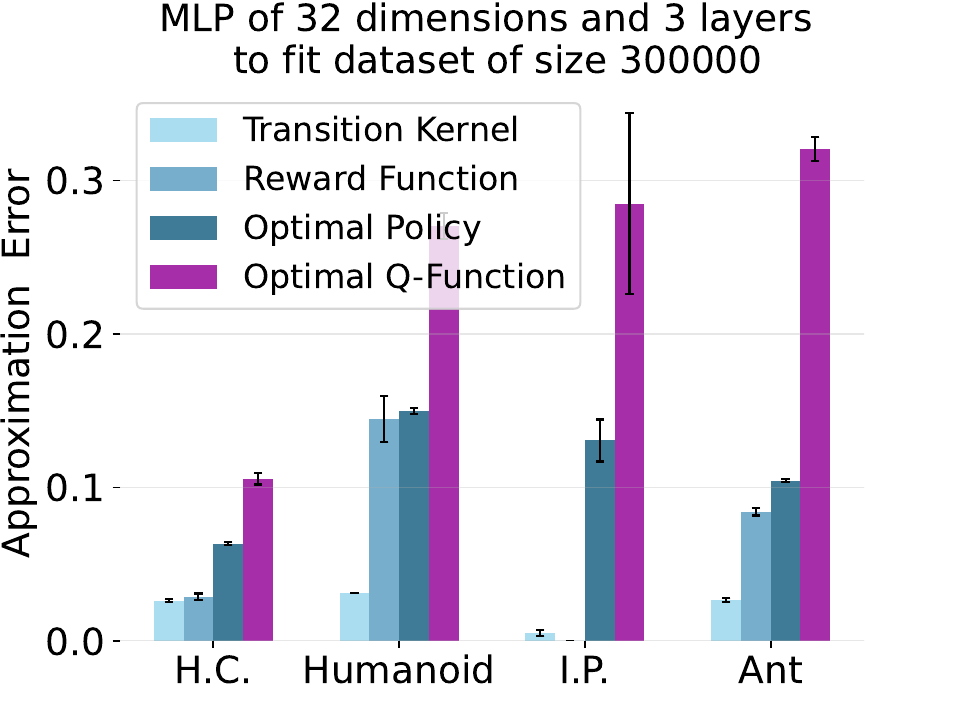}
    \end{minipage}
    }
    \caption{The approximation errors computed by employing MLPs with varying depths $d$ and widths $w$ to approximate the transition kernel, reward function, optimal policy, and optimal Q-function in four MuJoCo environments. In each subfigure, the title indicates the configuration including hidden dimensions,  number of layers, and dataset size. The x-axis lists the four MuJoCo environments, where H.C. represents HalfCheetah and I.P. represents InvertedPendulum. The y-axis represents the approximation error defined in \eqref{eq:objecctive}.}
    \label{fig:experiment_all_result}
\end{figure}

In this section, we empirically investigate the representation complexity of model-based RL, policy-based RL, and value-based RL and validate our theory with a comprehensive set of experiments on various common simulated environments.\footnote{Our code is available at \url{https://github.com/GuhFeng/RL-Representation-Complexity}.} Following \cite{zhu2023representation}, fixing the depth $d$ and the width $w$, and denoting the class of MLPs with $d$ layers and $w$ hidden units (with input and output sizes adjusted to the context) as $\cF$, we define the relative approximation errors as follows:
\vspace{-3pt}
\begin{equation}
\label{eq:objecctive}
    \begin{aligned}
    e_\text{transition}  =\min_{f\in\cF}\frac{\mathbb E [\|f(s,a)-\cP(s,a)\|^2]^{\frac{1}{2}}}{\mathbb E [\|\cP(s,a)-\mathbb E [\cP(s,a)]\|^2]^{\frac{1}{2}}}, \quad 
    e_\text{reward} =\min_{f\in\cF}\frac{\mathbb E [\big(f(s,a)-r(s,a)\big)^2]^{\frac{1}{2}}}{\mathbb E [(r(s,a)-\mathbb E [r(s,a)])^2]^{\frac{1}{2}}},\\
e_\text{policy}=\min_{f\in\cF}\frac{\mathbb E [\|f(s)-\pi^*(s)\|^2]^{\frac{1}{2}}}{\mathbb E [\|\pi^*(s)-\mathbb E [\pi^*(s)]\|^2]^{\frac{1}{2}}}, \quad
e_\text{value}=\min_{f\in\cF}\frac{\mathbb E [\big(f(s,a)-Q^*(s,a)\big)^2]^{\frac{1}{2}}}{\mathbb E [(Q^*(s,a)-\mathbb E [Q^*(s,a)])^2]^{\frac{1}{2}}},
\end{aligned}
\end{equation}
\vspace{-2pt}
where the expectation is taken with respect to the distribution induced by the optimal policy, and the square root of mean squared errors is divided by the standard deviation to ensure that the scales of different errors match. Hence, the quantities $e_\text{transition},\ e_\text{reward},\ e_\text{policy}, \text{ and } e_\text{value}$ defined in \eqref{eq:objecctive} characterize the difficulty for the MLP to approximate the transition kernel, the reward function, the optimal policy, and the optimal value function, respectively.


We conduct experiments on four MuJoCo Gym environments \citep{brockman2016openai}:  HalfCheetah-v4, Humanoid-v4, InvertedPendulum-v4, and Ant-v4. To calculate the approximation errors defined in \eqref{eq:objecctive}, we employ the TD3 algorithm \citep{fujimoto2018addressing} to train a RL agent and utilize this agent to generate a dataset of size $\{30000, 100000, 300000\}$. The parameters used in TD3 are provided in Table~\ref{tab:TD3_model} in Appendix~\ref{appendix:exp}. Subsequently, we use MLPs of different sizes (see Table~\ref{tab:MLP_size} in Appendix~\ref{appendix:exp}) to approximate corresponding functions and calculate the approximation error. We run the experiment for each MLP configuration 5 times with 5 random seeds and report the means and standard deviations. All the experiments are conducted on an NVIDIA GeForce RTX 3090 Ti GPU. Figure~\ref{fig:experiment_all_result} illustrates the results. In all experiments, the approximation error of the optimal value function is greater than the error of the optimal policy. And both of them are greater than the approximation error of the transition kernel and reward function. These results consistently indicate that the approximation errors of the optimal Q-function surpass those of the optimal policy, which, in turn, exceed those of the transition kernel and reward functions, across all environments and configurations. These empirical results validate our representation hierarchy revealed from the theoretical perspective.


\section{Conclusions}
\vspace{-5pt}
This paper studies three RL paradigms --- model-based RL, policy-based RL, and value-based RL --- from the perspective of representation complexity. Through leveraging computational complexity (including time complexity and circuit complexity) and the expressiveness of MLPs as representation complexity metrics, we unveil a potential hierarchy of representation complexity among different RL paradigms. Our theoretical framework posits that representing the model constitutes the most straightforward task, succeeded by the optimal policy, while representing the optimal value function poses the most intricate challenge. Our work contributes to a deeper understanding of the nuanced complexities inherent in various RL paradigms, providing valuable insights for the advancement of RL methodologies.

\section*{Acknowledgements}
HZ would like to thank Rui Yang and Tong Zhang for helpful discussions and valuable feedback.

\newpage
\bibliographystyle{ims}
\bibliography{graphbib}

\newpage
\appendix

\section{Additional Background Knowledge} \label{appendix:backgroud}

We will introduce some additional background knowledge in this section, including the Turing Machine, the uniformity of circuits, and log precision.

\vspace{4pt}
\noindent \textbf{Definition of Turing Machine.} We give the formal definition of the Turing Machine as follows:
\begin{definition}[Turing Machine] \label{def:turning:machine}
    A \emph{deterministic Turing Machine (TM)} $M$ is described by a tuple $(\Gamma, \QQ, \TT)$, where $\Gamma$ is the tape \emph{alphabet} containing the ``blank'' symbol, ``start'' symbol, and the numbers $0$ and $1$; $\QQ$ is a finite, non-empty set of states, including a start state $q_{\mathrm{start}}$ and a halting state $q_{\mathrm{halting}}$; and $\TT: \QQ \times \Gamma^k \mapsto \QQ \times \Gamma^{k-1} \times \{ \mathrm{Left}, \mathrm{Stay}, \mathrm{Right}\}$, where $k \ge 2$, is the \emph{transition function}, describing the rules $M$ use in each step. The only difference between a \emph{nondeterministic Turing Machine (NDTM)} and a deterministic Turing Machine is that an NDTM has two transition functions $\TT_0$ and $\TT_1$, and a special state $q_{\mathrm{accept}}$. At each step, the NDTM can choose one of two transitions to apply, and accept the input if there \emph{exists} some sequence of these choices making the NDTM reach $q_{\mathrm{accept}}$. A \emph{configuration} of (deterministic or nondeterministic) Turing Machine $M$ consists of the contents of all nonblank entries on the tapes of $M$, the machine's current state, and the pointer on the tapes.
\end{definition}

\vspace{4pt}
\noindent\textbf{Uniformity of Circuits.} Given a circuit family $\cC$, where $\bfc_i\in\cC$ is the circuit takes $n$ bits as input, the uniformity condition is often imposed on the circuit family, requiring the existence of some possibly resource-bounded Turing machine that, on input $n$, produces a description of the individual circuit $\bfc_n$. When this Turing machine has a running time polynomial in $n$, the circuit family $\cC$ is said to be $\mathsf{P}$-uniform. And when this Turing machine has a space logarithmic in $n$, the circuit family $\cC$ is said to be $\mathsf{L}$-uniform.

\vspace{4pt}
\noindent\textbf{Log Precision. } In this work, we focus on MLPs, of which neuron values are restricted to be floating-point numbers of logarithmic (in the input dimension $n$) precision, and all computations operated on floating-point numbers will be finally truncated, similar to how a computer processes real numbers. Specifically, the log-precision assumption means that we can use $O(\log(n))$ bits to represent a real number, where the dimension of the input sequence is bounded by $n$. An important property is that it can represent all real numbers of magnitude $O(\mathrm{poly}(n))$ within $O(\mathrm{poly}(1/n))$ truncation error.

\section{More Discussions}
\label{app:more_discuss}
\subsection{Connections to Statistical Complexity} \label{appendix:statistical:complexity}

To elaborate further on the connections to statistical/sample complexity, the previous sample complexity (in both online and offline RL) of finding an $\varepsilon$-optimal policy is typically $\mathrm{poly}(d, H, 1/\varepsilon) \cdot \log|\mathcal{H}|$, where $d$ represents the complexity measure in online RL (e.g., DEC in \citet{foster2021statistical} and GEC in \citet{zhong2022gec}) or the coverage coefficient in offline RL (e.g., \citet{xie2021bellman} and \citet{uehara2021pessimistic}), $H$ denotes the horizon, and $\mathcal{H}$ stands for the model/policy/value hypothesis. According to our representation complexity hierarchy theory, the model-based hypothesis could be simpler since the ground truth model is easy to represent, resulting in a smaller $\log|\mathcal{H}|$. This provides an explanation of why model-based RL typically enjoys better sample efficiency than model-free RL. Furthermore, this connection highlights the importance of considering representation complexity in the design of sample-efficient RL algorithms.

We also remark that the planning error of computing the optimal policy and value function using the learned model is an \textbf{optimization error}, and is a parallel direction of statistical error (sample efficiency). In summary, we consider the \textbf{approximation error (representation complexity of the ground truth model/policy/value)} in our work and provide an implication for the \textbf{statistical error (sample efficiency of learning algorithms)}. We believe that exploring the twisted approximation error, optimization error, and statistical error, and providing a deeper comparison between model-based RL and model-free RL would be an interesting direction, and we will endeavor to explore this in our future work.

\subsection{Generality of Representation Complexity Hierarchy}
\label{appendix:general:result}

First, we wish to underscore that our identified representation complexity hierarchy holds in a general way. Theoretically, our proposed MDPs can encompass a wide range of problems, as any $\mathsf{NP}$ or $\mathsf{P}$ problems can be encoded within their structure. More crucially, our thorough experiments in diverse simulated settings support the representation complexity hierarchy we have uncovered. In fact, we have a generalized result establishing a hierarchy between policy-based RL and value-based RL, as stated in the following proposition:

\begin{proposition}
Given a Markov Decision Process (MDP) $\mathcal{M}=(\mathcal{S}, \mathcal{A}, H, \mathcal{P}, r)$, where $\mathcal{S}\subset\{0,1\}^n$ and $|\mathcal{A}|=O(\mathsf{poly}(n))$, the circuit complexity of the optimal value function will not fall below the optimal policy under the $\mathsf{TC}^0$ reduction.
\end{proposition}

\begin{proof}
Note that given the optimal value function $Q^*$, the optimal policy $\pi^*$ can be represented as $\pi^*(s)=\operatorname{argmax}_{a\in\mathcal{A}} Q(s,a), \text{for any } s\in\mathcal{S}$. Therefore, we represent the optimal policy as the following Boolean circuits:
$$
\pi^*(s)=\bigvee_{a\in\mathcal{A}}\Big(a\wedge\big(\bigwedge_{a^\prime\in\mathcal{A}}\mathbbm{1}[Q^*(s,a)\geq Q^*(s,a^\prime)]\big)\Big).
$$
Therefore, the circuit complexity of the optimal value function will not fall below the optimal policy under the $\mathsf{TC}^0$ reduction.
\end{proof}

However, our representation complexity hierarchy is not valid for all MDPs.
For instance, in MDPs characterized by complex transition kernels and zero reward functions, the model's complexity surpasses that of the optimal policy and value function. However, these additional MDP classes may not be typical in practice and could be considered pathological examples from a theoretical standpoint. We leave the fully theoretical characterizing of representation hierarchy between model-based RL, policy-based RL, and value-based RL as an open problem.  For instance, it could be valuable to develop a methodology for classifying MDPs into groups and assigning a complexity ranking to each group within our representation framework.

\subsection{Extension to Transformer Architectures} \label{appendix:transformer}
Our theoretical results can also naturally extend to the Transformer architectures. First, we formulate the Transformer architectures to represent the model, policy, and value function. We encode each state $s$ and action $a$ into a sequence $\bfs_{s}$ and $\bfs_{a}$, the detailed construction of the MDPs in this paper are listed as follows:

\begin{itemize}[leftmargin=*]
    \item \textbf{Sequences for the 3-SAT MDP.} The state of the $n$-dimensional 3-SAT MDP is denoted as $s = (\psi, \bfv, k)$. Here, $\psi$ is represented by a sequence of literals of length $3n$, and $\bfv$ by a sequence of length $n$. By concatenating these sequences, we represent the state $s = (\psi, \bfv, k)$ using a sequence $\bS_s$ of length $4n+1$. The action $a \in \{0, 1\}$ is represented by a single token.

    \item \textbf{Sequences for the $\mathsf{NP}$ MDP.} The state of the $n$-dimensional $\mathsf{NP}$ MDP is $s = (c, k)$, where $c$ represents the configuration of a non-deterministic Turing machine (NTM). The configuration $c$ includes the machine's internal state $s_M$, the tape content $t$, and the tape head position $l$. These components are encoded as an integer, a sequence of length $P(n)$, and another integer, respectively. Thus, the state $s = (c, k)$ is represented by a sequence $\bS_s$ of length $P(n) + 3$. The action $a \in \{0, 1\}$ is encoded with a single token.

    \item \textbf{Sequences for the CVP MDP.} The state of the $n$-dimensional CVP MDP is $s = (\bC, \bfv)$, where $\bC$ denotes a circuit and $\bfv$ is a vector. Each node $\bC[i]$ of the circuit is represented by a sequence of 3 tokens. Consequently, a sequence of length $3n$ represents the entire circuit $\bC$, and the state $s = (\bC, \bfv)$ is encoded by a sequence $\bS_s$ of length $4n$. The action $a \in [n]$ is represented by a single token.

    \item \textbf{Sequences for the $\mathsf{P}$ MDP.} The state of the $n$-dimensional $\mathsf{P}$ MDP is $s = (\bfx, \bC, \bfv)$. Assuming the circuit $\bC$ has a size bounded by $P(n)$, we represent the circuit using a sequence of length $3P(n)$. The value vector $\bfv$ is encoded by a sequence of length $P(n)$, and the input string $\bfx$ by a sequence of length $n$. Therefore, the state $s = (\bfx, \bC, \bfv)$ is represented by the concatenated sequence $\bS_s$. The action $a \in [P(n)]$ is encoded as a scalar.
\end{itemize}

With these formulations, we can extend our theoretical results to the Transformer architectures.

\begin{theorem} \label{thm:transformer:np:negative} 
    Assuming that $\mathsf{TC}^0\neq\mathsf{NP}$, the optimal policy $\pi_1^*$ and optimal value function $Q_1^*$ of $n$-dimensional 3-SAT MDP and $\mathsf{NP}$ MDP defined with respect to an $\mathsf{NP}$-complete language $\mathcal{L}$ cannot be represented by a Transformer with constant layers, polynomial hidden dimension (in $n$),and ReLU as the activation function.
\end{theorem} 

\begin{theorem} \label{thm:transformer:p:negative}
    Assuming that $\mathsf{TC}^0\neq\mathsf{P}$, the optimal value function $Q_1^*$ of $n$-dimensional CVP MDP and $\mathsf{P}$ MDP defıned with respect to a $\mathsf{P}$-complete language $\mathcal{L}$ cannot be represented by a Transformer with constant layers, polynomial hidden dimension (in $n$), and ReLU as the activation function.
\end{theorem} 

\begin{proof}[Proof of Theorems \ref{thm:transformer:np:negative} and \ref{thm:transformer:p:negative}]
    According to the previous work \citep{merrill2023parallelism}, a Transformer with logarithmic precision, a fixed number of layers, and a polynomial hidden dimension can be simulated by a  $\mathsf{L}$-uniform $\mathsf{TC}^0$ circuit. On the other hand, the computation of the optimal policy and optimal value function for the 3-SAT MDP and $\mathsf{NP}$ MDP is $\mathsf{NP}$-complete, and the computation of the optimal value function for CVP MDP and $\mathsf{P}$ MDP is $\mathsf{P}$-complete. Therefore, the theorem holds under the assumption of $\mathsf{TC}^0\neq\mathsf{NP}$ and $\mathsf{TC}^0\neq\mathsf{P}$.
\end{proof}  

\begin{theorem} \label{thm:transformer:np} 
    The reward function $r$ and transition kernel $\mathcal{P}$ of $n$-dimensional 3-SAT MDP and NP MDP can be represented by a Transformer with constant layers, polynomial hidden dimension (in $n)$, and ReLU as the activation function.
\end{theorem} 

\begin{theorem} \label{thm:transformer:p} 
    The reward function $r$, transition kernel $\mathcal{P}$, and optimal policy $\pi^*$ of $n$-dimensional CVP MDP and P MDP can be represented by a Transformer with constant layers, polynomial hidden dimension (in $n)$, and ReLU as the activation function.
\end{theorem} 

\begin{proof}[Proof of Theorems \ref{thm:transformer:np} and \ref{thm:transformer:p}]
    According to previous works \citep{feng2024towards}, the Transformer model can aggregate the embeddings of the whole sequence to the embedding of one token with the attention mechanism. According to Theorems \ref{thm:mlp:np:positive} and \ref{thm:mlp:p:positive}, an MLP with constant layers, polynomial hidden dimension (in $n)$ and ReLU activation can represent these functions. Given an input sequence of states, the transformer first uses the attention layer to aggregate the whole sequence into a vector, and then just use the MLP module to calculate the corresponding functions.
\end{proof} 

\section{Proofs for Section~\ref{sec:np}}

\subsection{Proof of Theorem \ref{thm:sat:mdp}} \label{appendix:pf:sat:mdp}

\begin{proof}[Proof of Theorem \ref{thm:sat:mdp}] We investigate the representation complexity of the reward function, transition kernel, optimal value function, and optimal policy in sequence.

    \vspace{4pt}
    \noindent\textbf{Reward Function.} First, we prove the reward function can be implemented by $\mathsf{AC}^0$ circuits. Given a literal $\alpha\in\{u_1,u_2\cdots,u_n,\neg u_1,\neg u_2,\cdots,\neg u_n\}$, we can obtain its value by 
    \$
    \alpha=\Big(\bigvee_{j\in[n]} (\bfv[j]\wedge\mathbbm{1}[\alpha=u_i])\Big)\vee\Big(\bigvee_{j\in[n]} (\neg\bfv[j]\wedge\mathbbm{1}[\alpha=\neg u_i])\Big).
    \$ After substituting the literal by its value under the assignment $\bfv$, we can calculate the 3-CNF Boolean formula $\psi(\bfv)$ by two-layer circuits as its definition. Then the reward can be expressed as
    \$
    r(s,a)=\mathbbm{1}[\psi(\bfv) \wedge k=n+1]+ 0.5 \cdot \mathbbm{1}[k=2n+2],
    \$ 
    which further implies that the reward function can be implemented by $\mathsf{AC}^0$ circuits.
    
    \vspace{4pt}
    \noindent\textbf{Transition Kernel.} Then, we will implement the transition kernel by $\mathsf{AC}^0$ circuits. It is noted that we only need to modify the assignment $\bfv,k$. Given the input $\bfv,k$ and $a$, we have the output as follows:
    \begin{equation}
    \label{eq:sat_trans_circuit}
        \begin{gathered}
            \psi^\prime=\psi, \qquad 
            \bfv^\prime[i]=(\bfv[i]\wedge\mathbbm{1}[i\neq k])\vee (a\wedge\mathbbm{1}[i=k]),\\
            k^\prime=(k+1) \cdot \mathbbm{1}[k\geq 1]+\mathbbm{1}[k=0\wedge a=1]+(n+2)\cdot\mathbbm{1}[a=k=0].
        \end{gathered}
    \end{equation}
        
         It is noted that each element in the output is determined by at most $O(\log n)$ bits. Therefore, according to Lemma~\ref{lemma:bool_func}, each bit of the output can be computed by two-layer circuits of polynomial size, and the overall output can be computed by $\mathsf{AC}^0$ circuits.

     \vspace{4pt}
    \noindent\textbf{Optimal Policy.} We aim to show that, given a state $s=(\psi,\bfv,k)$ as input, the problem of judging whether $\pi_1^*(s)=1$ is $\mathsf{NP}$-complete. We give a two-step proof.
    
    \vspace{4pt}
    \noindent\textbf{Step 1.} We first verify that this problem is in $\mathsf{NP}$. Given a satisfiable assignment $\bfv^*$ as the certificate, we only need to verify the following things to determine whether there exists a sequence of actions to achieve the final reward of $1$:
        \begin{itemize}
            \item The assignment is satisfiable;
            \item When $k\in[n]$ or $k=0$, $\bfv[i]=\bfv^*[i],i\in[k]$ and $\bfv^*[k+1]=1$.
        \end{itemize}
        Notably, when exist such certificates, action $a=1$ yields the reward of $1$ and is consequently optimal. Conversely, when such certificates are absent, action $a=1$ leads to the reward of $0$, and in this case, $a=0$ is the optimal action. Moreover, when $k=n+1$ or $k=n+2$, selecting $a=1$ is always optimal. Consequently, given the certificates, we can verify whether $\pi_1^*(\psi,\mathbf{0}_n,0)=1$.
        
    \vspace{4pt}
    \noindent\textbf{Step 2.} Meanwhile, according to the well-known Cook-Levin theorem (Lemma~\ref{thm:cook}), the 3-SAT problem is $\mathsf{NP}$-complete. Thus, our objective is to provide a polynomial time reduction from the 3-SAT problem to the problem of computing the optimal policy of 3-SAT MDP. Given a Boolean formula of length $n$, the number of variables is at most $n$. Then, we can pad several meaningless clauses such as $(u_1\vee\neg u_1\vee u_1)$ to obtain the 3-CNF Boolean formula $\psi^\prime$ with $n$ clauses. Then the Boolean formula $\psi$ is satisfiable if and only if $\pi_1^*(\psi^\prime,\mathbf{0}_n,0)=1$. This provides a desired polynomial time reduction.

    \vspace{4pt}
    \noindent\textbf{Optimal Value Function.} To show the $\mathsf{NP}$-completeness of computing the optimal value function, we formulate the decision version of this problem: given a state $s=(\psi,\bfv,k)$, an action $a$ and a number $\gamma$ as input, and the goal is to determine whether $Q_1^*(s,a)>\gamma$. According to the definition of $\mathsf{NP}$-completeness, we need to prove that this problem belongs to the complexity class $\mathsf{NP}$ and then provide a polynomial-time reduction from a recognized $\mathsf{NP}$-complete problem to this problem. These constitute the objectives of the subsequent two steps.
    
    \vspace{4pt}
    \noindent\textbf{Step 1.} We first verify that this problem is in $\mathsf{NP}$. Given the input state $s=(\psi,\bfv,k)$, input action $a$, and input real number $\gamma$, we use the assignments $\bfv^\prime$ of $\psi$ as certificates. When $\gamma \ge 1$, the verifier Turing Machine will reject the inputs, and when $\gamma<0$, the verifier Turing Machine will accept the inputs. When $\gamma \in [0.5, 1)$, the verifier Turing Machine will accept the input when there exists a $\bfv^\prime$ satisfying the following two conditions:
    \begin{itemize}
            \item The assignment $\bfv^\prime$ is satisfiable for $\psi$
            \item When $k\in[n]$, it holds that $\bfv[i]=\bfv'[i]$ for all $i\in[k]$.
    \end{itemize}
    When $\gamma \in [0, 0.5)$, except in the scenario where the aforementioned two conditions are met, the verifier Turing Machine will additionally accept the input when $k = a = 0$. Then we have $Q_1^*(s,a)>\gamma$ if and only if there exists a certificate $\bfv^\prime$ such that the verifier Turing Machine accepts the input containing $s,a,\gamma$, and the assignment $\bfv^\prime$. Moreover, the verifier Turing Machine runs in at most polynomial time. Therefore, this problem is in $\mathsf{NP}$.
   
    \vspace{4pt}
    \noindent\textbf{Step 2.} Meanwhile, according to the well-known Cook-Levin theorem(Lemma~\ref{thm:cook}), the 3-SAT problem is $\mathsf{NP}$-complete. Thus, our objective is to provide a polynomial time reduction from the 3-SAT problem to the computation of the optimal value function for the 3-SAT MDP. Given a Boolean formula of length $n$, the number of variables is at most $n$. Then, we can pad several meaningless clauses such as $(u_1\vee\neg u_1\vee u_1)$ to obtain the 3-CNF Boolean formula $\psi^\prime$ with $n$ clauses. The Boolean formula $\psi$ is satisfiable if and only if $Q_1^*((\psi^\prime,\mathbf{0}_n, 0), 1)>\frac{3}{4}$, which gives us a desired polynomial time reduction. 
    
    Combining these two steps, we can conclude that computing the optimal value function is $\mathsf{NP}$-complete.
    \end{proof}

\subsection{Proof of Theorem~\ref{thm:np:mdp}} \label{appendix:pf:np:mdp}

\begin{proof}[Proof of Theorem~\ref{thm:np:mdp}]
    Following the proof paradigm of Theorem~\ref{thm:sat:mdp}, we characterize the representation complexity of the reward function, transition kernel, optimal policy, and optimal value function in sequence.

        \vspace{4pt}
\noindent\textbf{Reward Function.} 
Given the state $s=(s_M,\bft,l,k)$, the output of the reward 
\# \label{eq:np:reward:circuit}
r(s,a)=\mathbbm{1}[s_M=s_{\text{accept}}\wedge k=P(n)+1]+ 0.5 \cdot \mathbbm{1}[k=2P(n)+2].
\# 
It is not difficult to see that the reward function can be implemented by $\mathsf{AC}^0$ circuits.
    
    \vspace{4pt}
    \noindent\textbf{Transition Kernel.}   We establish the representation complexity of the transition kernel by providing the computation formula for each element of the transited state. Our proof hinges on the observation that, for a state $s=(s_M,\bft,l,k)$, we can extract the content $x$ of the location $l$ on the tape by the following formula:
    \begin{equation*}
        \chi=\bigwedge_{i\in[P(n)]}(\mathbbm{1}[i=l]\vee \bft[i])
    \end{equation*}
    It is noted that we assume the contents written on the tape are $0$ and $1$. However, for the general case, we can readily extend the formula by applying it to each bit of the binary representation of the contents. Regarding the configuration $c' = (s'_M, \bft^\prime, l^\prime)$ defined in \eqref{eq:NP_tran}, we observe that 
    \begin{itemize}
        \item[(i)] the Turing Machine state $s_M^\prime$ is determined by $s_M,a$ and $\chi$;
        \item[(ii)] the content of the location $l$ on the tape,  $\bft^\prime[l]$, is determined by $s_M,a$ and $\chi$, whereas the contents of the other locations on the tape remain unaltered, i.e., $\bft^\prime[i] = \bft[i]$ for $i \neq k$;
        \item[(iii)] the pointer $l^\prime$ is determined by $l$, $s_M,a$ and $\chi$.
    \end{itemize}
      Moreover, the number of steps $k^\prime$ is determined by $k$ and $a$. Therefore, each element in the output is determined by at most $O(\log n)$ bits. According to Lemma~\ref{lemma:bool_func}, each bit of the output can be computed by two-layer circuits of polynomial size, and the output can be computed by the $\mathsf{AC}^0$ circuits.

    \vspace{4pt}
\noindent\textbf{Optimal Policy.}  Our objective is to demonstrate the $\mathsf{NP}$-completeness of the problem of determining whether $\pi_1^*(s) = 1$, given a state $s = (\psi, \bfv, k)$ as input. We will begin by establishing that this problem falls within the class $\mathsf{NP}$, and subsequently, we will provide a polynomial-time reduction from the $\mathsf{NP}$-complete language $\cL$ to this specific problem.

\vspace{4pt}
\noindent\textbf{Step 1.} Given a sequence of choice of the branch as the certificate, we only need to verify the following two conditions to determine whether the optimal action $a=1$:
    \begin{itemize}
        \item The final state of the Turing Machine $M$ is the accepted state.
        \item  When $k\in[P(n)]$, the configuration of the Turing Machine after $k$-steps execution under the choice provided by the certificate is the same as the configuration in the current state.
        \item  When $k\in[P(n)]$, the choice of the $k$-th step is branch $1$.
    \end{itemize}
    Note that, when exist such certificates, action $a=1$ can always get the reward of $1$ and is therefore optimal, and otherwise, action $a=1$ always gets the reward of $0$, and $a=0$ is always optimal. Moreover, when $k=P(n)+1$ or $k=P(n)+2$, we can always select the action $a=1$ as the optimal action. So given the certificates, we can verify whether $\pi_1^*(\psi,\mathbf{0}_n,0)=1$.

\vspace{4pt}
\noindent\textbf{Step 2.}
Given an input string $s_{\text{input}}$ of length $n$. We can simply get the initial configuration $c_0$ of the Turing Machine $M$ on the input $s_{\text{input}}$. Then $s_{\text{input}}\in\cL$ if and only if $\pi_1^*((c_0,0))=1$, which gives us a desired polynomial time reduction.
    
Combining these two steps, we know that computing the optimal policy of $\mathsf{NP}$ MDP is $\mathsf{NP}$-complete.

    \vspace{4pt}
\noindent\textbf{Optimal Value Function.} 
To facilitate our analysis, we consider the decision version of the problem of computing the optimal value function as follows: given a state $s=(s_M,\bft,l,k)$, an action $a$, and a number $\gamma$ as input, and the goal is to determine whether $Q_1^*(s,a)>\gamma$. 

\vspace{4pt}
\noindent\textbf{Step 1.}
We first verify that the problem falls within the class $\mathsf{NP}$. Given the input state $s$, we use a sequence of choice of the branch as the certificate. When $\gamma \ge 1$, the verifier Turing Machine will reject the inputs, and when $\gamma < 0$, the verifier Turing Machine will accept the input. When $\gamma \in [0.5, 1)$, the Turing Machine accepts the input when there is a certificate that satisfies the following two conditions:
    \begin{itemize}
        \item The final state of the Turing Machine $M$ is $s_{\text{accept}}$.
        \item When $k\in[P(n)]$, the configuration of the Turing Machine after $k$-steps execution under the choice provided by the certificate is the same as the configuration in the current state.
    \end{itemize}
    When $\gamma \in [0, 0.5)$, in the scenario where the aforementioned two conditions are met, the verifier Turing Machine will additionally accept the input when $k = a = 0$.
    Note that, all these conditions can be verified in polynomial time. Therefore, given the appropriate certificates, we can verify whether $Q_1^*(s,a)>\gamma$ in polynomial time.

\vspace{4pt}
\noindent\textbf{Step 2.}
Given that $\cL$ is $\mathsf{NP}$-complete, it suffices to provide a polynomial time reduction from the $\cL$ to the problem of computing the optimal value function of $\mathsf{NP}$ MDP. Let $s_{\text{input}}$ be an input string of length $n$. To obtain the initial configuration $c_0$ of the Turing Machine $M$ on the input $s_{\text{input}}$, we simply copy the input string onto the tape, set the pointer to the initial location, and designate the state of the Turing Machine as the initial state. Therefore, $s_{\text{input}}\in\cL$ if and only if $Q_1^*((c_0,0),1)>\frac{3}{4}$, which provides a desired polynomial time. 

Combining these two steps, we can conclude that computing the optimal value function of $\mathsf{NP}$ MDP is $\mathsf{NP}$-complete.
\end{proof}

\section{Proofs for Section~\ref{sec:p}}

\subsection{Proof of Theorem~\ref{thm:CVPMDP}} \label{appendix:pf:cvp:mdp}

\begin{proof}[Proof of Theorem~\ref{thm:CVPMDP}]
We characterize the representation complexity of the reward function, transition kernel, optimal policy, and optimal value function in sequence.

    \vspace{4pt}
    \noindent\textbf{Reward Function.} First, we prove that the reward function of CVP MDP can be computed by $\mathsf{AC}^0$ circuits. According to the definition, the output is 
    \$
    r(s,a)=\mathbbm{1}[\bfv[n]=1].
    \$ 
    Therefore, the problem of computing the reward function falls within the complexity class $\mathsf{AC}^0$.

    \vspace{4pt}
    \noindent\textbf{Transition Kernel.} Then, we prove that the transition kernel of CVP MDP can be computed by $\mathsf{AC}^0$ circuits. Given the state-action pair $(s=(\bC,\bfv),a)$, we denote the next state $\cP(s, a)$ as $s^\prime=(\bC^\prime,\bfv^\prime)$. For any index $i \in [n]$, we can simply fetch the node $\bC[i]$ and its value by
    \begin{equation}
    \label{eq:fetch_node_circuit}
        \bC[i]=\bigvee_{j\in[n]}(\bC[j]\wedge\mathbbm{1}[i=j]), \qquad \bfv[i]=\bigvee_{j\in[n]}(\bfv[j]\wedge\mathbbm{1}[i=j]),
    \end{equation}
    where the AND and OR operations are bit-wise operations. Given the node $\bC[a]$ and its inputs $\bfv[\bC[a][1]]$ and $\bfv[\bC[a][2]]$, we calculate the value of the $a$-th node and denote it as $\bar{\bfo}[a]$. Here, $\bar{\bfo}[a]$ represents the correct output of the $a$-th node when its inputs are computed, and is undefined when the inputs of the $a$-th node have not been computed. Therefore, let $\bar{\bfo}[a]$ be \texttt{Unknown} when the inputs of the $a$-th node have not been computed. Specifically, we can compute $\bar{\bfo}[a]$ as follows:
    \begin{equation}
        \label{eq:node_a_output}
        \begin{aligned}
        \bar{\bfo}[a]=&\Big(\mathbbm{1}[g_a=\wedge]\wedge\mathbbm{1}[\bfv[\bfc[i][1]],\bfv[\bfc[i][2]]\neq\texttt{Unknown}]\wedge(\bfv[\bfc[i][1]]\wedge\bfv[\bfc[i][2]])\Big)\\
        &\vee\Big(\mathbbm{1}[g_a=\vee]\wedge\mathbbm{1}[\bfv[\bfc[i][1]],\bfv[\bfc[i][2]]\neq\texttt{Unknown}]\wedge(\bfv[\bfc[i][1]]\vee\bfv[\bfc[i][2]])\Big)\\
        &\vee\Big(\mathbbm{1}[g_a=\neg]\wedge\mathbbm{1}[\bfv[\bfc[i][1]]\neq\texttt{Unknown}]\wedge(\neg\bfv[\bfc[i][1]])\Big)\vee\Big(\mathbbm{1}[g_a\in\{0,1\}]\wedge g_a\Big)\\
        &\vee\Big(\texttt{Unknown}\wedge\Big(\big(\mathbbm{1}[g_a\in\{\vee,\wedge\}]\wedge\mathbbm{1}[\texttt{Unknown}\in\{\bfv[\bfc[i][1]],\bfv[\bfc[i][2]]\}]\big)\vee\\
        &\qquad\big(\mathbbm{1}[g_a= \neg]\wedge\mathbbm{1}[\bfv[\bfc[i][1]]]=\texttt{Unknown}\big)\Big)\Big).
    \end{aligned}
    \end{equation}
    Furthermore, we can express $s^\prime=(\bC^\prime,\bfv^\prime)$, the output of the transition kernel, as  
    \$
    \bC^\prime=\bC, \qquad \bfv^\prime[i]=(\bar{\bfo}[a]\wedge\mathbbm{1}[i=a])\wedge(\bfv[i]\wedge\mathbbm{1}[i\neq a]),
    \$
    which implies that the transition kernel of CVP MDP can be computed by $\mathsf{AC}^0$ circuits.

    \vspace{4pt}
    \noindent\textbf{Optimal Policy.} Based on our construction of CVP MDP, it is not difficult to see that $\pi_1^* = \pi_2^* =\cdots = \pi_H^*$. For simplicity, we omit the subscript and use $\pi^*$ to represent the optimal policy. Intuitively, the optimal policy is that given a state $s=(\bC,\bfv)$, we find the nodes with the smallest index among the nodes whose inputs have been computed and output has not been computed. Formally, this optimal policy can be expressed as follows. Given a state $s=(\bC,\bfv)$, denoting $\bC[i]=(\bC[i][1],\bC[i][2],g_i)$, let $\cG(s)$ be a set defined by:
    \begin{equation}
        \begin{aligned}
        \label{eq:set_g}
        \cG(s)=&\{i\in[n] \mid g_i\in\{\wedge,\vee\},\bfv[\bC[i][1]],\bfv[\bC[i][2]]\in\{0,1\},\bfv[i]=\texttt{Unknown}\}\\
        &\cup\{i\in[n] \mid g_i = \neg,\bfv[\bC[i][1]]\in\{0,1\},\bfv[i]=\texttt{Unknown}\}\\
        &\cup\{i\in[n] \mid g_i\in\{0,1\},\bfv[i]=\texttt{Unknown}\}.
    \end{aligned}
    \end{equation}
    The set $\cG(s)$ defined in \eqref{eq:set_g} denotes the indices for which inputs have been computed, and the output has not been computed. Consequently, the optimal policy $\pi^*(s)$ is expressed as $\pi^*(s)=\min \cG(s)$ . If the output of the circuit $\bC$ is $1$, the policy $\pi^*$ can always get the reward $1$, establishing its optimality. And if the output of circuit $\bC$ is $0$, the optimal value is $0$ and $\pi^*$ is also optimal. Therefore, we have verified that the $\pi^* =\min \cG(s)$ defined by us is indeed the optimal policy. Subsequently, we aim to demonstrate that the computational complexity of $\pi^*$ resides within $\mathsf{AC}^0$. Let $\Upsilon[i]$ denote the indicator of whether $i \in \cG(s)$, i.e., $\Upsilon[i]=1$ signifies that $i\in\cG(s)$, while $\Upsilon[i]=0$ denotes that $i\notin\cG(s)$. According to~\eqref{eq:set_g}, we can compute $\Upsilon[i]$ as follows: 
    \begin{equation*}
        \begin{aligned}
        \label{def:set_g}
        \Upsilon[i]=&\big(\mathbbm{1}[g_i\in\{\wedge,\vee\}]\wedge\mathbbm{1}[\bfv[\bfc[i][1]],\bfv[\bfc[i][2]]\in\{0,1\}]\wedge\mathbbm{1}[\bfv[i]=\texttt{Unknown}]\big)\\
        &\vee\big(\mathbbm{1}[g_i=\neg]\wedge\mathbbm{1}[\bfv[\bfc[i][1]]\in\{0,1\}]\wedge\mathbbm{1}[\bfv[i]=\texttt{Unknown}]\big)\\
        &\vee\big(\mathbbm{1}[g_i\in\{0,1\}]\wedge\mathbbm{1}[\bfv[i]=\texttt{Unknown}]\big).
    \end{aligned}
    \end{equation*}
    Under this notation, we arrive at the subsequent expression for the optimal policy $\pi^*$:
    \begin{align*}
    \pi^*(s)=\bigvee_{i\in[n]}(\Upsilon^\prime[i]\wedge i), \qquad \text{where }\Upsilon^\prime[i]=\neg\Big(\bigvee_{j<i}\Upsilon[j]\Big)\wedge\Upsilon[i].
    \end{align*}
    Therefore, the computation complexity of the optimal policy falls in $\mathsf{AC}^0$.

    \vspace{4pt}
    \noindent\textbf{Optimal Value Function.} We prove that the computation of the value function of CVP MDP is $\mathsf{P}$-complete under the log-space reduction. Considering that the reward in a CVP MDP is constrained to be either $0$ or $1$, we focus on the decision version of the optimal value function computation. Given a state $s=(\bC,\bfv)$ and an action $a$ as input, the objective is to determine whether $Q_1^*(s,a)=1$. In the subsequent two steps, we demonstrate that this problem is within the complexity class $\mathsf{P}$ and offer a log-space reduction from a known $\mathsf{P}$-complete problem (CVP problem) to this decision problem.

    \vspace{2pt}
    \noindent\textbf{Step 1.} We first verify the problem is in $\mathsf{P}$. According to the definition, a state $s=(\bC,\bfv)$ can get the reward $1$ if and only if the output of $\bC$ is $1$. A natural algorithm to compute the value of the circuit $\bC$ is computing the values of nodes with the topological order. The algorithm runs in polynomial time, indicating that the problem is in $\mathsf{P}$.

    \vspace{2pt}
    \noindent\textbf{Step 2.} Then we prove that the problem is $\mathsf{P}$-complete under the log-space reduction. According to Lemma~\ref{thm:CVP_P}, the CVP problem is $\mathsf{P}$-complete. Thus, our objective is to provide a log-space reduction from the CVP problem to the computation of the optimal value function for the CVP MDP. Given a circuit $\bC$ of size $n$ and a vector $\bfv_{\mathrm{unkown}}$ containing $n$ $\texttt{Unknown}$ values, consider $s=(\bC,\bfv_{\mathrm{unkown}})$. Let $i=\pi^*(s)$, where the optimal policy $\pi^*$ is defined in the proof of the ``optimal policy function" part. The output of $\bC$ is $1$ if and only if $Q_1^*(s,i)=1$. Furthermore, the reduction is accomplished by circuits in $\mathsf{AC}^0$ and, consequently, falls within $\mathsf{L}$.

    Combining these two steps, we know that computing the optimal value function is $\mathsf{P}$-complete under the log-space reduction.
\end{proof}

\subsection{Proof of Theorem~\ref{thm:p:mdp}} \label{appendix:pf:p:mdp}

\begin{proof}[Proof of Theorem~\ref{thm:p:mdp}]

We characterize the representation complexity of the reward function, transition kernel, optimal policy, and optimal value function in $\mathsf{P}$ MDPs in sequence.

    \vspace{4pt}
    \noindent\textbf{Reward Function.} We prove that the reward function of $\mathsf{P}$ MDP can be computed by $\mathsf{AC}^0$ circuits. According to the definition, the output is 
    \$
    r(s,a)=\mathbbm{1}[\bfv[P(n)]=1].
    \$ 
    So the complexity of the reward function falls within the complexity class $\mathsf{AC}^0$.

     \vspace{4pt}
    \noindent\textbf{Transition Kernel.} First, we prove that the transition kernel of $\mathsf{P}$ MDP can be computed by $\mathsf{AC}^0$ circuits. Given a state-action pair $(s=(\bfx,\bC,\bfv),a)$, we denote the next state $\cP(s, a)$ by $s^\prime=(\bfx^\prime,\bC^\prime,\bfv^\prime)$. Similar to \eqref{eq:fetch_node_circuit} in the proof of Theorem~\ref{thm:CVPMDP}, we can fetch the node $\bC[i]$, its value $\bfv[i]$, and the $i$-th character of the input string $\bfx[i]$. We need to compute the output of the $a$-th node. Given the node $\bC[a]$ and its inputs $\bfv[\bC[a][1]]$ and $\bfv[\bC[a][2]]$ or $\bfx[\bC[a][1]]$, we can compute the $a$-th node's value $\bar{\bfo}[a]$ similar to \eqref{eq:node_a_output}, where $\bar{\bfo}[a]$ is the correct output of the $a$-th node if the inputs are computed, and is \texttt{Unknown} when the inputs of the $a$-th node contain the \texttt{Unknown} value. In detail, $\bar{\bfo}$ can be computed as  
    Here, $\bar{\bfo}[a]$ can be computed as: 
    \begin{equation}
        \label{eq:node_a_output_p}
        \begin{aligned}
        \bar{\bfo}[a]=&\Big(\mathbbm{1}[g_a=\wedge]\wedge\mathbbm{1}[\bfv[\bfc[i][1]],\bfv[\bfc[i][2]]\neq\texttt{Unknown}]\wedge(\bfv[\bfc[i][1]]\wedge\bfv[\bfc[i][2]])\Big)\\
        &\vee\Big(\mathbbm{1}[g_a=\vee]\wedge\mathbbm{1}[\bfv[\bfc[i][1]],\bfv[\bfc[i][2]]\neq\texttt{Unknown}]\wedge(\bfv[\bfc[i][1]]\vee\bfv[\bfc[i][2]])\Big)\\
        &\vee\Big(\mathbbm{1}[g_a=\neg]\wedge\mathbbm{1}[\bfv[\bfc[i][1]]\neq\texttt{Unknown}]\wedge(\neg\bfv[\bfc[i][1]])\Big)\\
        &\vee\Big(\mathbbm{1}[g_a=\texttt{Input}]\wedge \bfx[\bfc[i][1]]\Big)\\
        &\vee\Big(\texttt{Unknown}\wedge\Big(\big(\mathbbm{1}[g_a\in\{\vee,\wedge\}]\wedge\mathbbm{1}[\texttt{Unknown}\in\{\bfv[\bfc[i][1]],\bfv[\bfc[i][2]]\}]\big)\vee\\
        &\qquad \big(\mathbbm{1}[g_a= \neg]\wedge\mathbbm{1}[\bfv[\bfc[i][1]]]=\texttt{Unknown}\big)\Big)\Big).
    \end{aligned}
    \end{equation}
    Then the next state $s' = (\bfx', \bfc', \bfv')$ can be expressed as
    \$
    \bfx^\prime=\bfx, \qquad \bC^\prime=\bC, \qquad  \bfv^\prime[i]=(\bfo[a]\wedge\mathbbm{1}[i=a])\wedge(\bfv[i]\wedge\mathbbm{1}[i\neq a]),
    \$
    which yields that the transition kernel of $\mathsf{P}$ MDP can be computed by $\mathsf{AC}^0$ circuits.

    \vspace{4pt}
    \noindent\textbf{Optimal Policy.} 
    Given a state $s=(\bfx,\bC,\bfv)$, the optimal policy finds the nodes with the smallest index among the nodes whose inputs have been computed and output has not been computed. To formally define the optimal policy, we need to introduce the notation $\tilde{\cG}(s)$ to represent the set of indices of which inputs have been computed and output has not been computed. Given a state $s=(\bfx,\bC,\bfv)$, denoting $\bC[i]=(\bC[i][1],\bC[i][2],g_i)$, the set $\tilde{\cG}(s)$ is defined by:
    \begin{equation}
        \begin{aligned}
        \label{eq:set_g_P}
        \tilde{\cG}(s)=&\{i\in[P(n)] \mid g_i\in\{\wedge,\vee\},\bfv[\bC[i][1]],\bfv[\bC[i][2]]\in\{0,1\},\bfv[i]=\texttt{Unknown}\}\\
        &\cup\{i\in[P(n)] \mid g_i = \neg,\bfv[\bC[i][1]]\in\{0,1\},\bfv[i]=\texttt{Unknown}\}\\
        &\cup\{i\in[P(n)] \mid g_i=\texttt{Input},\bfv[i]=\texttt{Unknown}\}.
    \end{aligned}
    \end{equation}
    Under this notation, we can verify that the optimal policy is given by $\pi^*(s)=\min \tilde{\cG}(s)$. Here we omit the subscript and use $\pi^*$ to represent the optimal policy since $\pi_1^* = \pi_2^* =\cdots = \pi_H^*$. Specifically, (i) if the output of the circuit $c$ is $1$, the policy $\pi^*$ can always get the reward $1$ and is hence optimal; (ii) if the output of circuit $c$ is $0$, the optimal value is $0$ and $\pi^*$ is also optimal. Therefore, our objective is to prove that the computation complexity of $\pi^*$ falls in $\mathsf{AC}^0$. Let $\tilde{\Upsilon}[i]$ be the indicator of whether $i\in\tilde{\cG}(s)$, i.e. $\tilde{\Upsilon}[i]=1$ indicates that $i\in\tilde{\cG}(s)$ and $\tilde{\Upsilon}[i]=0$ indicates $i\notin\tilde{\cG}(s)$. By the definition of $\tilde{\cG}(s)$ in \eqref{eq:set_g_P}, we can compute $\tilde{\Upsilon}[i]$ as follows:
    \begin{equation*}
        \begin{aligned}
        \label{def:set_g_P}
        \tilde{\Upsilon}[i]=&\big(\mathbbm{1}[g_i\in\{\wedge,\vee\}] \wedge \mathbbm{1}[\bfv[\bfc[i][1]],\bfv[\bfc[i][2]]\in\{0,1\}] \wedge \mathbbm{1}[\bfv[i]=\texttt{Unknown}]\big)\\
        &\vee\big(\mathbbm{1}[g_i=\neg]\wedge\mathbbm{1}[\bfv[\bfc[i][1]]\in\{0,1\}] \wedge \mathbbm{1}[\bfv[i]=\texttt{Unknown}]\big)\\
        &\vee\big(\mathbbm{1}[g_i=\texttt{Input}] \wedge \mathbbm{1}[\bfv[i]=\texttt{Unknown}]\big).
    \end{aligned}
    \end{equation*}
    Then, we can express the optimal policy as 
    \begin{align*}
    \pi^*(s)=\bigvee_{i\in[P(n)]}(\tilde{\Upsilon}^\prime[i]\wedge i) \qquad \text{where } \tilde{\Upsilon}^\prime[i]=\neg\Big(\bigvee_{j<i}\tilde{\Upsilon}[j]\Big)\wedge\tilde{\Upsilon}[i].
    \end{align*}
    Therefore, the computational complexity of the optimal policy falls in $\mathsf{AC}^0$.

    \vspace{4pt}
    \noindent\textbf{Optimal Value Function.} We prove that the computation of the value function of $\mathsf{P}$ MDP is $\mathsf{P}$-complete under the log-space reduction. Note that the reward of the $\mathsf{P}$ MDP can be only $0$ or $1$, we consider the decision version of the problem of computing the optimal value function as follows: given a state $s=(\bfx, \bC,\bfv)$ and an action $a$ as input, the goal is determining whether $Q_1^*(s,a)=1$.  we need to prove that this problem belongs to the complexity class $\mathsf{P}$ and then provide a log-space reduction from a recognized $\mathsf{P}$-complete problem to this problem.

    \vspace{2pt}
    \noindent\textbf{Step 1.} We first verify the problem is in $\mathsf{P}$. According to the definition, a state $s=(\bfx,\bC,\bfv)$ can get the reward $1$ if and only if the output of $\bC$ is $1$. A natural algorithm to compute the value of the circuit $\bC$ is computing the values of nodes according to the topological order. This algorithm runs in polynomial time, showing that the target decision problem is in $\mathsf{P}$.

    \vspace{2pt}
    \noindent\textbf{Step 2.} Then we prove that the problem is $\mathsf{P}$-complete under the log-space reduction. Under the condition that $\cL$ is $\mathsf{P}$-complete, our objective is to provide a log-space reduction from $\cL$ to the computation of the optimal value function for the $\mathsf{P}$ MDP. By Lemma~\ref{lem:log:uniform:circuit}, a language in $\mathsf{P}$ has log-space-uniform circuits of polynomial size. Therefore, there exists a Turing Machine that can generate a description of a circuit $\bC$ in log-space which can recognize all strings of length $n$ in $\cL$. Therefore, given any input string $\bfx$ of length $n$, we can find a corresponding state $s = (\bfx, \bC, \bfv_{\mathrm{unknown}})$, where $\bfv_{\mathrm{unknown}}$ denotes the vector containing $P(n)$ \texttt{Unknown} values. Let $i=\pi^*(s)$, where $\pi^*$ is the optimal policy defined in the ``optimal policy'' proof part. Then $\bfx\in\cL$ if and only if $Q_1^*(s,i) = 1$. This provides a desired log-space reduction. We also want to remark that here the size of reduction circuits in $\mathsf{L}$ should be smaller than $P(n)$. This condition can be easily satisfied since we can always find a sufficiently large polynomial $P(n)$.
    
    Combining the above two steps, we can conclude that computing the optimal value function is $\mathsf{P}$-complete.
\end{proof}

\section{Proofs for Section~\ref{sec:drl}}

\subsection{State Embeddings and Action Embeddings} \label{appendix:embedding}

\noindent\textbf{Embeddings for the 3-SAT MDP. } The state of the $n$ dimension 3-SAT MDP $s=(\psi,\bfv,k)$. We can use an integer ranging from $1$ to $2n$ to represent a literal from $\cV=\{u_1,\neg u_1,\cdots,u_n,\neg u_n\}$. For example, we can use $(i, i + n)$ to represent $(u_i, \neg u_i)$ for any $i \in [n]$. Therefore, we can use a $3n$ dimensional vector $\bfe_{\psi}$ to represent the $\psi\in\cV^{3n}$ and a $4n+1$ dimensional vector $\bfe_s=(\bfe_{\psi},\bfv,k)$ to represent the state $s=(\psi,\bfv,k)$. And we use a scalar to represent the action $a\in\{0,1\}$. 

\vspace{4pt}
\noindent\textbf{Embeddings for the $\mathsf{NP}$ MDP.} The state of the $n$-dimension $\mathsf{NP}$ MDP is denoted as $s = (c, k)$. A configuration of a non-deterministic Turing Machine, represented by $c$, encompasses the state of the Turing Machine $s_M$, the contents of the tape $t$, and the pointer on the tape $l$. To represent the state of the Turing Machine, the tape of the Turing Machine, and the pointer, we use an integer, a vector of $P(n)$ dimensions, and an integer, respectively. Therefore, a $P(n)+2$ dimensional vector $\bfe_{c}$ is employed to represent the configuration, and a $P(n)+3$ dimensional vector $\bfe_s = (\bfe_{c}, k)$ is used to represent the state $s = (c, k)$. Additionally, a scalar is utilized to represent the action $a \in \{0, 1\}$.

\vspace{4pt}
\noindent\textbf{Embeddings for the CVP MDP. } The state of the $n$-dimension CVP MDP is denoted as $s = (\bC, \bfv)$. Utilizing a $3$ dimensional vector, we represent a node $\bC[i]$ of the circuit. Consequently, a $3n$ dimensional vector $\bfe_{\bC}$ is employed to represent the circuit $\bC$, and a $4n$ dimensional vector $\bfe_s = (\bfe_{\bC}, \bfv)$ is used to represent the state $s = (\bC, \bfv)$. In this representation, an integer ranging from $1$ to $n$ is used to signify the node index, an integer ranging from $1$ to $5$ is employed to denote the type of a node, and an integer ranging from $1$ to $3$ is utilized to represent the value of a node. Additionally, a scalar is used to represent the action $a \in [n]$.

\vspace{4pt}
\noindent\textbf{Embeddings for the $\mathsf{P}$ MDP. } 
The state of the $n$ dimensional $\mathsf{P}$ MDP is denoted as $s = (\bfx, \bC, \bfv)$. Assuming the upper bound of the size of the circuit is $P(n)$, similar to the previous CVP MDP, a $3P(n)$-dimension vector $\bfe_{\bC}$ is employed to represent the circuit $\bC$. Meanwhile, a $P(n)$ dimensional vector $\bfv$ is used to represent the value vector of the circuit, and a $n$ dimensional vector $\bfx$ represents the input string. In this representation, an integer is used to denote the character of the input, the index of the circuit, the value of a node, or the type of a node. Therefore, a $4P(n) + n$ dimensional vector $\bfe_s = (\bfx, \bfe_{\bC}, \bfv)$ is used to represent the state $s = (\bfx, \bC, \bfv)$. Additionally, a scalar is used to represent the action $a \in [P(n)]$.

\subsection{Proof of Theorem~\ref{thm:mlp:np:positive}} \label{appendix:pf:thm:mlp:np:positive}

\begin{proof}[Proof of Theorem~\ref{thm:mlp:np:positive}]

    We show that the model, encompassing both the reward and transition kernel, of both 3-SAT MDPs and $\mathsf{NP}$ MDPs can be represented by MLP with constant layers for each respective case.
    
    \vspace{4pt}
    \noindent\textbf{Reward Function of 3-SAT MDP. } First of all, we will prove that the reward function of 3-SAT MDP can be implemented by a constant layer MLP. Denoting the input 
    \$
    \bfe_0=(\bfe_{s},a)=(\bfe_{\psi}, \bfv, k,a),
    \$ 
    which embeds the state $s=(\psi,\bfv,k)$ and the action $a$. For the clarity of presentation, we divide the MLP into three modules and demonstrate each module in detail. 

    \vspace{4pt}\noindent\textbf{Module 1. } The first module is designed to substitute the variable in $\psi$. Let $\psi^\prime$ be the formula obtained from $\psi$ by substituting all variables and containing only Boolean value. 
    Recall that we use $[2n]$ to represent $\{u_1, \neg u_1, \cdots, u_n, \neg u_n\}$. 
    We define a $2n$ dimensional vector $\tilde{\bfv}$ such that, for any literal $\tau \in \{u_1, \neg u_1, \cdots, u_n, \neg u_n\}$,  $\tilde{\bfv}[\tau] = \bfv[i]$ if $\tau = u_i$ and $\tilde{\bfv}[\tau] = \neg \bfv[i]$ if $\tau = \neg u_i$.
     Hence, given a literal $\tau \in [2n]$, we can get its value $\tilde{\bfv}[\tau]$ by 
    \$
    \tilde{\bfv}[\tau]=\text{ReLU}\Big(\sum_{i\in[n]}\text{ReLU}(\bfv[i]+\mathbbm{1}[\tau=u_i]-1)+\sum_{i\in[n]}\text{ReLU}(\mathbbm{1}[\tau=\neg u_i])-\bfv[i]\Big).
    \$
    According to Lemma~\ref{lemma:look_up_MLP}, the function $\mathbbm{1}[\tau=u_i]$ and $\mathbbm{1}[\tau=\neg u_i]$ can be implemented by the constant layer MLP of polynomial size. So we can get the output of Module 1, which is 
    \$
    \bfe_1=(\bfe_{\psi^\prime},\bfv,k,a).
    \$

     \vspace{4pt}\noindent\textbf{Module 2. } The next module is designed to compute the value of $\psi^\prime$. Given the value of each literal $\alpha_{i,j}$ we can compute the value of $\psi^\prime$, i.e., $\psi(\bfv)$, by the following formula: 
    \begin{equation*}
        \psi(\bfv)=\text{ReLU}\Big(\sum_{i\in[n]}\Big(\text{ReLU}(\alpha_{i,1}+\alpha_{i,2}+\alpha_{i,3})-\text{ReLU}(\alpha_{i,1}+\alpha_{i,2}+\alpha_{i,3}-1)\Big)-n+1\Big).
    \end{equation*}
    So we can get the output of Module 2, which is 
    \$
    \bfe_2=(\psi(\bfv),k,a).
    \$
    
    \vspace{4pt}\noindent\textbf{Module 3. } The last module is designed to compute the final output $r(s,a)$. Given the input $\bfe_2=(\psi(\bfv),k,a)$, we can compute the output $r(s,a)$ by 
    \$
    r(s,a)=\mathbbm{1}[\psi(\bfv)\wedge(k=n+1)]+0.5\cdot\mathbbm{1}[k=2n+2].
    \$ 
    While the input can take on polynomial types of values at most, according to Lemma~\ref{lemma:look_up_MLP}, we can use the MLP to implement this function. Therefore, we can use a constant layer MLP with polynomial hidden dimension to implement the reward function of $n$ dimension 3-SAT MDP.

     \vspace{5pt}\noindent\textbf{Transition Kernel of 3-SAT MDP. } We can use a constant layer MLP with polynomial hidden dimension to implement the transition kernel of 3-SAT MDP. Denote the input 
    \$
    \bfe_0=(\bfe_{s},a)=(\bfe_{\psi}, \bfv, k,a),
    \$ 
    which embeds the state $s=(\psi,\bfv,k)$ and the action $a$. We only need to modify the embeddings $\bfv$ and $k$ and denote them as $\bfv^\prime$ and $k^\prime$. According to \eqref{eq:sat_trans_circuit} in the proof of the transition kernel of \cref{thm:sat:mdp}, we have the output $\bfv^\prime$ and $k$ of the following form: 
    \begin{equation}
        \label{eq:satmdp_trans_MLP}
        \begin{aligned}
        &\bfe_\psi^\prime=\bfe_\psi,\qquad \bfv^\prime[i]=(\bfv[i]\wedge\mathbbm{1}[i\neq k])\vee(a\wedge\mathbbm{1}[i=k]),\\
        k^\prime=(k+1)\cdot\mathbbm{1}[&k\geq 1]+\mathbbm{1}[k=0\wedge a=1]+(n+2)\cdot\mathbbm{1}[a=k=0].
    \end{aligned}
    \end{equation}
    In \eqref{eq:satmdp_trans_MLP}, each output element is determined by either an element or a tuple of elements that have polynomial value types at most. Therefore, according to Lemma~\ref{lemma:look_up_MLP}, each output element can be computed by constant-layer MLP of polynomial size, and the overall output can be represented by a constant layer MLP of polynomial size.

    \vspace{4pt}\noindent\textbf{Reward Function of $\mathsf{NP}$ MDP. } We can use a constant layer MLP with polynomial hidden dimension to implement the reward function of $\mathsf{NP}$ MDP. Denote the input 
    \$
    \bfe_0=(\bfe_{s},a)=(\bfe_c, k,a),
    \$ 
    which embeds the state $s=(c,k)=(s_M,\bft,l,k)$ and the action $a$. By \eqref{eq:np:reward:circuit} in the proof of Theorem~\ref{thm:np:mdp}, the transition function can be computed by the formula 
    \begin{equation*}
        r(s,a)=\mathbbm{1}[(s_M=s_{\text{accept}})\wedge(k=P(n)+1)]+0.5\cdot\mathbbm{1}[k=2P(n)+2].
    \end{equation*}
    Therefore, the reward $r(s,a)$ is determined by a tuple $(s_M,a,k)$, which has polynomial value types at most. According to Lemma~\ref{lemma:look_up_MLP}, we can compute this function by a constant-layer MLP of polynomial hidden dimension. 

    \vspace{4pt}\noindent\textbf{Transition Kernel of $\mathsf{NP}$ MDP. } Finally, we switch to the transition kernel of $\mathsf{NP}$ MDP. Denote the input 
    \$
    \bfe_0=(\bfe_{s},a)=(\bfe_c, k,a),
    \$ 
    which embeds the state $s=(c,k)=(s_M,\bft,l,k)$ and the action $a$ and denote the dimension of $\bft$ is $P(n)$. We obtain the final output in three steps. The first step is to extract the content $\chi$ of the location $l$ on the tape by the formula 
    \$
    \chi=\bigvee_{i\in[P(n)]}(\bft[i]\wedge\mathbbm{1}[i=l]).
    \$ 
    We further convert it to the form in MLP by 
    \begin{equation*}
        \chi=\text{ReLU}\Big(\sum_{i\in[P(n)]}\chi^\prime_i\Big)-\text{ReLU}\Big(\sum_{i\in[P(n)]}\chi^\prime_i-1\Big),
        \quad \text{where }\chi^\prime_{i}=\bft[i]\cdot\mathbbm{1}[i=l].
    \end{equation*}
    Regarding the configuration $c^\prime = (s^\prime_M,\bft^\prime,l^\prime)$ defined in \eqref{eq:NP_tran}, we notice that
    \begin{itemize}
        \item[(i)] the Turing Machine state $s^\prime_M$ is determined by $s,a$ and $\chi$;
        \item[(ii)] the content of the location $l$ on the tape, $\bft^\prime[l]$, is determined by $s_M$, $a$ and $\chi$, whereas the contents of the other locations on the tape remain unaltered, i.e., $t^\prime[i] = t[i]$ for $i \neq k$;
        \item[(iii)] the pointer $l^\prime$ is determined by $l,s_M,a$ and $\chi$.
    \end{itemize}
    In addition, the number of steps $k^\prime$ is determined by $k$ and $a$. Therefore, each output element is determined by an element or a tuple of elements having polynomial value types at most. According to Lemma~\ref{lemma:look_up_MLP}, a constant-layer MLP of polynomial size can compute each output element, and the overall output can be computed by a constant-layer MLP of polynomial size.
\end{proof}

\subsection{Proof of Theorem~\ref{thm:mlp:np:negative}} \label{appendix:pf:thm:mlp:np:negative}

\begin{proof}[Proof of Theorem~\ref{thm:mlp:np:negative}]
    According to Lemma~\ref{lemma:MLP_up_bound}, the computational complexity of MLP with constant layer, polynomial hidden dimension (in $n$), and ReLU as the activation function is upper-bounded by $\mathsf{TC}^0$. On the other hand, according to Theorem~\ref{thm:sat:mdp} and Theorem~\ref{thm:np:mdp}, the computation of the optimal policy and optimal value function for the 3-SAT MDP and $\mathsf{NP}$ MDP is $\mathsf{NP}$-complete. Therefore, the theorem holds under the assumption of $\mathsf{TC}^0\neq \mathsf{NP}$.
\end{proof}

\subsection{Proof of Theorem~\ref{thm:mlp:p:positive}} \label{appendix:pf:thm:mlp:p:positive}

\begin{proof}[Proof of Theorem~\ref{thm:mlp:p:positive}]
    We show that the reward function, transition kernel, and optimal policy of both CVP MDPs and $\mathsf{P}$ MDPs can be individually represented by MLP with constant layers and polynomial hidden dimension.
    
    \vspace{4pt}
    \noindent\textbf{Reward Function of CVP MDP. } 
    We can use a constant layer MLP with polynomial hidden dimension to implement the reward function of CVP MDP. Denote the input 
    \$
    \bfe_0=(\bfe_{s},a)=(\bfe_{\bC}, \bfv,a),
    \$
    which embeds the state $s=(\bfc,\bfv)$ and the action $a$. According to the definition, we can represent the reward as
    \begin{equation*}
        r(s,a)=\mathbbm{1}[\bfv[n]=1].
    \end{equation*} 
    By Lemma~\ref{lemma:look_up_MLP}, we know that $\mathbbm{1}[\bfv[n]=1]$ can be represented by a constant-layer MLP with polynomial hidden dimension. 
    Hence, the reward function of CVP MDP can be represented by a constant-layer MLP with polynomial hidden dimension.
    
    \vspace{4pt}\noindent\textbf{Transition Kernel of CVP MDP. } We can use a constant layer MLP with polynomial hidden dimension to implement the transition kernel of CVP MDP. Denote the input 
    \$
    \bfe_0=(\bfe_{s},a)=(\bfe_{\bC}, \bfv,a),
    \$ 
    which embeds the state $s=(\bC,\bfv)$ and the action $a$. Given an index $i$, we can fetch the node $i$ and its value by 

    \begin{equation}
        \label{eq:fetch_node}
        \begin{gathered}
        \bfv[i]=\sum_{j\in[n]} \alpha_{i,j}, \quad \text{ where }  \alpha_{i,j}=\mathbbm{1}[i=j]\cdot \bfv[i],\\
        \bfc[i]=\sum_{j\in[n]} \beta_{i,j}, \quad \text{ where } \beta_{i,j}=\mathbbm{1}[i=j]\cdot \bfc[i].
    \end{gathered}
    \end{equation} 
    Then, compute the output of node $i$ and denote it as $\bfo[i]$. $\bfo[i]$ is determined by the $i$-th node $c_i$ and its input, therefore, can be computed by a constant layer MLP with polynomial hidden dimension according to Lemma~\ref{lemma:look_up_MLP}. Denoting the output as $(\bfe_{\bC}^\prime,\bfv^\prime)$, according to the definition of the transition kernel \eqref{eq:CVP_tran}, we have
    \begin{equation*}
        \bfe_{\bC}^\prime=\bfe,\qquad \bfv[i]^\prime=\bfv[i]\cdot\mathbbm{1}[i\neq a]+\bfv[i]\cdot\mathbbm{1}[i=a]
    \end{equation*}
    Therefore, we can compute the transition kernel of CVP MDP by a constant layer MLP with polynomial hidden dimension.
    
    \vspace{4pt}\noindent\textbf{Optimal Policy of CVP MDP. } We prove that the MLP can implement the optimal policy, which is specified in the proof of Theorem~\ref{thm:CVPMDP} (Appendix~\ref{appendix:pf:cvp:mdp}). For the convenience of reading, we present the optimal policy here again. Given a state $s=(\bC,\bfv)$, denoting $\bC[i]=(\bC[i][1],\bC[i][2],g_i)$, we define $\cG(s)$ 
    \begin{equation}
        \begin{aligned}
        \label{eq:set_g_MLP}
        \cG(s)=&\{i\in[n] \mid g_i\in\{\wedge,\vee\},\bfv[\bC[i][1]],\bfv[\bC[i][2]]\in\{0,1\},\bfv[i]=\texttt{Unknown}\}\\
        &\cup\{i\in[n] \mid g_i\in\{\neg\},\bfv[\bC[i][1]]\in\{0,1\},\bfv[i]=\texttt{Unknown}\}\\
        &\cup\{i\in[n] \mid g_i\in\{0,1\},\bfv[i]=\texttt{Unknown}\}.
    \end{aligned}
    \end{equation}
    The set $\cG(s)$ defined in \eqref{eq:set_g_MLP} denotes the indices for which inputs have been computed, and the output has not been computed. Consequently, the optimal policy $\pi^*(s)$ is expressed as $\pi^*(s)=\min \cG(s)$ . Subsequently, we aim to demonstrate that a constant-layer MLP with polynomial hidden dimension can compute the optimal policy $\pi^*$. Let $\Upsilon[i]$ denote the indicator of whether $i \in \cG(s)$, i.e., $\Upsilon[i]=1$ signifies that $i\in\cG(s)$, while $\Upsilon[i]=0$ denotes that $i\notin\cG(s)$. According to~\eqref{eq:set_g_MLP}, we can compute $\Upsilon[i]$ depends on the $i$-th node $\bC[i]$, its inputs and output, therefore, can be computed by a constant-layer MLP with polynomial hidden dimension. Then we can express the optimal policy $\pi^*$ as:
    \begin{align*}
    \pi^*(s)=\text{ReLU}\Big(\sum_{i\in[n]}\Upsilon^\prime[i]\Big), \qquad \text{where }\Upsilon^\prime[i]=\text{ReLU}\Big(1-\sum_{j<i}\Upsilon[j]\Big).
    \end{align*}
    Therefore, we can compute the optimal policy by a constant-layer MLP with polynomial size.  
    
    \vspace{4pt}\noindent\textbf{Reward Function of $\mathsf{P}$ MDP. } Denote the input 
    \$
    \bfe_0=(\bfe_{s},a)=((\bfx,\bfe_{\bC},\bfv),a),
    \$ 
    which embeds the state $s=(\bfx,c,\bfv)$ and the action $a$ and denote the size of the circuit $c$ as $P(n)$. According to the definition, we have
    \begin{equation*}
        r(s,a)=\mathbbm{1}[\bfv[P(n)]=1].
    \end{equation*}
    Owning to Lemma~\ref{lemma:look_up_MLP}, we can compute the reward function of $\mathsf{P}$ MDP by a constant layer MLP with polynomial hidden dimension.

    \vspace{4pt}\noindent\textbf{Transition Kernel of $\mathsf{P}$ MDP. } Denote the input 
    \$
    \bfe_0=(\bfe_{s},a)=((\bfx, \bfe_{\bC}, \bfv),a),
    \$ 
    which embeds the state $s=(\bfx,\bC,\bfv)$ and the action $a$. We can fetch the node $i$ and its value by~\eqref{eq:fetch_node}. Then we compute the output of node $i$ and denote it as $\bfo[i]$, where $\bfo[i]$ is determined by the $i$-th node $\bC[i]$ and its inputs. Hence, by Lemma~\ref{lemma:look_up_MLP}, $\bfo[i]$  can be computed by a constant-layer MLP with polynomial hidden dimension. Denoting the output as $(\bfx', \bfe_{\bC}^\prime,\bfv^\prime)$, together wths the definition of the transition kernel in \eqref{eq:P_tran}, we have
    \begin{equation*}
        \bfx' = \bfx, \qquad \bfe_{\bC}^\prime=\bfe_{\bC},\qquad \bfv'[i]=\bfv[i]\cdot\mathbbm{1}[i\neq a]+\bfv[i]\cdot\mathbbm{1}[i=a].
    \end{equation*}
    Therefore, we can compute the transition kernel of $\mathsf{P}$ MDP by a constant layer MLP with polynomial hidden dimension.

    \vspace{4pt}\noindent\textbf{Optimal Policy of $\mathsf{P}$ MDP. } We prove that the MLP can efficiently implement the optimal policy in the proof of Theorem~\ref{thm:p:mdp} (Appendix~\ref{appendix:pf:p:mdp}). For completeness, we present the definition of optimal policy here. Given a state $s=(\bfx, \bC,\bfv)$, denoting $\bC[i]=(\bC[i][1],\bC[i][2],g_i)$, let $\tilde{\cG}(s)$ be a set defined by:
    \begin{equation}
        \begin{aligned}
        \label{eq:set_g_P_MLP}
        \tilde{\cG}(s)=&\{i\in[P(n)] \mid g_i\in\{\wedge,\vee\},\bfv[\bC[i][1]],\bfv[\bC[i][2]]\in\{0,1\},\bfv[i]=\texttt{Unknown}\}\\
        &\cup\{i\in[P(n)] \mid g_i = \neg,\bfv[\bC[i][1]]\in\{0,1\},\bfv[i]=\texttt{Unknown}\}\\
        &\cup\{i\in[P(n)] \mid g_i=\texttt{Input},\bfv[i]=\texttt{Unknown}\}.
    \end{aligned}
    \end{equation}
    The set $\tilde{\cG}(s)$ defined in \eqref{eq:set_g_P_MLP} denotes the indices for which inputs have been computed, and the output has not been computed. With this set, the optimal policy $\pi^*(s)$ is expressed as $\pi^*(s)=\min \tilde{\cG}(s)$ . Subsequently, we aim to demonstrate that a constant-layer MLP with polynomial hidden dimension can compute the optimal policy $\pi^*$. Let $\tilde{\Upsilon}[i]$ denote the indicator of whether $i \in \tilde{\cG}(s)$, i.e., $\tilde{\Upsilon}[i]=1$ signifies that $i\in\tilde{\cG}(s)$, while $\tilde{\Upsilon}[i]=0$ denotes that $i\notin\tilde{\cG}(s)$. Using~\eqref{eq:set_g_P_MLP}, the computation of $\tilde{\Upsilon}[i]$ depends on the $i$-th node $c_i$, its inputs, and output. This observation, together with Lemma~\ref{lemma:look_up_MLP}, allows for the computation through a constant-layer MLP with polynomial hidden dimension. Employing this notation, we can formulate the following MLP expression for the optimal policy $\pi^*$:
    \begin{align*}
    \pi^*(s)=\text{ReLU}\Big(\sum_{i\in[P(n)]}\tilde{\Upsilon}^\prime[i]\Big), \qquad \text{where }\tilde{\Upsilon}^\prime[i]=\text{ReLU}\Big(1-\sum_{j<i}\tilde{\Upsilon}[j]\Big).
    \end{align*}
    Therefore, the optimal policy of $\mathsf{P}$ MDP can be represented by a constant-layer MLP with polynomial hidden dimension.  
\end{proof}

\subsection{Proof of Theorem~\ref{thm:mlp:p:negative}} \label{appendix:pf:thm:mlp:p:negative}

\begin{proof}[Proof of Theorem~\ref{thm:mlp:p:negative}]
    By Lemma~\ref{lemma:MLP_up_bound}, we have that the computational complexity of MLP with constant layer, polynomial hidden dimension (in $n$), and ReLU as the activation function is upper-bounded by $\mathsf{TC}^0$. On the other hand, by Theorem~\ref{thm:CVPMDP} and~\ref{thm:p:mdp}, the computation of the optimal value Function of CVP MDP and $\mathsf{P}$ MDP is $\mathsf{P}$-complete. Therefore, we conclude the proof under the assumption of $\mathsf{TC}^0\neq \mathsf{P}$.
\end{proof}

\section{Extension to Stochastic MDPs} \label{appendix:stochastic}

In this section, we demonstrate how our construction of the 3-SAT MDP, $\mathsf{NP}$ MDP, CVP MDP, and $\mathsf{P}$ MDP can be seamlessly extended to their stochastic counterparts. We offer a detailed extension of the 3-SAT MDP to its stochastic version and outline the conceptual approach for extending other types of MDPs to their stochastic counterparts.

\vspace{4pt}
\noindent\textbf{Stochastic 3-SAT MDP. } First, we add some randomness to the transition kernel and reward function. Moreover, to maintain the properties in Theorem~\ref{thm:sat:mdp}, Theorem~\ref{thm:mlp:np:negative}, and Theorem~\ref{thm:mlp:np:positive} of 3-SAT MDP, we slightly modify the action space and extend the planning horizon. 

\begin{definition}[Stochastic 3-SAT MDP]
\label{def:stoc_sat_mdp}
For any $n \in \NN_+$, let $\cV=\{u_1,\neg u_1,\cdots,u_n,\neg u_n\}$ be the set of literals. An $n$-dimensional stochastic 3-SAT MDP $(\cS, \cA, H, \cP, r)$ is defined as follows. The state space $\cS$ is defined by $\cS=\cV^{3n}\times\{0,1,\texttt{Next}\}^n\times(\{0\} \cup [n+2])$, where each state $s$ can be denoted as $s=(\psi,\bfv,k)$. In this representation, $\psi$ is a 3-CNF formula consisting of $n$ clauses and represented by its $3n$ literals, $\bfv\in\{0,1\}^n$ can be viewed as an assignment of the $n$ variables and $k$ is an integer recording the number of actions performed. The action space is $\cA=\{0,1\}$ and the planning horizon is $H=n^2+n+2$. Given a state $s=(\psi,\bfv,k)$, for any $a \in \cA$, the reward $r(s, a)$ is defined by:
\begin{equation}
    \label{eq:stoc_SAT_reward}
        r(s,a)=\begin{cases}
            1&\text{If $\bfv$ is a satisfiable assignment of $\psi$, $k=n+1$ and $a=\texttt{Next}$},\\
            \frac{1}{2} &\text{If $k=n^2+2n+2$},\\
            0&\text{Otherwise}.
        \end{cases} 
\end{equation}
Moreover, the transition kernel is stochastic and takes the following form:
    \begin{equation}
        \label{eq:stoc_SAT_tran}
        \cP\big((\psi,\bfv,k),a\big)=\begin{cases}
            (\psi,\bfv,n+2)&\text{If $a=k=0$},\\
            (\psi,\bfv,1)&\text{If $a=1$ and $k=0$},\\
            (\psi,\bfv,k+1)&\text{If $a=\texttt{Next}$},\\
            (\psi,\bfv^\prime,k)&\text{If $k\in[n]$ and $a\in\{0,1\}$}\\
            (\psi,\bfv,k)&\text{If $k>n$ and $a\in\{0,1\}$},
        \end{cases}
    \end{equation}
    where $\bfv^\prime$ is obtained from $\bfv$ by setting the $k$-th bit as $a$ with probability $\frac{2}{3}$ and as $1-a$ with probability $\frac{1}{3}$, and leaving other bits unchanged, i.e.,
    \begin{equation*}
        \bfv^\prime[k]=\begin{cases}
            a & \text{with probability $\frac{2}{3}$}\\
            1-a & \text{with probability $\frac{1}{3}$}
        \end{cases}\qquad 
        \bfv^\prime[k^\prime] = \bfv[k^\prime] \text{ for $k^\prime \neq k$.}
    \end{equation*}
    Given a 3-CNF formula $\psi$, the initial state of the 3-SAT MDP is $(\psi,\mathbf{0}_n, 0)$. 
\end{definition}

\begin{theorem}[Representation complexity of Stochastic 3-SAT MDP]
\label{thm:stoc_sat_mdp}
    Let $\mathcal{M}_n$ be the $n$-dimensional stochastic 3-SAT MDP in Definition~\ref{def:stoc_sat_mdp}. The transition kernel $\cP$ and the reward function $r$ of $\cM_n$ can be computed by circuits with polynomial size (in $n$) and constant depth, falling within the circuit complexity class $\mathsf{AC}^0$. However, computing the optimal value function $Q_1^*$ and the optimal policy $\pi^*$ of $\cM_n$ are both $\mathsf{NP}$-hard under the polynomial time reduction.
\end{theorem}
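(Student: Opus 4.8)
The statement has the same two halves as Theorem~\ref{thm:sat:mdp}: (a) the model lies in $\mathsf{AC}^0$, and (b) 3-SAT reduces in polynomial time to evaluating $Q_1^*$ and $\pi_1^*$ at the initial state. Part (a) is a near-verbatim reuse of the $\mathsf{AC}^0$ construction from the proof of Theorem~\ref{thm:sat:mdp}, the only new point being that the transition is now a distribution with finitely many values. The genuinely new content is in (b): I must show that noisy bit-setting still lets a satisfiable $\psi$ drive the value strictly above the ``give-up'' value $\tfrac12$ within the enlarged horizon, while an unsatisfiable $\psi$ pins the value at exactly $\tfrac12$.

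\textbf{Model in $\mathsf{AC}^0$.} For the reward, I would reuse the literal-substitution trick of Theorem~\ref{thm:sat:mdp}: after replacing each literal of $\psi$ by its value under $\bfv$ (an $\mathsf{AC}^0$ step), $\psi(\bfv)$ is a depth-$2$ circuit, and
\[
r(s,a)=\mathbbm{1}\big[\psi(\bfv)\wedge k=n+1\wedge a=\texttt{Next}\big]+\tfrac12\,\mathbbm{1}\big[k=n^2+2n+2\big],
\]
which is $\mathsf{AC}^0$. For the stochastic transition the object to represent is $(s,a,s')\mapsto\cP(s'\mid s,a)$, whose value lies in the fixed set $\{0,\tfrac13,\tfrac23,1\}$. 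Each of the five branches of \eqref{eq:stoc_SAT_tran} is picked out by comparing the $O(\log n)$-bit quantities $k$ and $a$; inside the branch $k\in[n]$, $a\in\{0,1\}$ one checks that $s'$ agrees with $s$ on $\psi$ and on every coordinate of $\bfv$ except the $k$-th (an $n$-fold conjunction of bit-equalities, still $\mathsf{AC}^0$) and reads the probability off by comparing $\bfv'[k]$ with $a$. All of this is constant-depth polynomial-size, so $\cP\in\mathsf{AC}^0$.

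\textbf{The value/policy characterization (the crux).} Write $s_0=(\psi,\mathbf 0_n,0)$. I would prove: (i) if $\psi$ is unsatisfiable then $Q_1^*(s_0,0)=\tfrac12$ while $Q_1^*(s_0,a)=0$ for $a\in\{1,\texttt{Next}\}$, so $V_1^*(s_0)=\tfrac12$ and $\pi_1^*(s_0)=0$; (ii) if $\psi$ is satisfiable then $Q_1^*(s_0,1)\ge 1-n(1/3)^n$, so $V_1^*(s_0)>\tfrac34$ and $\pi_1^*(s_0)\neq0$. For (i): any first move in $\{1,\texttt{Next}\}$ reaches $k=1$, after which $k$ grows by one per $\texttt{Next}$, and the remaining $H-1=n^2+n+1$ steps are too few to reach $k=n^2+2n+2$; since the reward-$1$ event needs a satisfying assignment (impossible), that branch yields $0$, whereas the move $a=0$ lands at $k=n+2$ with exactly enough budget to walk $k$ up to $n^2+2n+2$ and collect $\tfrac12$. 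For (ii): fix a satisfying assignment $b^\ast$ and consider the policy that, at each $k\in[n]$, repeatedly plays $b^\ast[k]$ until $\bfv[k]=b^\ast[k]$ and then plays $\texttt{Next}$, finishing with $\texttt{Next}$ at $k=n+1$; letting $X_k$ be the (geometric, success probability $2/3$) number of attempts at bit $k$, the plan collects reward $1$ within the horizon exactly when $\sum_{k=1}^n X_k\le n^2$, and since $n$ summands each at most $n$ sum to at most $n^2$, $\Pr\big[\sum_k X_k>n^2\big]\le\sum_k\Pr[X_k>n]=n(1/3)^n$.

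\textbf{Reduction and the main obstacle.} By the Cook--Levin theorem (Lemma~\ref{thm:cook}), 3-SAT is $\mathsf{NP}$-complete; given an instance I pad it with clauses $(u_1\vee\neg u_1\vee u_1)$ to a $3$-CNF $\psi'$ with exactly $n$ clauses and build $\cM_n$ (polynomial time). By the characterization, $\psi'$ is satisfiable $\iff$ $\pi_1^*\big((\psi',\mathbf 0_n,0)\big)\neq0$ $\iff$ $Q_1^*\big((\psi',\mathbf 0_n,0),1\big)>\tfrac34$ $\iff$ $V_1^*\big((\psi',\mathbf 0_n,0)\big)>\tfrac34$, so each of these problems is $\mathsf{NP}$-hard; note that, unlike the deterministic case, I do not claim membership in $\mathsf{NP}$, since certifying an inequality on the optimal value of a \emph{stochastic} MDP would require reasoning about a sum over exponentially many trajectories --- this is precisely why the statement reads $\mathsf{NP}$-hard rather than $\mathsf{NP}$-complete. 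The hard part of the whole argument is step (ii): one must get the horizon bookkeeping right (one step for the first move, one attempt-plus-$\texttt{Next}$ block per bit, one final $\texttt{Next}$) and verify that the quadratic budget $H=n^2+n+2$ absorbs $\sum_k X_k$ except with exponentially small probability, so that the optimal value is pushed strictly above $\tfrac12$ by a margin robust to the $1/3$ failure noise; everything else is bookkeeping of the kind already carried out for the deterministic 3-SAT MDP.
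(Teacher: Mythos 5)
Your proposal is correct and follows essentially the same route as the paper's proof: the same $\mathsf{AC}^0$ circuit constructions for the reward and for the map $(s,a,s')\mapsto\cP(s'\mid s,a)$, the same Cook--Levin padding reduction, and the same key probabilistic estimate that $n$ noisy bit-settings succeed within the $n^2$ extra budget except with exponentially small probability (your union bound $1-n(1/3)^n$ versus the paper's $(1-(1/3)^n)^n$ are interchangeable here). Your explicit horizon bookkeeping for the unsatisfiable branch and the remark on why only $\mathsf{NP}$-hardness is claimed are welcome additions but do not change the argument.
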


\begin{proof}[Proof of Theorem~\ref{thm:stoc_sat_mdp}] We investigate the representation complexity of the reward function, transition kernel, optimal value function, and optimal policy in sequence.

    \vspace{4pt}
    \noindent\textbf{Reward Function.} The reward function is the same as the deterministic version, therefore, we can apply the proof of Theorem~\ref{thm:sat:mdp} and conclude that the complexity of the reward function falls in $\mathsf{AC}^0$.
    
    \vspace{4pt}
    \noindent\textbf{Transition Kernel.} Then, we will implement the transition kernel by $\mathsf{AC}^0$ circuits. Slightly different from the deterministic version, in this case, the input of the transition kernel $\cP$ are two states $s=(\psi,\bfv,k),s^\prime=(\psi^\prime,\bfv^\prime,k^\prime)$ and action $a$, and the output is the probability of transition from $s$ to $s^\prime$. Given the input $\bfv,k,\bfv^\prime,k^\prime$ and $a$, we have the output as follows:
    \begin{itemize}
        \item The output is $1$ if the input satisfies the following four equations:
        \begin{equation*}
        \begin{gathered}
             a=\texttt{Next},\qquad \bfv=\bfv^\prime,\qquad \psi=\psi^\prime,\\
              k^\prime=(k+1) \cdot \mathbbm{1}[k\geq 1]+\mathbbm{1}[k=0\wedge a=1]+(n+2) \cdot \mathbbm{1}[a=k=0].
        \end{gathered}
        \end{equation*}
        \item The output is $\frac{2}{3}$ if the input satisfies the following four equations: 
        \begin{equation*}
        \begin{gathered}
             a\in\{0,1\},\qquad k=k^\prime,\qquad \psi=\psi^\prime,\qquad
            \bfv^\prime[i]=(\bfv[i]\wedge\mathbbm{1}[i\neq k])\vee (a\wedge\mathbbm{1}[i=k]).
        \end{gathered}
        \end{equation*}
        \item The output is $\frac{1}{3}$ if the input satisfies the following four equations: 
        \begin{equation*}
        \begin{gathered}
             a\in\{0,1\},\qquad k=k^\prime,\qquad \psi=\psi^\prime,\qquad            \bfv^\prime[i]=\neg(\bfv[i]\wedge\mathbbm{1}[i\neq k])\vee (a\wedge\mathbbm{1}[i=k]).
        \end{gathered}
        \end{equation*}
        \item The output is $0$ otherwise.
    \end{itemize}
        
         It is noted that each element in the previous equations is determined by at most $O(\log n)$ bits. Therefore, according to Lemma~\ref{lemma:bool_func}, the condition judgments can be computed by two-layer circuits of polynomial size, and the overall output can be computed by $\mathsf{AC}^0$ circuits.
        
    \vspace{4pt}
    \noindent\textbf{Optimal Value Function.} Next, we will prove the $\mathsf{NP}$-hardness of computing the optimal value function. Similar to the optimal value function part of the proof of Theorem~\ref{thm:sat:mdp}, we formulate a simpler decision version of this problem: given a state $s=(\psi,\bfv,k)$, an action $a$ and a number $\gamma$ as input, and the goal is to determine whether $Q_1^*(s,a)>\frac{1}{2}$. According to the well-known Cook-Levin theorem (Lemma~\ref{thm:cook}), the 3-SAT problem is $\mathsf{NP}$-complete. Thus, our objective is to provide a polynomial time reduction from the 3-SAT problem to the computation of the optimal value function for the 3-SAT MDP. Given a Boolean formula of length $n$, the number of variables is at most $n$. Then, we can pad several meaningless clauses such as $(u_1\vee\neg u_1\vee u_1)$ to obtain the 3-CNF Boolean formula $\psi^\prime$ with $n$ clauses. When the Boolean formula $\psi$ is not satisfiable, the value of $Q_1^*((\psi^\prime,\mathbf{0}_n, 0), 1)$ is $0$. When the Boolean formula $\psi$ is satisfiable, we will prove that the probability of the reward $1$ is higher than $\frac{1}{2}$. Note that, when $\psi$ is satisfiable, we only need to modify the values of the variables at most $n$ times to get a satisfiable assignment in the deterministic case. In the stochastic case, we have $n^2$ chances to modify the value of a variable with a success probability of $\frac{2}{3}$, and to get the reward, we only need $n$ times success. Therefore, we can compute the probability of getting the reward as 
    \#
    \label{eq:stoc_value_sat}
        \Pr(\text{$n$ times success in $n^2$ chances}) & \geq  \Big(\Pr(\text{one success in $n$ chances})\Big)^n  = \Big(1-\frac{1}{3^n}\Big)^n \notag\\
        & > \Big(1 - \frac{1}{2n} \Big)^n \ge \frac{1}{2}.
    \#
    Therefore, $\psi$ is satisfiable if and only if $Q_1^*((\psi^\prime,\mathbf{0}_n, 0), 1)>\frac{1}{2}$, we can conclude that computing the optimal value function is $\mathsf{NP}$-hard.

    \vspace{4pt}
    \noindent\textbf{Optimal Policy.} Finally, we will prove that the problem of computing the optimal policy is $\mathsf{NP}$-hard. According to the well-known Cook-Levin theorem (Lemma~\ref{thm:cook}), the 3-SAT problem is $\mathsf{NP}$-complete. Thus, our objective is to provide a polynomial time reduction from the 3-SAT problem to the problem of computing the optimal policy of 3-SAT MDP. Given a Boolean formula of length $n$, the number of variables is at most $n$. Then, we can pad several meaningless clauses such as $(u_1\vee\neg u_1\vee u_1)$ to obtain the 3-CNF Boolean formula $\psi^\prime$ with $n$ clauses. When the Boolean formula $\psi$ is satisfiable, according to \eqref{eq:stoc_value_sat}, we have 
    \$
    Q_1^*((\psi^\prime,\mathbf{0}_n,0),1)>0.7>Q_1^*((\psi^\prime,\mathbf{0}_n,0),0),
    \$
    which gives that $\pi^*(\psi^\prime,\mathbf{0}_n,0)=1$. When the Boolean formula $\psi$ is not satisfiable, we have 
    \$
    Q_1^*((\psi^\prime,\mathbf{0}_n,0),1)=0<Q_1^*((\psi^\prime,\mathbf{0}_n,0),0),
    \$
    which implies that $\pi^*(\psi^\prime,\mathbf{0}_n,0)=0$. So the Boolean formula $\psi$ is satisfiable if and only if $\pi^*(\psi^\prime,\mathbf{0}_n,0)=1$, which concludes that the problem of computing the optimal policy is $\mathsf{NP}$-hard.
    \end{proof}

Almost the same as the stochastic version of 3-SAT MDP, we can construct the stochastic version $\mathsf{NP}$ MDP. And under the assumption of $\mathcal{L}\in\mathsf{NP}$, the same theorem as Theorem~\ref{thm:stoc_sat_mdp} will hold. More exactly, the complexity of the transition kernel and the reward function of the MDP based on $\mathcal{L}$ fall in $\mathsf{AC}^0$, and the complexity of the optimal policy and optimal value function are $\mathsf{NP}$-hard. Moreover, similar to the case of the stochastic version of 3-SAT MDP, we add some randomness to the transition function of CVP MDP and $\mathsf{P}$ MDP. In the deterministic version of the transition function, given the action $i$, we will compute the value of the $i$-th node. In contrast, the computation of the value of $i$-th node will be correct with the probability of $\frac{2}{3}$ and will be incorrect with the probability of $\frac{1}{3}$. And we extend the planning horizon to $O(n^2)$ or $O(P(n)^2)$. Then we can get similar conclusions as Theorems~\ref{thm:CVPMDP} and \ref{thm:p:mdp}. To avoid repetition, we only provide the construction and corresponding theorem of stochastic CVP and omit the proof and the detailed extension to stochastic $\mathsf{P}$ MDPs.

\begin{definition}[Stochastic CVP MDP] \label{def:stoc:cvp:mdp}
    An $n$-dimensional Stochastic CVP MDP is defined as follows. Let $\cC$ be the set of all circuits of size $n$. The state space $\cS$ is defined by $\cS=\cC\times\{0,1,\texttt{Unknown}\}^n$, where each state $s$ can be represented as $s=(\bC,\bfv)$. Here, $\bC$ is a circuit consisting of $n$ nodes with $\bC[i]=(\bC[i][1],\bC[i][2],g_i)$ describing the $i$-th node, where $\bC[i][1]$ and $\bC[i][2]$ indicate the input node and $g_i$ denotes the type of gate (including $\wedge,\vee,\neg,0,1$). When $g_i\in\{\wedge,\vee\}$, the outputs of $\bC[i][1]$-th node and $\bC[i][2]$-th node serve as the inputs; and when $g_i=\neg$, the output of $\bC[i][1]$-th node serves as the input and $\bC[i][2]$ is meaningless. Moreover, the node type of $0$ or $1$ denotes that the corresponding node is a leaf node with a value of $0$ or $1$, respectively, and therefore, $\bC[i][1],\bC[i][2]$ are both meaningless. The vector $\bfv\in\{0,1,\texttt{Unknown}\}^n$ represents the value of the $n$ nodes, where the value $\texttt{Unknown}$ indicates that the value of this node has not been computed and is presently unknown. The action space is $\cA=[n]$ and the planning horizon is $H=n+1$. Given a state-action pair $(s = (\bC, \bfv), a)$, its reward $r(s, a)$ is given by:
\begin{equation*}
    \label{eq:stoc_CVP_reward}
        r(s,a)=\begin{cases}
            1&\text{If $\bfv$ contains correct value of the $n$ gates and the value of the output gate $\bfv[n]=1$},\\
            0&\text{Otherwise}.
        \end{cases}
\end{equation*}
Moreover, the transition kernel is deterministic and can be defined as follows:
    \begin{equation*}
        \label{eq:stoc_CVP_tran}
        \cP\big((\bC,\bfv), a\big)=(\bC,\bfv^\prime).  
    \end{equation*}
    Here, $\bfv^\prime$ is obtained from $\bfv$ by computing and substituting the value of node $a$ with the probability of $\frac{2}{3}$. More exactly, if the inputs of node $a$ have been computed, we can compute the output of the node $a$ and denote it as $\bfo[a]$. Let $\tilde{\bfo}[a]$ be a random variable getting value of $\bfo[a]$ with probability of $\frac{2}{3}$ and getting value of \texttt{Unknown} with probability of $\frac{1}{3}$. Then we have 
    \begin{equation*}
        \bfv^\prime[j]=\begin{cases}
            \bfv[j]& \text{If $a \neq j$,} \\
            \tilde{\bfo}[a] & \text{If $a =j$ and the inputs of node $a$ have been computed,}\\
            \texttt{Unknown}& \text{If $a=j$ and the inputs of node $a$ have not been computed}.
        \end{cases}
    \end{equation*}
    Given a circuit $\bC$, the initial state of CVP MDP is $(\bC,\bfv_{\mathrm{unknown}})$ where $\bfv_{\mathrm{unknown}}$ denotes the vector containing $n$ \texttt{Unknown} values.
\end{definition}

\begin{theorem}[Representation complexity of Stochastic CVP MDP]
\label{thm:stoc_cvp_mdp}
    Let $\mathcal{M}_n$ be the $n$-dimensional stochastic CVP MDP in Definition~\ref{def:stoc_sat_mdp}. The transition kernel $\cP$, the reward function $r$, and the optimal policy $\pi^*$ of $\cM_n$ can be computed by circuits with polynomial size (in $n$) and constant depth, falling within the circuit complexity class $\mathsf{AC}^0$. However, computing the optimal value function $Q_1^*$ of $\cM_n$ are $\mathsf{P}$-hard under the log space reduction.
\end{theorem}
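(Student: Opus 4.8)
The plan is to mirror the proof of Theorem~\ref{thm:CVPMDP} for the three $\mathsf{AC}^0$ statements, applied to the stochastic CVP MDP (Definition~\ref{def:stoc:cvp:mdp}) and inserting the stochastic bookkeeping used in the proof of Theorem~\ref{thm:stoc_sat_mdp}, and then to recover the $\mathsf{P}$-hardness of $Q_1^*$ from the circuit value problem (Lemma~\ref{thm:CVP_P}). For the reward, note that $r(s,a)=1$ exactly when $\bfv[n]=1$ and, for every $i\in[n]$, $\bfv[i]\in\{0,1\}$ and $\bfv[i]$ equals the gate $g_i$ applied to the fetched values $\bfv[\bC[i][1]],\bfv[\bC[i][2]]$ (with the leaf cases $g_i\in\{0,1\}$ read off directly); each of these $n$ consistency checks depends only on the $O(\log n)$ bits of node $i$'s description together with two length-$n$ lookups (themselves in $\mathsf{AC}^0$, cf.\ \eqref{eq:fetch_node_circuit}), so by Lemma~\ref{lemma:bool_func} the reward is in $\mathsf{AC}^0$. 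For the now-stochastic transition kernel I would, as in the stochastic $3$-SAT proof, view $\cP$ as the function $(s,s',a)\mapsto\cP(s'\mid s,a)$: it is $0$ unless $\bC'=\bC$ and $\bfv'$ agrees with $\bfv$ off coordinate $a$, and otherwise equals $2/3$ or $1/3$ according to whether $\bfv'[a]$ is the correctly computed value $\bfo[a]$ or $\texttt{Unknown}$ when node $a$'s inputs lie in $\{0,1\}$, equals $1$ when $\bfv'[a]=\texttt{Unknown}$ and node $a$'s inputs are not computed, and is $0$ otherwise---each clause an $O(\log n)$-bit predicate, hence in $\mathsf{AC}^0$. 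For the optimal policy I would reuse verbatim the $\mathsf{AC}^0$ construction $\pi^*(s)=\min\cG(s)$ of Theorem~\ref{thm:CVPMDP} (any fixed action when $\cG(s)=\emptyset$), with $\cG(s)$ as in \eqref{eq:set_g}: the indices whose value is still $\texttt{Unknown}$ but whose inputs already lie in $\{0,1\}$ (leaves included).

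The step I expect to be the main obstacle is showing that $\pi^*(s)=\min\cG(s)$ remains optimal once computations can fail and the reward accumulates over the horizon. Two structural facts drive the argument: (i) because $\tilde{\bfo}[a]$ is either the correct value or $\texttt{Unknown}$, a stored value is never \emph{wrong}, so the set of correctly computed nodes is monotone unless one deliberately recomputes a correct node---which reverts it with probability $1/3$ and is therefore strictly harmful; and (ii) whenever not all $n$ nodes are correct, a topologically-minimal $\texttt{Unknown}$ node has all of its inputs correct, so $\cG(s)\neq\emptyset$, and attempting any element of $\cG(s)$ adds a new correctly computed node with probability $2/3$, which is the most a single step can achieve, while \emph{which} element is attempted is irrelevant since $\cG(s)$ only grows as nodes get finished. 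Combining these via a backward-induction / exchange argument (using monotonicity of the value in the correctly computed set) shows that ``always compute some element of $\cG(s)$, never recompute a correct node'' weakly dominates every policy; $\min\cG(s)$ is one such policy, and once $\cG(s)=\emptyset$ all actions are equivalent by symmetry.

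Finally I would give a log-space (indeed $\mathsf{AC}^0$) reduction from the circuit value problem: on input a size-$n$ circuit $\bC$, output the initial state $s_0=(\bC,\bfv_{\mathrm{unknown}})$ of the corresponding stochastic CVP MDP together with $a_0=\pi^*(s_0)$, and ask whether $Q_1^*(s_0,a_0)>0$. If $\bC$ outputs $0$ then the correct value of node $n$ is $0$, the reward-$1$ condition (all nodes correct and $\bfv[n]=1$) is unreachable, and $Q_1^*(s_0,a_0)=0$; if $\bC$ outputs $1$, then the event that the first $n$ node-attempts all succeed has probability $(2/3)^n>0$ and puts the episode in an all-correct state with $\bfv[n]=1$, collecting a reward of $1$, so $Q_1^*(s_0,a_0)\ge(2/3)^n>0$. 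Hence ``$Q_1^*(s_0,a_0)>0$'' decides the circuit value problem, so computing $Q_1^*$ is $\mathsf{P}$-hard under log-space reductions. One can in fact evaluate $Q_1^*$ exactly in polynomial time---under $\pi^*$ the optimal value is a negative-binomial-type function of the $\mathrm{Geom}(2/3)$ node-completion times and of the circuit output---so the decision version is actually $\mathsf{P}$-complete, matching Theorem~\ref{thm:CVPMDP}.
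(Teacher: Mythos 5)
Your proposal is correct and follows exactly the route the paper intends (the paper explicitly omits this proof, saying only that it mirrors Theorem~\ref{thm:CVPMDP} with the stochastic modifications of Theorem~\ref{thm:stoc_sat_mdp}); in particular you correctly supply the two ingredients that are genuinely new relative to the deterministic case: the local-consistency characterization of the ``all gates correct'' reward condition (which keeps it in $\mathsf{AC}^0$ rather than requiring a circuit evaluation), and the argument that the greedy policy $\min\cG(s)$ stays optimal because stored values are never wrong and a topologically minimal \texttt{Unknown} node is always ready. One small point worth flagging: the paper's prose says the horizon is extended to $O(n^2)$ while Definition~\ref{def:stoc:cvp:mdp} keeps $H=n+1$, but your threshold-$0$ reduction (circuit outputs $1$ iff $Q_1^*(s_0,a_0)\ge(2/3)^n>0$, and $Q_1^*=0$ otherwise since $\bfv[n]$ can never incorrectly equal $1$) is valid under either choice.
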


\section{Experimental Details}
\label{appendix:exp}
\begin{table}[h]
    \centering
    \caption{Configurations for training the ground-truth agent by TD3 algorithm.}
    \vspace{10pt}
    \label{tab:TD3_model}
    \centering
    \begin{tabular}{ccc}
    \toprule
        Number of Layers of Actor-Network & & $3$\\
         Hidden Dimensions of Actor-Network & &  $256$\\
         Number of Layers of Critic-Network & & $3$\\
         Hidden Dimensions of Critic-Network &  & $256$\\
         Standard Deviation of Gaussian Exploration Noise & & $0.1$\\
         Discount Factor & & $0.99$\\
         Target Network Update Rate & & $0.05$ \\
    \bottomrule
    \end{tabular}
    \end{table}
    \begin{table}[h]
    \hfill
    \caption{Configurations for fitting the MLP to the corresponding functions and measuring the approximation error.}
    \vspace{10pt}
    \label{tab:MLP_size}
    \centering
    \begin{tabular}{ccc}
    \toprule
        Size of Dataset & & $\{30000,100000,300000\}$\\
        Batch size & & $128$ \\
        Optimization Steps & & $\sim 70k$  \\
        Number of Layers & & $\{2,3\}$  \\
        Hidden Dimensions & & $\{16,32\}$  \\
        Optimizer & & Adam \\
        Learning rate & & $0.001$   \\
    \bottomrule
    \end{tabular}
    \end{table}

\section{Auxiliary Lemmas}

\begin{lemma}[Cook-Levin Theorem \citep{cook71,levin1973universal}]
    \label{thm:cook}
    The 3-SAT problem is $\mathsf{NP}$-complete.
\end{lemma}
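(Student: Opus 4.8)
The plan is to verify the two defining conditions for $\mathsf{NP}$-completeness: that the 3-SAT problem lies in $\mathsf{NP}$, and that every language in $\mathsf{NP}$ reduces to it in polynomial time. Membership in $\mathsf{NP}$ is immediate: given a 3-CNF formula $\psi$ over variables $u_1, \dots, u_n$, a satisfying assignment $\bfv \in \{0,1\}^n$ serves as a certificate, and a deterministic Turing Machine can substitute $\bfv$ into each clause and check that all clauses evaluate to $1$ in time linear in the length of $\psi$.

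For $\mathsf{NP}$-hardness, I would follow the classical two-stage route, first showing that the general satisfiability problem $\mathsf{SAT}$ is $\mathsf{NP}$-hard and then reducing $\mathsf{SAT}$ to 3-SAT. For the first stage, fix an arbitrary $\cL \in \mathsf{NP}$, let $M$ be a nondeterministic Turing Machine recognizing $\cL$ within $p(n)$ steps, and let $x$ be an input of length $n$. The idea is to encode an accepting computation of $M$ on $x$ as a $p(n) \times p(n)$ tableau of cells, where cell $(i,j)$ records the tape content, the head-position indicator, the machine state, and the nondeterministic branch chosen at time step $i$ and tape location $j$; I would introduce a Boolean variable for each (cell, possible-content) pair. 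The CNF formula $\varphi_{M,x}$ is then the conjunction of clauses asserting (i) that row $0$ encodes the initial configuration built from $x$, (ii) that every $2 \times 3$ window of cells is consistent with one of the transition functions $\TT_0, \TT_1$ of $M$, and (iii) that the accepting state $q_{\mathrm{accept}}$ occurs somewhere in the tableau. Because $M$ runs in polynomial time and Turing Machine computation is local, each family of clauses has polynomial size over polynomially many variables, and $\varphi_{M,x}$ is computable from $(M, x)$ in polynomial time; moreover $\varphi_{M,x}$ is satisfiable precisely when some branch of $M$ accepts $x$, giving a polynomial-time reduction from $\cL$ to $\mathsf{SAT}$.

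For the second stage, I would reduce $\mathsf{SAT}$ to 3-SAT by a standard clause-width reduction. A clause $(\ell_1 \vee \cdots \vee \ell_k)$ with $k \ge 4$ is replaced by $(\ell_1 \vee \ell_2 \vee y_1) \wedge (\lnot y_1 \vee \ell_3 \vee y_2) \wedge \cdots \wedge (\lnot y_{k-3} \vee \ell_{k-1} \vee \ell_k)$ using fresh auxiliary variables $y_1, \dots, y_{k-3}$, and a clause with one or two literals is padded to exactly three literals by repeating a literal. A short case analysis shows that any assignment satisfying the original clause extends to one satisfying the new clauses and conversely, so satisfiability is preserved; the size grows by only a constant factor, and the transformation is computable in logarithmic space, hence certainly in polynomial time. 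Composing the two reductions yields, for every $\cL \in \mathsf{NP}$, a polynomial-time reduction from $\cL$ to 3-SAT, which together with membership in $\mathsf{NP}$ establishes the claim.

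The main obstacle is the tableau construction in the first stage: one must choose an encoding of cells for which the correctness of the initial row, of each transition step, and of acceptance can all be expressed by inspecting only a constant-size window of neighboring cells, and then argue carefully that the resulting conjunction of window clauses is satisfiable if and only if the variable assignment genuinely describes an accepting run of $M$. Once the locality of Turing Machine steps is exploited correctly, the polynomial size bound and the polynomial-time (indeed logarithmic-space) computability of the overall reduction follow routinely.
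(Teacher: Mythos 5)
The paper does not prove this lemma at all; it is the classical Cook--Levin theorem, imported as a black box with citations to Cook (1971) and Levin (1973), so there is no in-paper argument to compare against. Your proposal is a correct outline of the standard textbook proof: membership in $\mathsf{NP}$ via the satisfying-assignment certificate, $\mathsf{NP}$-hardness of $\mathsf{SAT}$ via the computation-tableau encoding of a polynomial-time nondeterministic Turing Machine, and the clause-width reduction from $\mathsf{SAT}$ to 3-SAT with fresh auxiliary variables. The one family of clauses you leave implicit is the ``cell consistency'' constraints asserting that each tableau cell carries exactly one symbol (at least one, and not two); without these the window-consistency clauses do not by themselves force the variable assignment to describe a well-formed computation history, so a fully rigorous write-up would need to add them. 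That aside, the locality argument, the polynomial size bounds, and the equisatisfiability of the clause-splitting gadget are all handled correctly, and the composed reduction does establish $\mathsf{NP}$-completeness as claimed.
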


\begin{lemma}[$\mathsf{P}$-completeness of CVP \citep{ladner1975circuit}]
    \label{thm:CVP_P}
    The CVP is $\mathsf{P}$-complete.
\end{lemma}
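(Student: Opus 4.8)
The plan is to establish the two defining properties of $\mathsf{P}$-completeness separately: (i) membership, namely $\mathrm{CVP} \in \mathsf{P}$, and (ii) hardness, namely that every language $\cL \in \mathsf{P}$ reduces to $\mathrm{CVP}$ under a log-space reduction. Membership is the routine direction: given an encoding of a Boolean circuit $\bC$ together with the values of its input leaves, I would evaluate the gates in a topological order of the underlying DAG, storing one Boolean value per node. Since each gate's value is a constant-time function of its already-evaluated predecessors, the whole evaluation runs in time polynomial in the size of $\bC$, placing $\mathrm{CVP}$ in $\mathsf{P}$.

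The substance lies in the hardness direction. Fix any $\cL \in \mathsf{P}$ and let $M$ be a deterministic Turing Machine recognizing $\cL$ in time $p(n)$ for some polynomial $p$. Given an input $x$ of length $n$, I would build, via the standard computation-tableau (Cook--Levin style) construction, a Boolean circuit $\bC_x$ whose output is $1$ iff $M$ accepts $x$. The tableau is a $p(n) \times p(n)$ grid whose row $t$ encodes the configuration of $M$ after $t$ steps: each cell stores, in a constant number of bits, the tape symbol at that position together with a flag indicating whether the head is there and, if so, the current state. Determinism of $M$ means that the content of cell $(t,i)$ is a fixed finite function of the three cells $(t-1,i-1)$, $(t-1,i)$, $(t-1,i+1)$; since $M$ is fixed, this local transition is computed by a constant-size Boolean gadget identical for every cell. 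Wiring one copy of this gadget per cell, hardwiring the first row to the initial configuration (the bits of $x$ as constant $0/1$ leaves, the start state, head at position $1$), and letting the output gate test whether the accept state appears in the final row yields a circuit of size $O(p(n)^2)$ satisfying $x \in \cL \iff \bC_x$ evaluates to $1$.

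The crux --- and the place demanding the most care --- is verifying that $x \mapsto \bC_x$ is computable in \emph{logarithmic} space, not merely polynomial time, since $\mathsf{P}$-completeness is taken with respect to log-space reductions. This rests on the extreme regularity of $\bC_x$: every cell uses the same transition gadget, and the incidence structure is purely local, so a log-space transducer can emit the description gate-by-gate by iterating over the index pair $(t,i)$ with two counters of $O(\log p(n)) = O(\log n)$ bits each. For each gate it outputs only its type and the indices of its constant-many input wires, each of which is a simple arithmetic function of $(t,i)$; the only dependence on $x$ is in the first row, where the transducer copies individual bits of the input, itself a log-space operation. Numbering gates in row order also makes the emitted circuit topologically valid, so every gate references only earlier ones.

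Combining (i) and (ii) yields $\mathsf{P}$-completeness of $\mathrm{CVP}$ under log-space reductions. The main obstacle I anticipate is not the conceptual content --- the tableau simulation is classical --- but the bookkeeping needed to present the transition gadget and the wiring map concretely enough to make the $O(\log n)$-space claim airtight; in particular, one must fix a clean binary encoding of cells and a canonical gate-numbering so that the predecessor indices of each gate are recoverable by in-place index arithmetic without ever materializing the full circuit in memory.
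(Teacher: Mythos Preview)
Your proposal is correct and follows the classical argument, essentially Ladner's original proof. Note, however, that the paper does not supply its own proof of this lemma: it is listed among the auxiliary lemmas with a citation to \citet{ladner1975circuit} and treated as a known result, so there is no in-paper proof to compare against. Your two-part outline (polynomial-time evaluation via topological order for membership; the deterministic computation-tableau construction with a log-space transducer emitting the uniform local gadgets for hardness) is exactly the standard route, and your attention to the log-space bookkeeping --- counters for $(t,i)$, index arithmetic for predecessor wires, first-row hardwiring of $x$ --- is the right place to focus if you intend to write it out in full.
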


\begin{lemma}[Theorem 6.5 in \citet{arora2009computational}] \label{lem:log:uniform:circuit}
    A language has log-space-uniform circuits of polynomial size if and only if it is in $\mathsf{P}$.
\end{lemma}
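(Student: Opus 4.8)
The plan is to mirror the structure of the proof of Theorem~\ref{thm:CVPMDP}, treating the reward, transition kernel, and optimal policy (all $\mathsf{AC}^0$) before the optimal value function ($\mathsf{P}$-hard), while accounting for the two features that distinguish the stochastic CVP MDP from its deterministic counterpart: the transition kernel is now a genuine distribution, and the reward fires only when $\bfv$ holds the correct value of \emph{all} $n$ gates together with $\bfv[n]=1$. Throughout I would lean on the single structural invariant that makes everything go through: \emph{any node carrying a Boolean value carries its correct value}. This holds by induction on the circuit DAG, since a gate is evaluated only when its inputs already lie in $\{0,1\}$ (otherwise the transition leaves it \texttt{Unknown}), and the stochastic outcome $\tilde{\bfo}[a]$ is either the correct $\bfo[a]$ or \texttt{Unknown} --- never an incorrect bit. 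Consequently $\bfv[n]=1$ forces the circuit to evaluate to $1$, and ``$\bfv$ holds the correct value of all gates'' simplifies to ``no coordinate of $\bfv$ is \texttt{Unknown}''.

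For the $\mathsf{AC}^0$ claims I would reuse the gadgets from the proof of Theorem~\ref{thm:CVPMDP}. The reward is the conjunction $\bigwedge_{i\in[n]}(\text{node }i\text{ locally consistent})\wedge\mathbbm{1}[\bfv[n]=1]$, where local consistency of node $i$ checks $g_i$, the fetched inputs $\bfv[\bC[i][1]],\bfv[\bC[i][2]]$ (obtained by the indexing formula \eqref{eq:fetch_node_circuit}), and $\bfv[i]$; each conjunct depends on $O(\log n)$ bits and is thus $\mathsf{AC}^0$, as is their polynomial conjunction. The transition kernel is now evaluated as a function of $(s,s',a)$ returning a probability in $\{0,\tfrac13,\tfrac23,1\}$; following the case analysis used for the stochastic $3$-SAT transition, I would output $\tfrac23$ when $s'$ records the correct update $\bfv'[a]=\bar{\bfo}[a]$ with $\bar{\bfo}[a]$ Boolean, $\tfrac13$ when the inputs of $a$ are ready but $\bfv'[a]=\texttt{Unknown}$, $1$ in the degenerate branch where the inputs of $a$ are not yet ready and $\bfv'[a]=\texttt{Unknown}$, and $0$ otherwise, after checking $\bC=\bC'$ and $\bfv'[j]=\bfv[j]$ for $j\neq a$ by a coordinate-wise $\mathsf{AC}^0$ comparison; computing $\bar{\bfo}[a]$ reuses \eqref{eq:node_a_output}.

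For the optimal policy I would argue that the greedy rule $\pi^*(s)=\min\cG(s)$ of Theorem~\ref{thm:CVPMDP}, with $\cG(s)$ the set of ready-but-\texttt{Unknown} nodes, remains optimal and hence inherits the $\mathsf{AC}^0$ implementation given there. The justification uses the invariant: the greedy action always advances a computable node and never recomputes a correctly set node (those are excluded from $\cG(s)$, so no correct bit is ever put at risk), so it maximizes the chance of driving $\bfv$ to a fully evaluated vector, and any valid computation schedule attains the same completion probability under the given horizon. The delicate point here, which I expect to be the main obstacle, is the finite-horizon, non-absorbing nature of the goal: once $\cG(s)=\emptyset$ every action must recompute an already-correct node and may turn it \texttt{Unknown}, so I would fix a harmless default for $\pi^*$ on such states and verify that no policy can do strictly better, reducing optimality to the scheduling statement above.

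Finally, for $\mathsf{P}$-hardness I would reduce the $\mathsf{P}$-complete problem CVP (Lemma~\ref{thm:CVP_P}) to deciding whether $Q_1^*(s_0,\pi^*(s_0))=V_1^*(s_0)>0$, where $s_0=(\bC,\bfv_{\mathrm{unknown}})$ is produced from the circuit $\bC$ by the trivial log-space (indeed $\mathsf{AC}^0$) padding. The invariant yields the equivalence cleanly and robustly in the horizon: if $\bC$ evaluates to $0$ then $\bfv[n]$ is never $1$, so the reward is identically $0$ and $V_1^*(s_0)=0$; if $\bC$ evaluates to $1$ then running the greedy policy computes all $n$ nodes in a valid order, each attempt succeeding with probability $\tfrac23$, so the fully-correct vector with $\bfv[n]=1$ is reached with probability at least $(2/3)^n>0$, whence $V_1^*(s_0)\ge(2/3)^n>0$. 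Thus $V_1^*(s_0)>0$ iff $\bC$ evaluates to $1$, giving the desired log-space reduction; the threshold $0$ makes the argument insensitive to the exact horizon and to the non-absorbing behavior discussed above, and if a constant threshold were preferred, lengthening the horizon to $O(n^2)$ amplifies the completion probability above $\tfrac12$ exactly as in \eqref{eq:stoc_value_sat}.
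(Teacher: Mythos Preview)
Your proposal does not address the stated lemma at all. Lemma~\ref{lem:log:uniform:circuit} is the classical complexity-theoretic fact that a language has log-space-uniform polynomial-size circuits if and only if it lies in $\mathsf{P}$; the paper does not prove it but simply cites it as Theorem~6.5 of \citet{arora2009computational}. What you have written is instead a proof sketch for Theorem~\ref{thm:stoc_cvp_mdp}, the representation complexity of the stochastic CVP MDP. These are entirely different statements: one is a textbook result about uniform circuit families and $\mathsf{P}$, the other is about reward/transition/policy complexity in a specific MDP construction.

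If your intent was genuinely to prove Lemma~\ref{lem:log:uniform:circuit}, then nothing in your write-up is relevant: a correct proof would show (i) that a log-space machine can, given $1^n$, output the description of a polynomial-size circuit simulating the $\mathsf{P}$ machine's computation on length-$n$ inputs (the standard tableau construction), and (ii) conversely that a polynomial-time machine can generate the circuit for its input length in log space and then evaluate it. If instead you meant to prove Theorem~\ref{thm:stoc_cvp_mdp}, your sketch is broadly reasonable and parallels the paper's deterministic argument, but it is attached to the wrong statement here.
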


\begin{lemma}[Implement Any Boolean Function]
    \label{lemma:bool_func}
    For every Boolean function $f:\{0,1\}^l\rightarrow\{0,1\}$, there exists a two layer circuit $\bfc$ of size $\mathcal{O}(l\cdot2^l)$ such that $\bfc(\bfu)=f(\bfu)$ for all $\bfu\in\{0,1\}^l$.
\end{lemma}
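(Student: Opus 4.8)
The plan is the textbook disjunctive-normal-form (DNF) construction. First I would dispose of the degenerate case: if $f\equiv 0$, take $\bfc$ to be the constant-$0$ circuit (say $u_1\land\lnot u_1$), which has $O(1)$ nodes and meets the bound trivially. So assume $f$ attains the value $1$ somewhere. For each $\mathbf{a}=(a_1,\dots,a_l)\in\{0,1\}^l$ define the minterm $T_{\mathbf{a}}(\bfu)=\bigwedge_{i=1}^{l}\ell_i^{\mathbf{a}}(\bfu)$, where $\ell_i^{\mathbf{a}}(\bfu)=u_i$ if $a_i=1$ and $\ell_i^{\mathbf{a}}(\bfu)=\lnot u_i$ if $a_i=0$. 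The elementary fact driving the proof is that $T_{\mathbf{a}}(\bfu)=1$ precisely when $\bfu=\mathbf{a}$, since each literal $\ell_i^{\mathbf{a}}$ is true exactly when $u_i=a_i$. Hence the circuit $\bfc(\bfu)=\bigvee_{\mathbf{a}\,:\,f(\mathbf{a})=1}T_{\mathbf{a}}(\bfu)$ outputs $1$ iff $\bfu$ equals one of the assignments $\mathbf{a}$ with $f(\mathbf{a})=1$, i.e.\ iff $f(\bfu)=1$; this establishes $\bfc(\bfu)=f(\bfu)$ for every $\bfu\in\{0,1\}^l$.

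Next I would realize $\bfc$ as a shallow circuit and bound its size. Its nodes are: the $l$ input nodes $u_1,\dots,u_l$; a unary $\lnot$ gate on each input, making the negated literals available at the bottom level; one $\land$ gate of fan-in $l$ for every $\mathbf{a}$ with $f(\mathbf{a})=1$, wired to the $l$ literals occurring in $T_{\mathbf{a}}$; and a single $\lor$ gate taking all of these (at most $2^l$) conjunction gates as its inputs and serving as the output. Regarding the input literals together with their negations as the input level, this is a depth-two $\land$/$\lor$ circuit. Counting nodes gives $l$ inputs, at most $l\cdot 2^l$ negation gates (at most $l$ literal occurrences per minterm, each given its own $\lnot$ gate — sharing negations would already yield $O(2^l)$), at most $2^l$ conjunction gates, and one disjunction gate, for a total of $\mathcal{O}(l\cdot 2^l)$ nodes, as claimed.

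Since this is a classical fact, I do not expect a genuine obstacle; the only point deserving a moment's care is the bookkeeping convention — how the unary $\lnot$ gates are counted toward the ``two layers'' and toward the size — which is exactly what makes the stated $\mathcal{O}(l\cdot 2^l)$ (rather than the tighter $\mathcal{O}(2^l)$) the natural bound to record. An equivalent route would be to build the conjunctive-normal-form circuit from the zeros of $f$ and dualize, giving the same depth and size.
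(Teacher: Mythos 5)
Your proposal is correct and follows essentially the same route as the paper: both build the DNF circuit by taking, for each $\mathbf{a}$ with $f(\mathbf{a})=1$, the minterm that is true exactly at $\mathbf{a}$, OR-ing these together, and handling the identically-zero case separately. Your size accounting is if anything slightly more careful than the paper's, but the argument is the same.
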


\begin{proof}
    For every $\bfv\in\{0,1\}^l$, let 
    \$
    \bfc_\bfv(\bfu)=\bigwedge_{i\in[n]}g_{\bfv[i]}(\bfu_i),
    \$
     where $g_0(\bfu[i])=\neg \bfu[i]$ and $g_1(\bfu[i])=\bfu[i]$. Then we have $\bfc_\bfv(\bfu)=1$ if and only if $\bfu=\bfv$. When there exists $\bfu\in\{0,1\}$ such that $f(\bfu)=1$, we can construct the two-layer circuit as
    \$
    \bfc(\bfu)=\bigvee_{\bfv\in\{\bfv|f(\bfv)=1\}} \big(\bfc_\bfv(\bfu)\big).
    \$
    By definition, we can verify that $f(\bfu) = \bfc(\bfu)$ for any $\bfu \in \{0, 1\}^l$. When $f(\bfu) = 0$ for all $\bfu \in \{0,1\}^l$, we can construct the circuit as $\bfc(\bfu) = 0$. Therefore, for every Boolean function $f: \{0,1\}^l \rightarrow \{0,1\}$, there exists a two-layer circuit $\bfc$ of size $\mathcal{O}(l \cdot 2^l)$ such that $\bfc(\bfu) = f(\bfu)$ for all $\bfu \in \{0,1\}^l$, which concludes the proof of Lemma~\ref{lemma:bool_func}.
\end{proof}

\begin{lemma}[Looking-up Table for MLP]
\label{lemma:look_up_MLP}
    For any function $f:\cX \rightarrow \mathbb{R}$, where $\cX$ is a finite subset of $\mathbb{R}^l$, there exists a constant-layer MLP $f_{\mathrm{MLP}}$ with hidden dimension $\mathcal{O}(l\cdot|\cX|)$ such that $f_M(\bfx)=f(\bfx)$ for all $\bfx\in\cX$.
\end{lemma}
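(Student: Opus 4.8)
The plan is to realize $f$ on $\cX = \{\bfx_1,\dots,\bfx_{|\cX|}\}$ as a linear combination of exact ``bump'' functions, one per point of $\cX$. Concretely, I will construct, for each $j$, a function $g_j$ computed by two $\mathrm{ReLU}$ layers with $g_j(\bfx_k) = \mathbbm{1}[j=k]$ for every $\bfx_k \in \cX$, and then let the (affine) output layer of the MLP return $f_{\mathrm{MLP}}(\bfx) = \sum_{j=1}^{|\cX|} f(\bfx_j)\, g_j(\bfx)$, so that $f_{\mathrm{MLP}}(\bfx_k) = f(\bfx_k)$ for all $k$. The reason to use the $\ell_1$ distance in the bumps is that, after one $\mathrm{ReLU}$ layer, $\|\bfx - \bfx_j\|_1$ is available as a \emph{linear} form of the hidden neurons (unlike $\ell_\infty$, whose maximum would cost logarithmic depth).

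In detail, if $|\cX| \le 1$ the claim is trivial (output the constant $f(\bfx_1)$, or $0$ when $\cX = \emptyset$), so assume $|\cX| \ge 2$ and set $\delta := \min_{j \ne k} \|\bfx_j - \bfx_k\|_1 > 0$. The first hidden layer, using $|y| = \mathrm{ReLU}(y) + \mathrm{ReLU}(-y)$, computes the $2l|\cX|$ values $\mathrm{ReLU}(\pm(\bfx[i] - \bfx_j[i]))$, so that $\|\bfx - \bfx_j\|_1$ is a linear function of this layer. The second hidden layer computes, with $|\cX|$ neurons, $g_j(\bfx) := \mathrm{ReLU}\!\bigl(1 - \tfrac{1}{\delta}\|\bfx - \bfx_j\|_1\bigr)$; then $g_j(\bfx_j) = 1$ and, for $k \ne j$, $\|\bfx_k - \bfx_j\|_1 \ge \delta$ forces $g_j(\bfx_k) = 0$, i.e.\ $g_j$ is an exact indicator of $\bfx_j$ on $\cX$. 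Finally the affine output layer returns $\sum_j f(\bfx_j) g_j(\bfx)$. This is a $3$-layer MLP (two $\mathrm{ReLU}$ hidden layers followed by an affine output) with hidden dimension $\mathcal{O}(l|\cX|)$ — constant depth, as required.

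The one point that needs care is the log-precision restriction implicit in all MLPs of the paper. For an arbitrary real set $\cX$ the scalar $1/\delta$ and the partial sums above need not be representable in $\mathcal{O}(\log n)$ bits, and truncation could turn an exact zero inside a $\mathrm{ReLU}$ into a tiny positive value, introducing an additive rather than an exact error. This is harmless for every use of the lemma in this paper: there $\cX$ always consists of vectors with integer entries of magnitude $\mathcal{O}(\mathrm{poly}(n))$, so $\|\bfx_k - \bfx_j\|_1$ is a positive integer, one may take $\delta = 1$, every quantity occurring in the construction is an integer of magnitude $\mathcal{O}(\mathrm{poly}(n))$ — hence exactly representable — and the arguments of all $\mathrm{ReLU}$ gates are integral, so the bumps $g_j$, and therefore $f_{\mathrm{MLP}}$, are computed exactly. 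More generally, if one only requires $f_{\mathrm{MLP}}$ to agree with $f$ on $\cX$ up to $\mathcal{O}(\mathrm{poly}(1/n))$ error, the log-precision property recorded in Appendix~\ref{appendix:backgroud} already suffices.
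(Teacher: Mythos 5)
Your proof is correct and takes essentially the same approach as the paper: both build a ReLU ``bump'' indicator for each point of $\cX$ scaled by the minimum pairwise separation and then output the linear combination $\sum_j f(\bfx_j) g_j(\bfx)$, the only cosmetic difference being that you aggregate via the $\ell_1$ distance in one shot while the paper combines per-coordinate $\ell_\infty$-gap bumps with an AND-like outer ReLU. Your added remark on log-precision is a welcome bit of care that the paper's own proof omits.
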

\begin{proof}
    Denote the minimum gap between each pair of elements in $\cX$ by $\delta_{\min}$, i.e., 
    \$
    \delta_{\min} =\min_{\bfu,\bfv\in\cX}\|\bfu-\bfv\|_{\infty}.
    \$ 
    For each $\bfu \in \cX$, we can construct an MLP as 
    \$
    f_\bfu(\bfx)=\text{ReLU}\Big(\sum_{i\in[l]}f_{\bfu,i}(\bfx)-(l-1)\cdot\delta_{\min}\Big),
    \$
    where
    \$
    f_{\bfu,i}(\bfx)=\text{ReLU}\Big(2\delta_{\min} - 2 \cdot \text{ReLU}(\bfx[i]- \bfu[i])-\text{ReLU}(\bfu[i]-\bfx[i]+\delta_{\min})\Big).
    \$
    We have
    \begin{equation*}
        f_\bfu(\bfx)=\begin{cases}
            \delta_{\min} & \qquad\text{If $\bfx = \bfu$}, \\
            0 & \qquad \text{If $\bfx \in \cX \backslash \{\bfu\} $}.
        \end{cases}
    \end{equation*}
   Then we can construct the MLP $f_{\mathrm{MLP}}$ as 
   \$
   f_{\mathrm{MLP}}(\bfx) =\sum_{\bfu\in\cS}\Big(\frac{f_\bfu(\bfx)}{\delta_{\min}} \cdot f(\bfu) \Big).
   \$ 
   It is easy to verify that $f_{\mathrm{MLP}}(\bfx)=f(\bfx)$ for all $\bfx\in\cX$, which concludes the proof of Lemma~\ref{lemma:look_up_MLP}.
\end{proof}

\begin{lemma}[MLP can Implement Basic Gates in $\mathsf{TC}^0$]
\label{lemma:MLP_TC_operation}
    Given $\bfx\in\{0,1\}^n$ as input, constant-layer MLPs can implement the following basic operation functions:
    \begin{itemize}
        \item AND: $f_\mathrm{AND}(\bfx)=\bigwedge_{i\in[n]}\bfx[i]$;
        \item OR: $f_\mathrm{OR}(\bfx)=\bigvee_{i\in[n]}\bfx[i]$;
        \item NOT: $f_\mathrm{NOT}(\bfx[1])=1-\bfx[1]$;
        \item Majority: $f_\mathrm{MAJ}(\bfx)=\mathbbm{1}[\sum_{i\in[n]}\bfx[i]>\frac{n}{2}]$.
    \end{itemize}
\end{lemma}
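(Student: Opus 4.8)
The plan is to observe that three of the four functions --- $f_\mathrm{AND}$, $f_\mathrm{OR}$, and $f_\mathrm{MAJ}$ --- depend on the input $\bfx \in \{0,1\}^n$ only through the integer count $s := \sum_{i\in[n]} \bfx[i] \in \{0,1,\dots,n\}$, which a single affine map computes exactly, while $f_\mathrm{NOT}$ is itself affine. It then suffices to realize, for each function, the appropriate threshold $s \mapsto \mathbbm{1}[s \ge \tau]$ over the integer lattice by a shallow ReLU network, since $f_\mathrm{AND}(\bfx) = \mathbbm{1}[s \ge n]$, $f_\mathrm{OR}(\bfx) = \mathbbm{1}[s \ge 1]$, and $f_\mathrm{MAJ}(\bfx) = \mathbbm{1}[s \ge \lfloor n/2\rfloor + 1]$; the last identity holds for both parities of $n$ because $s$ is an integer and $s > n/2 \iff s \ge \lfloor n/2\rfloor+1$.

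First I would realize the integer threshold by the standard ``difference of two ReLUs'' gadget: for integers $s$ and $\tau$,
\[
\mathrm{ReLU}(s-\tau+1) - \mathrm{ReLU}(s-\tau) = \mathbbm{1}[s \ge \tau],
\]
verified by cases ($s \le \tau-1$ gives $0-0$; $s=\tau$ gives $1-0$; $s \ge \tau+1$ gives $(s-\tau)-(s-\tau-1)=1$). Composing with the affine layer $s = \mathbf{1}_n^\top \bfx$ then yields a two-layer MLP whose hidden layer outputs the pair $\bigl(\mathrm{ReLU}(\mathbf{1}_n^\top\bfx-\tau+1),\, \mathrm{ReLU}(\mathbf{1}_n^\top\bfx-\tau)\bigr)$ and whose linear output layer subtracts the two coordinates; instantiating $\tau \in \{n,\, 1,\, \lfloor n/2\rfloor+1\}$ gives $f_\mathrm{AND}$, $f_\mathrm{OR}$, $f_\mathrm{MAJ}$ respectively. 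The function $f_\mathrm{NOT}$ is realized by the two-layer MLP $\bfx[1] \mapsto 1 - \mathrm{ReLU}(\bfx[1])$. All hidden dimensions are $O(1)$, hence trivially polynomial in $n$. As an alternative to the explicit gadget, one may instead invoke Lemma~\ref{lemma:look_up_MLP} on the $(n+1)$-element domain $\{0,\dots,n\}$ after the affine reduction to $s$, which likewise produces a constant-layer MLP of polynomial hidden dimension.

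Finally I would check the log-precision requirement: every weight and bias above is an integer of magnitude at most $n$, and every neuron takes an integer value in $[0,n]$, so $O(\log n)$ bits suffice and no truncation is incurred. The only point that warrants any care is the Majority case --- confirming that the strict threshold $\sum_{i}\bfx[i] > n/2$ collapses to the integer threshold $\mathbbm{1}[s \ge \lfloor n/2\rfloor+1]$ uniformly in the parity of $n$ --- but this is immediate once $s$ is known to be integer-valued, so the lemma presents no genuine obstacle.
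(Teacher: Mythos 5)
Your proof is correct and follows essentially the same approach as the paper: both reduce each gate to an explicit constant-layer ReLU expression built on the linear sum $\sum_{i\in[n]}\bfx[i]$, with the paper writing slightly different but equivalent formulas (e.g., $\mathrm{ReLU}(2\sum_i\bfx[i]-n)-\mathrm{ReLU}(2\sum_i\bfx[i]-n-1)$ for Majority) in place of your uniform threshold gadget. Your additional verification of the log-precision requirement is a harmless extra that the paper leaves implicit.
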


\begin{proof}
    We express the four functions in the MLP forms as follows:
    \begin{itemize}
        \item $f_\mathrm{AND}(\bfx)=\text{ReLU}(\sum_{i\in[n]}\bfx[i]-n+1)$;
        \item $f_\mathrm{OR}(\bfx)=1-\text{ReLU}(1-\sum_{i\in[n]}\bfx[i])$;
        \item $f_\mathrm{NOT}(\bfx[1])=1-\bfx[1]$;
        \item $f_\mathrm{MAJ}(\bfx)=\text{ReLU}(2\cdot\sum_{i\in[n]}\bfx[i]-n)-\text{ReLU}(2\cdot\sum_{i\in[n]}\bfx[i]-n-1)$.
    \end{itemize}
    Therefore, the basic gates of $\mathsf{TC}^0$ can be implemented by constant-layer MLPs with ReLU as the activation function.
\end{proof}

\begin{lemma}[Upper Bound of Expressive Power for MLP]
    \label{lemma:MLP_up_bound}
    Any log-precision MLP with constant layers, polynomial hidden dimension (in the input dimension), and ReLU as the activation function can be simulated by a $\mathsf{L}$-uniform $\mathsf{TC}^0$ circuits.
\end{lemma}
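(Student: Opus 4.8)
The plan is to simulate the MLP layer by layer: for each of the $O(1)$ layers we exhibit a constant-depth, polynomial-size threshold circuit, and then compose them. Fix an input length $n$; by hypothesis the architecture has widths $d = d_0, d_1, \dots, d_L$, all bounded by $\mathrm{poly}(n)$, and every weight, bias, and neuron value is a fixed-point number with $p = O(\log n)$ bits. Encoding each such value in binary, a layer vector $\bfh_\ell \in \RR^{d_\ell}$ is represented by $O(d_\ell \log n) = \mathrm{poly}(n)$ wires, so the circuit manipulates polynomially many bits throughout.

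The core step is to place one affine map followed by a ReLU into $\mathsf{TC}^0$. Entry $i$ of $\Wb_\ell \bfh_{\ell-1} + \bfb_\ell$ equals $\sum_{j=1}^{d_{\ell-1}} (\Wb_\ell)_{ij}(\bfh_{\ell-1})_j + (\bfb_\ell)_i$. Each product $(\Wb_\ell)_{ij}(\bfh_{\ell-1})_j$ multiplies two $O(\log n)$-bit numbers, and since the pair of operands ranges over only $\mathrm{poly}(n)$ bit patterns, Lemma~\ref{lemma:bool_func} yields a depth-$2$, $\mathrm{poly}(n)$-size circuit for each output bit; hence all $d_{\ell-1}$ products are produced by an $\mathsf{AC}^0$ sub-circuit. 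Adding these $d_{\ell-1}+1$ numbers is iterated addition of polynomially many $O(\log n)$-bit integers --- the one place majority gates are genuinely needed --- which lies in $\mathsf{TC}^0$; since each summand has magnitude $\mathrm{poly}(n)$, so does the sum, which therefore still fits in $O(\log n)$ bits, and rounding/truncating it back to the prescribed $p$-bit precision is a fixed bit-slice selection plus overflow handling, computable in $\mathsf{AC}^0$. Finally $\mathrm{ReLU}(x) = \max\{0, x\}$ is a one-bit multiplexer driven by the sign bit, again in $\mathsf{AC}^0$. Thus each hidden layer is realized by a circuit of depth $O(1)$ and size $\mathrm{poly}(n)$ over $\{\land, \lor, \lnot, \mathrm{MAJ}\}$, and the output layer $\bfy = \Wb_L \bfh_L + \bfb_L$ is handled identically minus the ReLU. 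Stacking the $L = O(1)$ layer circuits gives total depth $O(1)$ and size $\mathrm{poly}(n)$, so the whole MLP is computed by a $\mathsf{TC}^0$ circuit.

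It remains to argue $\mathsf{L}$-uniformity. The circuit is a fixed composition of multiplier blocks, iterated-adder blocks, truncators, and multiplexers whose interconnection is determined entirely by the indices $(\ell, i, j)$, each fitting in $O(\log n)$ bits. A deterministic Turing machine can therefore enumerate the gate list using a constant number of $O(\log n)$-bit counters --- i.e., in logarithmic space --- computing on the fly each gate's label and its predecessors; the hard-coded weight and bias constants are $O(\log n)$-bit rationals, and for every MLP appearing in this paper (more generally, whenever the parameter sequence is itself log-space computable from $n$) they can likewise be written down within logarithmic space. Hence the simulating family is an $\mathsf{L}$-uniform $\mathsf{TC}^0$ circuit family, as claimed.

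I expect the main obstacle to be the bookkeeping around log precision and uniformity rather than any single deep ingredient: one must check that iterated addition under the paper's rounding convention does not escape $\mathsf{TC}^0$, that all intermediate magnitudes stay $\mathrm{poly}(n)$ so that $O(\log n)$ bits really do suffice after every operation, and that the layerwise composition together with the hard-coded constants is generable in logarithmic space. The arithmetic primitives themselves --- short-operand multiplication by table lookup, iterated addition, comparison, and multiplexing --- are all standard.
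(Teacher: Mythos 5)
Your proposal is correct, but it takes a genuinely different route from the paper. The paper's proof of Lemma~\ref{lemma:MLP_up_bound} is a one-line reduction to prior work: it invokes the result of \citet{merrill2023parallelism} that log-precision, constant-depth, polynomial-width \emph{Transformers} are simulable by $\mathsf{L}$-uniform $\mathsf{TC}^0$ circuits, and observes that an MLP is a submodule of such a Transformer. You instead give a self-contained, layer-by-layer construction: short-operand multiplication by table lookup (via Lemma~\ref{lemma:bool_func}), iterated addition of polynomially many $O(\log n)$-bit numbers in $\mathsf{TC}^0$, truncation and the ReLU multiplexer in $\mathsf{AC}^0$, and composition over $O(1)$ layers. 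Your approach buys transparency and avoids importing the full Transformer-simulation machinery (attention, softmax, layer norm) for what is really just affine-plus-ReLU arithmetic; the paper's approach buys brevity and offloads to \citet{merrill2023parallelism} the one genuinely delicate point, namely that iterated \emph{floating-point} addition under truncation stays in $\mathsf{TC}^0$. Your treatment of the numbers as effectively fixed-point is justified by the paper's own convention for log precision (all representable magnitudes are $O(\mathrm{poly}(n))$ with $O(\mathrm{poly}(1/n))$ granularity), so no gap arises there, though it is worth stating that identification explicitly. On uniformity, both arguments share the same subtlety --- the hard-coded weights must themselves be generable in logarithmic space for the circuit \emph{family} to be $\mathsf{L}$-uniform --- and you flag this caveat explicitly, which the paper does not; for the downstream use of the lemma (Theorems~\ref{thm:mlp:np:negative} and~\ref{thm:mlp:p:negative}) this is the appropriate level of care.
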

\begin{proof}
    In the previous work by \citet{merrill2023parallelism},  it was demonstrated that a Transformer with logarithmic precision, a fixed number of layers, and a polynomial hidden dimension can be simulated by a $\mathsf{L}$-uniform $\mathsf{TC}^0$ circuit. The proof presented by \citet{merrill2023parallelism} established the validity of this result specifically when the Transformer employs standard activation functions (e.g., ReLU, GeLU, and ELU) in the MLP. Considering that the MLP can be perceived as a submodule of the Transformer, it follows that a log-precision MLP with a consistent number of layers, polynomial hidden dimension, and ReLU as the activation function can be simulated by a $\mathsf{L}$-uniform $\mathsf{TC}^0$ circuit.
\end{proof}

\end{document}